\documentclass{article}

\usepackage[main, final]{neurips_2026}
\renewcommand{\acksection}{\section*{Acknowledgments}}

\usepackage[utf8]{inputenc}
\usepackage[T1]{fontenc}
\usepackage{hyperref}
\usepackage{url}
\usepackage{booktabs}
\usepackage{longtable}
\usepackage{amsfonts}
\usepackage{nicefrac}
\usepackage{microtype}
\usepackage{xcolor}
\usepackage{amsmath}
\usepackage{multirow}
\usepackage{subcaption}
\usepackage{algorithm}
\usepackage{algorithmic}
\usepackage{graphicx}
\usepackage{calc}
\usepackage{amsthm}
\usepackage{amssymb}
\usepackage{wrapfig}
\usepackage{float}
\newtheorem{proposition}{Proposition}
\newtheorem{hypothesis}{Hypothesis}
\newtheorem{corollary}{Corollary}
\DeclareMathOperator*{\argmin}{arg\,min}

\title{Platonic Task Arithmetic}

\author{%
  Junghwan Park \\
  TelePIX \\
  \texttt{junghwan@telepix.net} \\
  \And
  Woojin Cho \\
  TelePIX \\
  \texttt{woojin@telepix.net} \\
}

\begin{document}

\maketitle

\begin{abstract}
Distinct pre-trained models specialized for the same task converge to closely similar behavior, yet the parameter updates that produce it share no common coordinate system. Weight-space task arithmetic is therefore confined to a single model, and transporting an update between models requires a structural correspondence. Drawing on Plato's allegory of the cave, we hypothesize that these model-specific updates are shadows cast by one shared, model-agnostic object, the platonic task vector. To make it operational across models of different architectures, we introduce Universal Task Descriptors, matrices whose shape is independent of architecture and embedding dimension, which record a task's functional effect and admit addition and negation as ordinary matrix operations, and we transfer a descriptor into a target in two ways. A single least-squares solve returns a linear operator folded into the target's last layer, and a bank of such operators, one per source and task, realizes any composition as a signed sum of its entries. Alternatively, a low-rank adapter of the target's encoder is trained on the same objective at the price of one optimization per edit. Despite a model-specific residual comparable in norm to the shared component, transfer from another model retains $74$ to $80$ percent of the gain the target's own descriptors attain. Experiments across six model families, eight tasks and audio--text models confirm both realizations. Code is available at \url{https://github.com/jhpark-ai/Platonic-Task-Arithmetic}.
\end{abstract}

\section{Introduction}
\label{sec:intro}

Task arithmetic~\citep{ilharco2022editing} turned fine-tuning into algebra. A task vector $\delta = \theta_{\text{ft}} - \theta_{\text{pre}}$ records what fine-tuning changed, and tasks are then composed by adding such vectors, removed by subtracting them and combined by analogy, all without further training. The paradigm has since acquired better composition rules~\citep{yang2023adamerging,ortiz2023task,yu2024language,sun2025task,gargiulo2025task}, parameter-efficient variants~\citep{chitale2023task,huang2023lorahub} and language-model instantiations~\citep{zhou2024metagpt}, but it has kept one restriction from the start, that every task vector share an architecture and an initialization with every other, which recent transport methods~\citep{rinaldi2026gradient,rinaldi2026transporting} relax only between models that correspond structurally. Knowledge learned on one model therefore cannot be combined with knowledge learned on another built differently, a restriction that bites harder as vision--language models~\mbox{\citep{radford2021learning,zhai2023sigmoid,ilharco2021openclip}} proliferate across architectures, training data and objectives.

We argue that the restriction is a property of the coordinates rather than of the knowledge. Fine-tuning heterogeneous models on the same task changes their cross-modal affinity, by which we mean the cosine similarity between an encoded image and an encoded text prompt, in a consistent and task-specific way even though the weight updates share nothing (Section~\ref{sec:motivation}). This echoes the Platonic Representation Hypothesis (PRH)~\citep{huh2024platonic}, that large models converge toward one representation up to model-specific transformations~\citep{maniparambil2024vision,tjandrasuwita2025understanding}; PRH concerns what models represent, and we ask whether the same convergence governs what fine-tuning changes, finding a clear though imperfect version of it. We therefore posit a shared, model-agnostic space in which fine-tuning on a task is one object whatever model performs it, the platonic task vector, of which each model's task vector is a shadow cast into that model's coordinates, so that task arithmetic across models is arithmetic in that space. No model exposes the space directly, and we reach it through the Universal Task Descriptor (UTD), a matrix recording how fine-tuning shifts the affinity between probe inputs and semantic prompts. A UTD has $N$ probe inputs by $M$ prompts whatever the embedding dimension, so descriptors measured on different models can be added, negated and combined directly.

A descriptor states a functional change that a target must then be edited to exhibit, and we pose the transfer as a single objective, namely finding the edit of the target whose induced descriptor matches the prescribed one on the task's probes. The edit turns out to need very little. The descriptor factorizes as $\operatorname{UTD} = \Delta F\, G^\top$, with $\Delta F$ the shifts fine-tuning induced in the probe images' embeddings and $G$ the prompt landmarks, so what has to be reproduced is one map on embeddings, of which the descriptor is the projection onto $\operatorname{span}(G)$. Modeling that map as linear, $f \mapsto (I + \alpha A) f$ for an image embedding $f$, a matrix $A$ and a strength $\alpha$ fixed in advance rather than tuned per edit, turns the transfer into a ridge-regularized least-squares problem in target-side quantities alone, whose solution folds into the target's own last layer as a weight edit that leaves architecture and parameter count untouched. Composition then moves to operator space, where adding task displacements is adding matrices, the linear solution map adds no error of its own on a shared probe set, and operators estimated on different tasks interact in an exact form that we turn into a margin condition and test on every evaluation image. A target can therefore hold a bank of operators, one entry per source and task, after which any composition at any strength is a signed sum of bank entries. The same objective can also be minimized directly, by training a low-rank adapter~\citep{hu2022lora} of the target's vision encoder, which is not confined to one layer and fits the terms of a composition jointly but pays for both with one optimization per edit and no reusable bank. The closed form is our main route and the adapter measures what a richer edit buys, and in either case what a task carries to a new target is its descriptor together with its unlabeled probe images and class names, never its weights or a label. A target therefore acquires a task without seeing one of its labels, and because the descriptor prescribes only the effect of the edit on the probes, any target that admits the edit can receive it, whatever its architecture, embedding dimension or training corpus.

Across six model families, eight tasks and $3{,}360$ addition cells, a target absorbing two tasks from two other models through the closed form lifts both by $18.1$ and $11.6$ points for about two points of CIFAR-100, retaining $80$ and $74$ percent of what its own descriptors deliver although descriptors of one task from different models agree only at cosine $0.47$, and negating one from another model removes $35.2$ points for $4.5$ of the same control. The adapter, fitted per edit to the same descriptors, is more accurate still and leaves ImageNet within a fraction of a point of zero-shot, and both routes reach a remote-sensing dataset whose images and class names enter no fit and carry unchanged to audio--text models. Our contributions are:

\begin{itemize}
  \item We posit the platonic task vector, of which every model's task vector is a shadow, make it observable through UTDs, and test the hypothesis linking the two on three predictions it makes.
  \item We pose transfer as matching a descriptor on the target and give two realizations, a closed form in which the edit is one operator folded into the target's last layer and composition is a sum of operators, exact on a shared probe set and with an exact departure and margin condition across probe sets, and a low-rank adapter fitted to the same objective.
  \item We evaluate both on addition and negation across eight tasks, six source families and six targets, the closed form on all $3{,}360$ addition and $240$ negation cells and the adapter on seeded subsets of them, as well as on a dataset that enters no fit and on audio--text models, against weight-space composition, weight transport, gradient ascent, distillation and supervised tuning.
\end{itemize}

Prior work aligns pre-trained models, through similarities to shared anchors~\citep{moschella2022relative,maiorca2023latent} or learned connectors between layers~\citep{lenc2015understanding,bansal2021revisiting}, rather than measuring what fine-tuning changes, and merges heterogeneous architectures~\citep{du2025adamms,cui2026transport} only for models trained on one task. The adapter route is an instance of relational distillation~\citep{park2019relational,tian2019contrastive} that fits a change of affinities rather than a teacher's output~\citep{hinton2015distilling} (Appendix~\ref{sec:related}).

\section{Background and Motivation}
\label{sec:motivation}
\label{sec:prelim-weight}

Given pre-trained parameters $\theta_{\text{pre}}$ and parameters $\theta_{\text{ft}}$ fine-tuned on a task $\tau$, the weight-space task vector is $\delta_\tau := \theta_{\text{ft}} - \theta_{\text{pre}}$. Task arithmetic composes such vectors by addition, $\theta_{\text{pre}} + \sum_i \delta_{\tau_i}$, by negation, $\theta_{\text{pre}} - \lambda \delta_\tau$ with strength $\lambda \geq 0$, and by analogy, $\delta_{D_2,S_2} = \delta_{D_1,S_2} - \delta_{D_1,S_1} + \delta_{D_2,S_1}$ with $D_1, D_2$ two domains and $S_1, S_2$ two subtasks. Each is defined only when every $\delta$ originates from the same $\theta_{\text{pre}}$, which confines all tasks to one base model.

\paragraph{Fine-tuning is far more consistent across models than weights suggest.}
\label{sec:functional-invariance}
If a task specifies a pattern of image--text alignment that does not depend on the model learning it, fine-tuning should move image--prompt affinities toward a shared destination from model-specific starting points, and models should agree on the induced shift up to a residual absorbing initialization, optimization and finite-data noise. We test this for $K = 6$ models and eight tasks with the descriptor of Section~\ref{sec:method-utd}, comparing descriptors pairwise as flattened vectors so that both panels of Figure~\ref{fig:utd-verification} share one scale. Different models fine-tuned on the same task agree at an off-diagonal mean cosine of $0.473 \pm 0.066$, whereas one model fine-tuned on different tasks agrees at $0.227 \pm 0.169$. Same-task agreement across models is thus roughly twice cross-task agreement within a model, and the cross-task separation rules out a generic alignment unrelated to the task. The cross-task panel is moreover structured rather than uniformly low, since the pair that agrees most is EuroSAT with RESISC45, two remote-sensing tasks, at $0.573$, the three digit-like tasks agree with each other at between $0.320$ and $0.442$, and pairs sharing little sit at or below zero, as Cars and SUN397 do at $-0.149$. The first of these lies above the cross-model mean, so the separation holds on average rather than for every pair. A descriptor therefore records which task was learned, not merely that some task was.

\begin{figure}[t]
    \centering
    \captionsetup[sub]{font=footnotesize}
    \newlength{\hmapwidth}
    \setlength{\hmapwidth}{0.27\textwidth}
    \newlength{\hmapcol}
    \setlength{\hmapcol}{0.32\textwidth}
    \newlength{\cbarcol}
    \setlength{\cbarcol}{0.05\textwidth}
    \newlength{\hmapheight}
    \settoheight{\hmapheight}{\includegraphics[width=\hmapwidth]{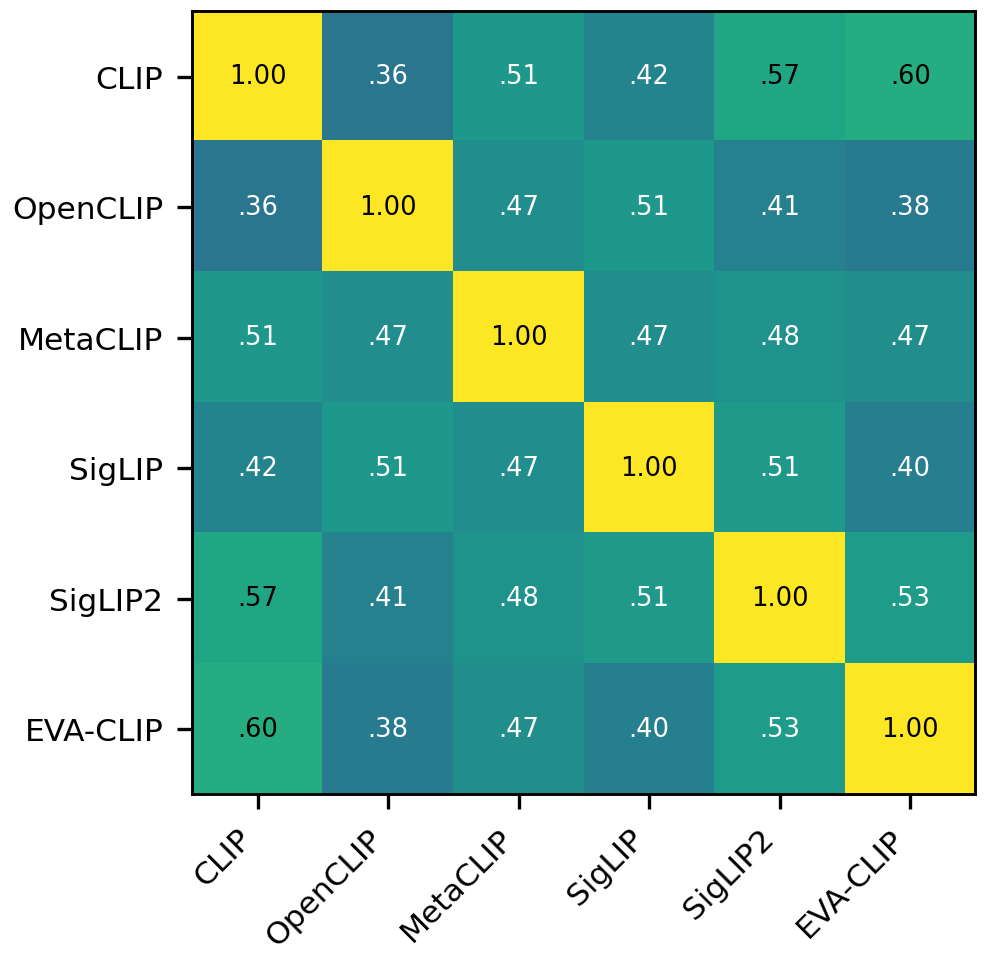}}
    \begin{minipage}[c]{0.69\textwidth}
        \centering
        \begin{minipage}[t]{\hmapcol}
            \centering
            \includegraphics[width=\hmapwidth]{figure/utd_cross_model.png}
            \subcaption{Across models, fixed task.}
            \label{fig:utd-cross-model}
        \end{minipage}%
        \vspace{-5pt}%
        \begin{minipage}[t]{\hmapcol}
            \centering
            \includegraphics[width=\hmapwidth]{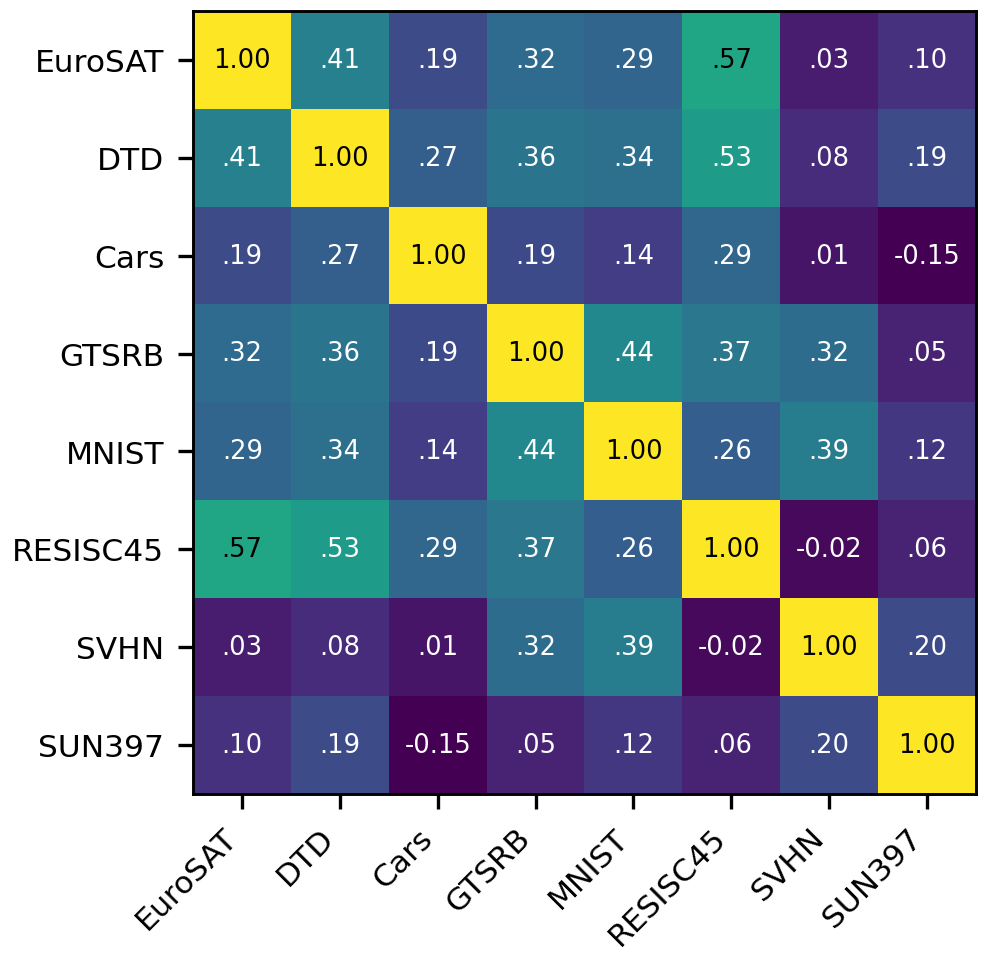}
            \subcaption{Across tasks, fixed model.}
            \label{fig:utd-cross-task}
        \end{minipage}%
        \begin{minipage}[t]{\cbarcol}
            \raggedright
            \includegraphics[height=\hmapheight, keepaspectratio=true]{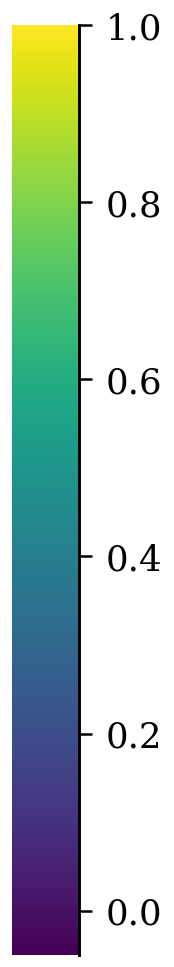}
            \subcaption*{~}
        \end{minipage}
    \end{minipage}%
    \hfill
    \begin{minipage}[c]{0.30\textwidth}
    \caption{Pairwise cosine similarity between fine-tuning-induced descriptors, flattened to a common $N \times M$ shape. (a) Across models for a fixed task, a $K \times K$ matrix averaged over the eight tasks. (b) Across tasks for a fixed model, an $8 \times 8$ matrix averaged over the six families.}
    \label{fig:utd-verification}
    \end{minipage}
\end{figure}

\section{Universal Task Descriptors and the Platonic Hypothesis}
\label{sec:method}

\subsection{The Platonic Task Vector}
\label{sec:method-platonic}

Consider $K$ vision--language models $\mathcal{M}_1, \dots, \mathcal{M}_K$, each with a vision encoder $f^{(k)}$ and a text encoder $g^{(k)}$ producing unit-norm embeddings, so that the affinity $\langle f^{(k)}(x), g^{(k)}(p) \rangle$ between an image and a prompt is well defined; we adapt only $f^{(k)}$ and hold $g^{(k)}$ fixed. An operation combines $J$ task terms indexed by $\ell$, and a source assignment $\sigma$ records which model supplies each term. Fine-tuning $\mathcal{M}_k$ on task $\tau$ yields a weight change $\delta^{(k)}_\tau \in \mathbb{R}^{P^{(k)}}$, with $P^{(k)}$ the parameter count of $\mathcal{M}_k$, and since the spaces $\mathbb{R}^{P^{(1)}}, \dots, \mathbb{R}^{P^{(K)}}$ admit no canonical correspondence these changes can be neither compared nor added across models.

The regularity of Section~\ref{sec:functional-invariance} is what a single model-agnostic object beneath the per-model changes produces, and we posit it. The platonic task vector of task $\tau$ is a task-displacement map $\Pi_\tau : \mathcal{X} \to \mathcal{V}$, defined without reference to any model, where $\mathcal{X}$ is the space of probe inputs and $(\mathcal{V}, \langle \cdot, \cdot \rangle)$ a real inner-product space shared by all models, and each $\delta^{(k)}_\tau$ is the shadow $\Pi_\tau$ casts into the coordinates of model $k$. Platonic task arithmetic composes tasks by adding their platonic task vectors and removes one by subtracting it, whichever models learned them, and since no model exposes $\mathcal{V}$ directly it is carried out through what every model can observe. In the terms of the allegory, each model sees a task only as the shadow cast in its own coordinates, and the descriptor below is a shadow that takes one shape on every model.

\subsection{Universal Task Descriptors}
\label{sec:method-utd}

Rather than translating between model-specific projections of $\Pi_\tau$, we observe it through a quantity that bypasses parameters altogether. The text encoder already serves as a measurement apparatus~\citep{moschella2022relative,maiorca2023latent}, so a descriptor built from differences of affinities before and after fine-tuning is measurable on any model. A probe set belongs to one task and consists of $N$ images $\{x_i\}_{i=1}^{N}$ and $M$ semantic prompts $\{p_j\}_{j=1}^{M}$, fixed once and reused by every model that measures or receives that task's descriptor, each prompt being the task's template applied to one of its class names. Per-image labels enter neither the choice of probes nor the objective of Section~\ref{sec:method-operator}. For a model $\mathcal{M}_k$ fine-tuned on task $\tau$, the Universal Task Descriptor is the matrix $\operatorname{UTD}^{(k)}_\tau \in \mathbb{R}^{N \times M}$ with entries
\begin{equation}
    \bigl[ \operatorname{UTD}^{(k)}_\tau \bigr]_{ij}
    \;=\;
    \bigl\langle f^{(k)}_{\text{ft}}(x_i),\, g^{(k)}(p_j) \bigr\rangle
    \,-\,
    \bigl\langle f^{(k)}_{\text{pre}}(x_i),\, g^{(k)}(p_j) \bigr\rangle .
    \label{eq:utd-def}
\end{equation}
Two properties make UTDs comparable across models. Their shape $N \times M$ is independent of embedding dimension and architecture, and each entry is the change of a cosine rather than a coordinate in an embedding space, so it refers to no basis, and differencing removes each model's own pre-trained affinities and keeps only what fine-tuning changed. Parenthesized superscripts index models and subscripts tasks, so $\operatorname{UTD}^{(\sigma(\ell))}_{\tau_\ell}$ is task term $\ell$ on its assigned model, and a composed descriptor, belonging to no single source, is written $U$ with a role label such as $U_{\text{add}}$.

\subsection{The Cross-Model Hypothesis}
\label{sec:method-connection}

The UTD is observable, whereas $\Pi_\tau$ is not. Associating each prompt $p$ with a shared semantic vector $\psi(p) \in \mathcal{V}$, define the measurement map $\mathcal{A}(\Pi) = [\langle \Pi(x_i), \psi(p_j)\rangle]_{ij} \in \mathbb{R}^{N \times M}$, the affinity shift that $\Pi$ would produce if it could be read directly. The paper rests on one hypothesis about it.
\begin{hypothesis}[Cross-model consistency]
\label{hyp:consistency}
For every model $k$ and task $\tau$,
\begin{align}
    \operatorname{UTD}^{(k)}_\tau &\;=\; \mathcal{A}(\Pi_\tau) \,+\, E^{(k)}_\tau,
    \label{eq:central-assumption} \\[-2pt]
    \bigl\| E^{(k)}_\tau \bigr\|_F &\;\le\; \eta \, \bigl\| \mathcal{A}(\Pi_\tau) \bigr\|_F ,
    \label{eq:residual-bound}
\end{align}
with $E^{(k)}_\tau$ a model-specific residual and $\eta \ge 0$ a constant uniform over models and tasks.
\end{hypothesis}

Appendix~\ref{app:theory} recovers the hypothesis to first order from the near-isometric factorization behind PRH, with a residual that grows with each model's departure from isometry. It makes three predictions that could each fail. Descriptors of one task from different models should agree more than those of different tasks from one model, as Figure~\ref{fig:utd-verification} confirms at $0.473$ against $0.227$. A descriptor from another model should recover much of what the target's own delivers, and less the larger its residual after the solve, as Section~\ref{sec:exp-addition} confirms at $80$ and $74$ percent retained and Appendix~\ref{app:transport-error} at rank correlation $-0.62$ over $150$ combinations of a source, a target and a task. And a descriptor of equal norm without the shared component should transfer nothing, as Random-UTD confirms. The residual is not small, since its dispersion across models implies $\eta \ge 0.33$ and the pairwise agreement puts it at a norm comparable to the shared component (Appendix~\ref{app:error}), and the experiments measure what transfer and composition retain at that distance.

\section{Cross-Model Task Arithmetic and Transfer to a Target}
\label{sec:method-arithmetic}

\subsection{Addition and Negation in UTD Space}
\label{sec:method-operations}

\paragraph{Addition.}
Given $J$ task terms with source assignment $\sigma$,
\begin{equation}
    U_{\text{add}}
    \;=\;
    \sum_{\ell=1}^{J} \operatorname{UTD}^{(\sigma(\ell))}_{\tau_\ell}
    \;=\;
    \mathcal{A}\Bigl( \sum_{\ell=1}^{J} \Pi_{\tau_\ell} \Bigr)
    \,+\, \sum_{\ell=1}^{J} E^{(\sigma(\ell))}_{\tau_\ell},
    \label{eq:arith-add}
\end{equation}
where the source models may all differ from one another and from the target. A sum of descriptors measured on different models is thus the measurement of the composed platonic task vector up to the terms' residuals, whose norm is at most $\eta \sum_{\ell} \| \mathcal{A}(\Pi_{\tau_\ell}) \|_F$. Matrices of different shapes have no sum, so this is a statement about terms read on one probe set, and neither realization below forms $U_{\text{add}}$ (Section~\ref{sec:method-match}).

\paragraph{Negation.}
Sign-flipping a descriptor with strength $\lambda \geq 0$ gives $U_{\text{neg}} = -\lambda \operatorname{UTD}^{(k)}_{\tau} = \mathcal{A}(-\lambda \Pi_\tau) - \lambda E^{(k)}_\tau$. Two properties follow from the form alone. The only descriptor entering the transfer is the removed task's, with no control data and no preservation term, and what is negated is the change fine-tuning induced rather than an absolute output configuration, so the target's pre-trained behavior is untouched by construction, which is the structural reason for the preservation observed in Section~\ref{sec:exp-negation}.

\subsection{Transfer as Descriptor Matching}
\label{sec:method-match}

An operation prescribes, for each of its $J$ terms, a task $\tau_\ell$, the model $\sigma(\ell)$ that supplies its descriptor and a signed weight $c_\ell$, equal to $1$ for every term of an addition and to $-\lambda$ for a negation. Transferring it into a target means finding an edit $\mathcal{E}$ of the target's vision encoder whose induced descriptor matches the prescribed one on every term's own task. Writing $\operatorname{UTD}_{\tau}[\mathcal{E}]$ for the descriptor that $\mathcal{E}$ induces on the target, computed as in Equation~\eqref{eq:utd-def} with the edited and the unedited target in place of the fine-tuned and the pre-trained model and read on the $N_\tau$ probes and $M_\tau$ prompts of task $\tau$, the objective is
\begin{equation}
    \mathcal{L}(\mathcal{E})
    \;=\;
    \sum_{\ell=1}^{J} \frac{1}{M_{\tau_\ell}}
    \Bigl\| \operatorname{UTD}_{\tau_\ell}[\mathcal{E}] - c_\ell \operatorname{UTD}^{(\sigma(\ell))}_{\tau_\ell} \Bigr\|_F^2 ,
    \label{eq:match}
\end{equation}
which weighs every probe image equally and never compares matrices of different shapes. The two realizations below differ only in which edits they admit and in how they minimize Equation~\eqref{eq:match}.

\subsection{The Descriptor Factorizes, and Transfer Is Linear}
\label{sec:method-operator}

Fix a task and let $F_{\text{pre}} \in \mathbb{R}^{N \times d}$ stack the target's unit-norm embeddings of that task's probes, with $d$ its embedding dimension, let $G \in \mathbb{R}^{M \times d}$ hold its landmarks for that task's prompts, and let $\Delta F \in \mathbb{R}^{N \times d}$ be the change an edit makes to $F_{\text{pre}}$. Since every entry of Equation~\eqref{eq:utd-def} is an inner product, the descriptor that edit induces is
\begin{equation}
    \operatorname{UTD} \;=\; \Delta F\, G^\top ,
    \label{eq:factorization}
\end{equation}
so a descriptor observes the map on image embeddings that produces it only through its projection onto $\operatorname{span}(G)$, discarding $d - \operatorname{rank}(G)$ dimensions, and reproducing a descriptor is a strictly weaker requirement than reproducing a fine-tuned encoder.

We model the map as linear and input-independent because a linear map is the first-order term of any smooth edit, the descriptor is itself a first-order object in the small-perturbation regime of Appendix~\ref{app:theory}, and a single matrix shared by all inputs is the only form that folds into a weight. Writing $f_{\text{pre}}(x)$ for the target's pre-trained embedding, we seek
\begin{equation}
    f_{\text{new}}(x) \;=\; (I + \alpha A)\, f_{\text{pre}}(x),
    \label{eq:operator}
\end{equation}
so that $\Delta F = \alpha F_{\text{pre}} A^\top$ before renormalization, and matching a target descriptor $U$ with this shift at $\alpha = 1$, which is Equation~\eqref{eq:match} for a single term to first order in the edit and up to renormalization (Appendix~\ref{app:ablations}), becomes the ridge-regularized least-squares problem
\begin{equation}
    A^{*}(U)
    \;=\;
    \argmin_{X} \;
    \bigl\| F_{\text{pre}} X G^\top - U \bigr\|_F^2
    + \mu_b \bigl\| F_{\text{pre}} X \bigr\|_F^2
    + \mu_a \bigl\| X G^\top \bigr\|_F^2
    + \mu_a \mu_b \|X\|_F^2 ,
    \label{eq:lsq}
\end{equation}
with $A = (X^{*})^\top$ and ridge coefficients $\mu_a, \mu_b > 0$, one per factor. The solution is available in closed form from the singular value decompositions of $F_{\text{pre}}$ and $G^\top$, which depend on the target and the task alone and are reused across that task's source descriptors (Appendix~\ref{app:solver}). The descriptor pins $A$ down only as read against $\operatorname{span}(G)$, and the ridge terms settle the remaining directions by minimum norm, which is why the edit can change behavior on prompts that never entered the fit (Section~\ref{sec:exp-holdout}).

\paragraph{Absorbing the operator into the target.}
For a dual encoder whose image embedding is a projection matrix $W_p$ applied to the vision tower's pooled output $h(x)$, so that $f_{\text{pre}}(x) = W_p h(x)$, Equation~\eqref{eq:operator} is absorbed exactly by
\begin{equation}
    W_p \;\longleftarrow\; (I + \alpha A)\, W_p ,
    \label{eq:absorb}
\end{equation}
which leaves architecture, parameter count and inference cost unchanged, so the probe set is needed only while solving. CLIP, OpenCLIP and MetaCLIP admit the absorption directly, EVA-CLIP and the audio models of Section~\ref{sec:exp-audio} end in an affine layer whose weight and bias both receive the edit by left multiplication, and SigLIP and SigLIP2, whose residual pooling head leaves no single map carrying the pooled feature, receive the solution of Equation~\eqref{eq:lsq} for an update to that head's last affine, with $F_{\text{pre}}$ replaced by that layer's own input (Appendix~\ref{app:head}). The scalar $\alpha$ plays the role of the merging coefficient in weight-space task arithmetic and is fixed in advance for each operation rather than chosen per cell.

\subsection{Composing by Adding Operators}
\label{sec:method-linearity}

The closed form does not minimize Equation~\eqref{eq:match} over all terms at once. It solves each term on its own task and realizes the composition in operator space. Under Equation~\eqref{eq:operator} the effect of a task on the target is a matrix, and a composed task, whose platonic task vector is the sum of its terms', is realized by the sum of their matrices, since for the matrices $A_1$ and $A_2$ of two tasks $F_{\text{pre}}(A_1 + A_2)^\top = F_{\text{pre}} A_1^\top + F_{\text{pre}} A_2^\top$ holds for every input, not only the probes. A probe set enters only in estimating each matrix, the step the proposition concerns.

\begin{proposition}[Linearity of the solution map]
\label{prop:linearity}
At full rank the map $U \mapsto A^{*}(U)$ defined by Equation~\eqref{eq:lsq} is linear, so that for scalars $c_1, c_2$ and descriptors $U_1, U_2$,
\begin{equation}
    A^{*}(c_1 U_1 + c_2 U_2) \;=\; c_1 A^{*}(U_1) + c_2 A^{*}(U_2).
    \label{eq:solve-linear}
\end{equation}
\end{proposition}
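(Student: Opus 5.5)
The plan is to show that the objective in Equation~\eqref{eq:lsq} is a strictly convex quadratic in $X$. Its unique minimizer then solves a linear normal equation whose right-hand side is linear in $U$, which gives linearity immediately. The key observation is that the four terms combine into a product form:
\begin{equation*}
\bigl\| F_{\text{pre}} X G^\top - U \bigr\|_F^2 + \mu_b \| F_{\text{pre}} X\|_F^2 + \mu_a \|X G^\top\|_F^2 + \mu_a\mu_b \|X\|_F^2
\;=\; \operatorname{tr}\bigl( X^\top (F_{\text{pre}}^\top F_{\text{pre}} + \mu_b I) X (G^\top G + \mu_a I) \bigr) - 2\operatorname{tr}\bigl(X^\top F_{\text{pre}}^\top U G\bigr) + \|U\|_F^2 .
\end{equation*}
To verify this, I would expand $(F^\top F + \mu_b I)\otimes(G^\top G+\mu_a I)$ into its four Kronecker pieces. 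Each piece matches one of the four quadratic terms, using $\|F X G^\top\|_F^2 = \operatorname{tr}(X^\top F^\top F X G^\top G)$ and its analogues.

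Next I set the gradient in $X$ to zero, which gives the Sylvester-type normal equation
\begin{equation*}
(F_{\text{pre}}^\top F_{\text{pre}} + \mu_b I)\, X\, (G^\top G + \mu_a I) \;=\; F_{\text{pre}}^\top U G .
\end{equation*}
Both bracketed matrices are symmetric positive definite because $\mu_a,\mu_b>0$. So the quadratic form is strictly convex, and
\begin{equation*}
X^*(U) = (F_{\text{pre}}^\top F_{\text{pre}} + \mu_b I)^{-1} F_{\text{pre}}^\top\, U\, G\,(G^\top G + \mu_a I)^{-1}.
\end{equation*}
Equivalently, in the SVDs $F_{\text{pre}} = P\Sigma Q^\top$ and $G = R\Lambda S^\top$, the solution is $X^* = Q\,\bigl[\tfrac{\sigma_i\lambda_j}{(\sigma_i^2+\mu_b)(\lambda_j^2+\mu_a)}\,(P^\top U R)_{ij}\bigr]\,S^\top$, which is the closed form referred to in Appendix~\ref{app:solver}.

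The map $U\mapsto X^*(U)$ is therefore $U \mapsto L\,U\,R'$ with fixed matrices $L,R'$ that depend only on the target and the task. It is linear, and so is the transpose $A^* = (X^*)^\top$, which gives Equation~\eqref{eq:solve-linear}.

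The one point I expect to need care is the meaning of ``at full rank.'' With $\mu_a,\mu_b>0$ the inverses exist regardless of rank, so the hypothesis is not needed for the ridge problem as stated. I read it as guaranteeing a unique minimizer also in the limit $\mu_a,\mu_b\to 0$ (and possibly under truncated-SVD variants of the solver). In that regime the map reduces to $U\mapsto F_{\text{pre}}^{+} U (G^\top)^{+}$, which is again a fixed left and right multiplication and hence linear. If the solver truncates singular values by a data-independent threshold, the truncation pattern depends only on $F_{\text{pre}}$ and $G$, never on $U$, so linearity survives. I would add that remark to cover the case, while noting that a $U$-dependent choice of truncation or of $\alpha$ would break the statement.
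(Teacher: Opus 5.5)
Your route is the paper's. The minimizer of Equation~\eqref{eq:lsq} is a fixed two-sided multiplication $U \mapsto L\,U\,R'$, with $L$ and $R'$ built from the target's $F_{\text{pre}}$ and $G$ alone. It is therefore linear, and transposition preserves linearity. The only difference is how the closed form is reached: you use the normal equation, while the paper diagonalizes both factors by SVD and minimizes entry by entry. The paper also records the same plain form as Equation~\eqref{eq:closed-form-plain}.

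One step is wrong as written: the ridge coefficients sit on the wrong factors. Expanding $\operatorname{tr}\bigl(X^\top (F_{\text{pre}}^\top F_{\text{pre}} + \mu_b I) X (G^\top G + \mu_a I)\bigr)$ gives $\mu_a\|F_{\text{pre}}X\|_F^2 + \mu_b\|XG^\top\|_F^2$. Equation~\eqref{eq:lsq} instead has $\mu_b\|F_{\text{pre}}X\|_F^2 + \mu_a\|XG^\top\|_F^2$. The term matching you describe therefore fails unless $\mu_a = \mu_b$, and the paper's settings $\mu_a = 10^{-2}\,\overline{s_a^2}$ and $\mu_b = 10^{-6}\,\overline{s_b^2}$ are far from equal.

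The correct normal equation is $(F_{\text{pre}}^\top F_{\text{pre}} + \mu_a I)\,X\,(G^\top G + \mu_b I) = F_{\text{pre}}^\top U G$. This gives $X^{*} = (F_{\text{pre}}^\top F_{\text{pre}} + \mu_a I)^{-1} F_{\text{pre}}^\top U G\,(G^\top G + \mu_b I)^{-1}$ and the spectral filter $\sigma_i\lambda_j/\bigl((\sigma_i^2+\mu_a)(\lambda_j^2+\mu_b)\bigr)$. Your formula solves a different ridge problem, so it is not the closed form of Appendix~\ref{app:solver} as you claim. Linearity survives the slip, since both versions are fixed left--right multiplications of $U$.

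On ``full rank'': the paper uses it only to exclude the rank-capped variant of Appendix~\ref{app:rank}. That variant is a reduced-rank regression whose truncation depends on $U$, which is exactly the case covered by your closing caveat about $U$-dependent truncation.

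The ridge-free reading cannot be what is meant. Every task has $M < d$ prompts, so any nonzero $X$ annihilating the prompt landmarks lies in the kernel of $X \mapsto F_{\text{pre}} X G^\top$. The unregularized problem thus never has a unique minimizer. Its ridge limit $F_{\text{pre}}^{+} U (G^\top)^{+}$ is the minimum-norm solution and is linear, but that is a different statement from the one you attribute to the hypothesis.
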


The proof is immediate from the solution's form, which is linear in $U$ (Appendix~\ref{app:solver}); ``full rank'' excludes only the rank-capped variant of Appendix~\ref{app:rank}, whose truncation is nonlinear in $U$, and numerically the identity holds to a relative difference below $5 \times 10^{-14}$ in double precision (Appendix~\ref{app:linearity}).

\paragraph{Scope of the theory.}
On one probe set the operator of a signed combination of terms is exactly that combination of their operators, so the transfer adds no composition error beyond the terms' own residuals in Equation~\eqref{eq:arith-add}, and every strength sweep and negation cell is an instance. Operators estimated on different probe sets are not covered, and no assumption-free statement could cover them, since a direction one probe set cannot observe may act on another. What holds exactly is how their sum departs from each term, namely through the other terms' descriptors read with the target's own probe and prompt weights, and a prediction survives composition whenever its margin exceeds the perturbation this leakage induces (Appendix~\ref{app:interference}). On the $2{,}240$ cells of the four targets that absorb the operator, the condition certifies $47.8$ percent of evaluation images, none of which the sum moves, and predicts the observed loss per cell and task at rank correlation $0.66$.

Since a descriptor is measured and its operator solved on the probes and prompts of one task, solving once per pair of a source model and a task gives a bank, after which any expression over those tasks, any assignment of sources and any strength is a signed sum of bank entries followed by one matrix product into the target's last layer. A bank stays valid when a task is added, and each target below holds $48$ entries, one per source family and task.

\subsection{Realization by a Low-Rank Adapter}
\label{sec:method-adapter}

The objective of Equation~\eqref{eq:match} can also be minimized without restricting the edit to one layer. We attach a low-rank adapter~\citep{hu2022lora} with parameters $\Delta\theta$ to the query and value projections of every attention layer of the target's vision encoder and minimize $\mathcal{L}$ over $\Delta\theta$ by stochastic gradient descent on the probes of all $J$ terms at once, each probe image read against the prompts of its own task. Three things change relative to the closed form. The edit reaches every layer of the encoder, so every target admits it in the same way, the two SigLIP families included, and it is not limited to what a linear map on the embedding can express. The terms of a composition are fitted jointly, so the interference that the bank bounds is optimized against directly. And every edit costs one optimization, about ten minutes on a single GPU for a pair of tasks, with nothing stored that a later edit can reuse, so each strength and each assignment of sources is a separate fit. The descriptors, probes and prompts are those of the closed form, and each route is applied at the strength its construction fixes, the closed form scaling its operator by $\alpha$ and the adapter realizing the prescribed descriptor itself (Appendix~\ref{app:adapter}).

\paragraph{What a task carries to a new target.}
A bank entry is solved, and an adapter fitted, from the source's descriptor and the task's $N$ unlabeled probe images and $M$ class names; no source weight, label or source forward pass enters, and a solved entry needs no probes. The construction is therefore label-free but not data-free, whereas a weight-space task vector needs nothing on the target side but reaches only its own base model, and every route of Table~\ref{tab:axes} that crosses models also consumes target-side data. An eighth of each task's probes costs $5.7$ points (Appendix~\ref{app:ablations}), and Section~\ref{sec:exp-holdout} evaluates a dataset whose own data enter no fit.

\section{Experiments}
\label{sec:experiments}

All experiments use seed $42$ (Appendix~\ref{app:setup}); closed-form edits are scored from cached features bit-identically to the weight edit (Appendix~\ref{app:algorithms}) and adapter edits by a forward pass through the adapted encoder.

\subsection{Experimental Setup}
\label{sec:exp-setup}

\textbf{Models.} Our six families differ in architecture, corpus and objective, namely CLIP~\citep{radford2021learning}, OpenCLIP~\citep{ilharco2021openclip}, SigLIP~\citep{zhai2023sigmoid}, SigLIP2~\citep{tschannen2025siglip}, MetaCLIP~\citep{xu2023demystifying} and EVA-CLIP~\citep{sun2023eva}, all at Large scale. All six serve as sources and all six as targets, SigLIP and SigLIP2 receiving the closed form through the head update of Section~\ref{sec:method-operator} and the adapter like every other target. Appendix~\ref{app:scale} adds four targets at $512$ and $1024$ dimensions.

\textbf{Tasks.} EuroSAT~\citep{helber2019eurosat}, DTD~\citep{cimpoi2014describing}, Cars~\citep{krause20133d}, GTSRB~\citep{stallkamp2011german}, MNIST~\citep{lecun1998mnist}, RESISC45~\citep{cheng2017remote}, SVHN~\citep{netzer2011reading} and SUN397~\citep{xiao2010sun}. CIFAR-100~\citep{Krizhevsky09learningmultiple} and ImageNet~\citep{imagenet15russakovsky} are controls throughout and enter no probe set. Section~\ref{sec:exp-holdout} composes over AID~\citep{xia2017aid}, UCMerced~\citep{yang2010bag} and RESISC45, evaluating on each in turn.

\textbf{Sources and probes.} Each source is a rank-$4$ fine-tune of a vision encoder with the text encoder frozen, giving $6 \times 8 = 48$ sources. A task contributes thirty images for each class it defines, drawn uniformly at random from its training split rather than allotted class by class, so no label enters the draw and no evaluation image is a probe. Each descriptor is read on the probes and prompts of the one task it describes, between $300$ unlabeled images and $10$ prompts for a ten-class task and $11{,}910$ images and $397$ prompts for SUN397.

\textbf{Realizations.} Every closed-form result below is a signed sum of stored operators, one per source and task. The adapter is fitted once per cell, with rank $16$, AdamW and $20$ epochs over the cell's probes capped at $540$ steps (Appendix~\ref{app:adapter}). Its cost confines it to subsets fixed at seed $42$ before any cell was fitted, namely every same-model and architecture cell of Tables~\ref{tab:addition-main} and~\ref{tab:negation-main} and, in their last blocks, one draw per target and task pair for addition and per target and task for negation. Strengths are fixed for the whole paper, $\alpha = 0.5$ on the closed form's operator for addition, the full descriptor for the adapter, and $\lambda = 1$ for negation in both.

\textbf{Baselines.} Random-UTD replaces $U$ by a Gaussian matrix of the same shape and Frobenius norm, Single-A and Single-B transfer one of a pair's two descriptors alone, and Oracle has the target supply its own descriptors. Weight-space task arithmetic~\citep{ilharco2022editing} is defined only when source and target share one parameter space, so it appears in the same-model setting as the single-model reference, and weight-space transport, which writes a source's update unchanged into another family of the same architecture, is the null that the weight-transport methods GradFix~\citep{rinaldi2026gradient} and Theseus~\citep{rinaldi2026transporting} are built to beat, and in addition these four weight-space routes apply the mean of a cell's two updates at $0.5$, so each enters at $0.25$ (Appendix~\ref{app:baselines-protocols}). Gradient ascent~\citep{thudi2022unrolling} maximizes the cross-entropy of the removed task.

\subsection{Cross-Model Task Addition}
\label{sec:exp-addition}

Table~\ref{tab:addition-main} reports each regime over all $28$ task pairs, with the cross-model block covering every ordered assignment of two distinct sources to a target distinct from both. Four readings follow.

First, the composed operator does what composition should do. Each single source lifts its own task and leaves the other near zero-shot, whereas the composition lifts both, by $18.11$ and $11.63$ points over zero-shot. It is not free, since paired per cell against the single source of the same task it costs $1.47 \pm 2.78$ points on task~A and $1.25 \pm 3.14$ on task~B at $\alpha = 0.5$, which is the interference two operators estimated apart cause each other once they act on one encoder. That cost grows with strength, and the same comparison at $\alpha = 1.0$ gives $3.35$ and $3.32$.

\begin{wraptable}[26]{r}{0.525\textwidth}
\vspace{-\baselineskip}
\centering
\scriptsize
\caption{Addition in percent, each block read against its own zero-shot row. Task~A and Task~B are the two composed tasks, the closed form acts at $\alpha = 0.5$, the adapter at full strength and the weight-space routes at $0.5$ on the mean of the two updates. Random-UTD, Oracle and the last two rows cover $168$ cells.}
\label{tab:addition-main}
\setlength{\tabcolsep}{2.8pt}
\begin{tabular}{lcccc}
\toprule
Method & Task A $\uparrow$ & Task B $\uparrow$ & CIFAR-100 & ImageNet \\
\midrule
\multicolumn{5}{l}{\textit{Source and target are the same model} ($112$ cells)} \\
Zero-shot                          & $58.70$ & $57.96$ & $76.95$ & $71.50$ \\
Weight-space arithmetic            & $78.50$ & $79.05$ & $77.85$ & $71.83$ \\
\textbf{Ours, closed form}         & $83.07$ & $76.18$ & $76.23$ & $70.13$ \\
\textbf{Ours, adapter}             & $86.85$ & $85.87$ & $72.40$ & $71.49$ \\
\midrule
\multicolumn{5}{l}{\textit{Different models within one architecture} ($168$ cells)} \\
Zero-shot                          & $56.73$ & $56.01$ & $73.30$ & $70.22$ \\
Weight-space transport             & $56.71$ & $55.99$ & $73.37$ & $70.22$ \\
GradFix                            & $70.48$ & $72.86$ & $72.77$ & $70.39$ \\
Theseus                            & $57.20$ & $56.18$ & $73.52$ & $70.26$ \\
\textbf{Ours, closed form}         & $77.64$ & $71.75$ & $71.46$ & $66.72$ \\
\textbf{Ours, adapter}             & $78.71$ & $76.03$ & $72.17$ & $69.89$ \\
\midrule
\multicolumn{5}{l}{\textit{Different models, any pair of families} ($3{,}360$ cells)} \\
Zero-shot                          & $60.98$ & $60.98$ & $75.82$ & $72.33$ \\
Random-UTD                         & $32.37$ & $32.58$ & $32.97$ & $23.37$ \\
Single-A                           & $80.56$ & $59.86$ & $75.04$ & $71.01$ \\
Single-B                           & $59.45$ & $73.85$ & $74.88$ & $70.51$ \\
\textbf{Ours, closed form}         & $79.08$ & $72.60$ & $74.01$ & $69.09$ \\
Oracle                             & $83.64$ & $76.74$ & $75.05$ & $71.35$ \\
\multicolumn{5}{l}{\quad On the adapter's $168$ cells} \\
\textbf{\quad Ours, closed form}   & $79.05$ & $72.65$ & $73.89$ & $68.93$ \\
\textbf{\quad Ours, adapter}       & $79.50$ & $76.70$ & $74.00$ & $72.15$ \\
\bottomrule
\end{tabular}

\end{wraptable}
Second, $\alpha$ trades acquisition against preservation, and past $0.5$ the trade is no longer favorable, since raising it to $1.0$ moves the two tasks by only $+0.3$ and $-1.2$ points while costing $6.4$ of CIFAR-100 and $9.5$ of \mbox{ImageNet}. We fix $\alpha = 0.5$ for every composition and sweep it in Appendix~\ref{app:addition-full}, so nothing is tuned per cell.

Third, Oracle locates the error. It sums per-task operators as the cross-model rows do, so the interference above is present in it, yet with the target's own descriptors it reaches the highest accuracy on both tasks and holds CIFAR-100 within $0.77$ points of zero-shot, so neither the operator form nor per-task estimation is the binding constraint. What separates the cross-model rows from it is the residual of Hypothesis~\ref{hyp:consistency}, and they retain $80$ and $74$ percent of its gain over zero-shot on the two tasks, although a source's operator agrees with the one the target's own descriptor yields only at cosine $0.39$ (Appendix~\ref{app:transport-error}), so transfer tolerates a large residual. Random-UTD, a norm-matched but structureless descriptor, loses on both tasks and collapses the controls, so the gain comes from descriptor structure.

Fourth, the adapter, fitted per cell to the same descriptors, is the more accurate edit. On its $168$ cells in the last block it reaches $79.50$ and $76.70$ on the two tasks against $79.05$ and $72.65$ for the closed form and leaves ImageNet $0.2$ points below zero-shot where the closed form gives up $3.4$, while the two spend CIFAR-100 alike. The architecture block repeats the pattern, and the same-model block shows its limit, where the adapter reproduces the target's own fine-tuning shift at full strength and with it part of the fine-tuning's control damage (Appendix~\ref{app:ft-cost}), giving up $4.5$ points of CIFAR-100 against $0.7$. The closed form is cheaper in return, since each of its cells is a sum of two stored matrices whereas each adapter cell is a separate optimization of about ten minutes on one GPU.

\textbf{The cells are not independent.} The $3{,}360$ cells recombine $48$ source fine-tunes on six targets, so we also read them by unit. Each family used as a source gives a mean gain of $13.82$ to $15.74$ points on the task it supplies, three fine-tuning seeds move the composed row of the seed study by a standard deviation of $0.2$ points (Appendix~\ref{app:seeds}), and all $180$ combinations of a target, a source and a task on the six tasks other than Cars and SUN397 gain on average. The exceptions are confined to those two, on which the target's own descriptor gains only $1.90$ and $2.00$ points and a transferred operator gains nothing (Appendix~\ref{app:add-per-task}), and accumulating up to eight tasks on one target is paid for in preservation (Appendix~\ref{app:ksweep}).

\subsection{Cross-Model Task Negation}
\label{sec:exp-negation}

\begin{wrapfigure}[9]{r}{0.47\textwidth}
\vspace{-1.5\baselineskip}
\centering
\includegraphics[width=\linewidth]{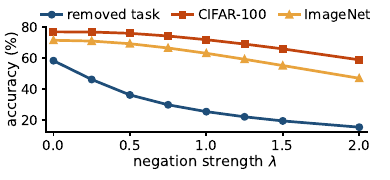}
\vspace{-14pt}
\caption{Accuracy against negation strength.}
\label{fig:negation}
\end{wrapfigure}
Negation transfers $U_{\text{neg}} = -\lambda \operatorname{UTD}^{(k)}_\tau$, and by Proposition~\ref{prop:linearity} the whole $\lambda$ sweep is one bank entry scaled. The two scalars enter only through their product, so we set $\alpha = 1$ and report $\lambda$ alone. Figure~\ref{fig:negation} sweeps $\lambda$ densely over the $160$ cross-model cells of the four targets that absorb the operator, a narrower population than Table~\ref{tab:negation-main}'s, with $\lambda = 0$ the unedited target. An edit is selective when the removed task falls much faster than the controls.

\begin{wraptable}{r}{0.53\textwidth}
\centering
\scriptsize
\caption{Negation at $\lambda = 1$ in percent, grouped as in Table~\ref{tab:addition-main}. Theseus and GradFix read ImageNet on one image per class, where zero-shot is $70.70$.}
\label{tab:negation-main}
\setlength{\tabcolsep}{3.5pt}
\begin{tabular}{lccc}
\toprule
Method & Removed task $\downarrow$ & CIFAR-100 & ImageNet \\
\midrule
\multicolumn{4}{l}{\textit{Source and target are the same model} ($32$ cells)} \\
Zero-shot                          & $58.33$ & $76.95$ & $71.50$ \\
Weight-space negation              & $28.23$ & $63.51$ & $69.95$ \\
\textbf{Ours, closed form}         & $22.07$ & $74.28$ & $67.30$ \\
\textbf{Ours, adapter}             & $26.43$ & $75.64$ & $70.21$ \\
\midrule
\multicolumn{4}{l}{\textit{Different models within one architecture} ($48$ cells)} \\
Zero-shot                          & $56.37$ & $73.30$ & $70.22$ \\
Theseus                            & $53.87$ & $71.81$ & $70.51$ \\
GradFix                            & $21.07$ & $56.96$ & $69.92$ \\
\textbf{Ours, closed form}         & $22.75$ & $68.53$ & $62.77$ \\
\textbf{Ours, adapter}             & $27.84$ & $72.71$ & $69.51$ \\
\midrule
\multicolumn{4}{l}{\textit{Different models, any pair of families} ($240$ cells)} \\
Zero-shot                          & $60.98$ & $75.82$ & $72.33$ \\
\textbf{Ours, closed form}         & $25.76$ & $71.27$ & $65.67$ \\
\multicolumn{4}{l}{\quad On the adapter's $48$ cells} \\
\textbf{\quad Ours, closed form}   & $26.44$ & $70.75$ & $64.61$ \\
\textbf{\quad Ours, adapter}       & $34.93$ & $74.06$ & $71.59$ \\
\multicolumn{4}{l}{\quad Each target as its own source, same tasks} \\
\quad Closed form                  & $26.04$ & $73.53$ & $69.10$ \\
\quad Adapter                      & $31.78$ & $74.31$ & $71.21$ \\
\bottomrule
\end{tabular}

\end{wraptable}
Along the sweep both controls stay nearly flat while the removed task drops steeply, and past $\lambda = 1$ the three curves run parallel, so the edit stops being selective and simply degrades the model. Appendix~\ref{app:negation-full} continues the sweep to near-complete erasure, where the last eight points of forgetting cost twenty-five of the control. Over the $240$ cross-model cells of Table~\ref{tab:negation-main}, $\lambda = 1$ removes $35.2$ points of the targeted task for $4.5$ of CIFAR-100 and $6.7$ of ImageNet, and $\lambda = 0.5$ trades $23.0$ points for $1.0$ and $1.7$. The last two rows of Table~\ref{tab:negation-main}, with each target as its own source on the same tasks, are the cross-model rows' reference, and a same-model source erases to the same depth for about half the control cost, the expected sign since transport error is what the cross-model setting adds. The adapter, fitted to the same descriptor on the removed task's probes, erases less and keeps more, removing $26.0$ points on its $48$ cross-model cells for $1.8$ of CIFAR-100 and $0.7$ of ImageNet where the closed form removes $34.5$ for $5.1$ and $7.7$, so the two realizations of one descriptor differ in how far the edit reaches beyond the probes rather than in whether it is selective.

\subsection{Reaching a Dataset That Enters No Fit}
\label{sec:exp-holdout}

\begin{wraptable}[18]{r}{0.53\textwidth}
\vspace{\dimexpr-\baselineskip-2.4pt\relax}
\centering
\scriptsize
\caption{Composition evaluated on a dataset that enters no fit, in percent, over three targets and by the models $k_1$ and $k_2$ supplying the two summed terms, with the closed form at $\alpha = 0.5$ and the adapter at full strength. Both terms come from one other family when $k_1 = k_2$, from the two other families when $k_1 \neq k_2$ and from the target itself in the within-model rows.}
\label{tab:holdout}
\setlength{\tabcolsep}{2.5pt}
\begin{tabular}{llcccc}
\toprule
Assignment & Route & RESISC45 & UCMerced & AID & Mean \\
\midrule
None & Zero-shot & $68.35$ & $74.29$ & $68.58$ & $70.41$ \\
\midrule
$k_1 = k_2$ & Closed form & $71.23$ & $84.09$ & $72.93$ & $76.08$ \\
 & Adapter & $71.46$ & $85.60$ & $72.83$ & $76.63$ \\
$k_1 \neq k_2$ & Closed form & $71.21$ & $84.56$ & $73.63$ & $76.47$ \\
 & Adapter & $71.42$ & $85.40$ & $73.12$ & $76.64$ \\
$k_1$ alone & Closed form & $70.69$ & $83.33$ & $72.51$ & $75.51$ \\
 & Adapter & $70.63$ & $82.18$ & $72.80$ & $75.20$ \\
Within model & Closed form & $72.86$ & $86.83$ & $74.48$ & $78.06$ \\
 & Adapter & $73.35$ & $90.79$ & $76.35$ & $80.16$ \\
\bottomrule
\end{tabular}

\end{wraptable}
An operator is a map on embeddings rather than a rule about the prompts it was fitted on, so it can act on data that entered no fit. We compose the descriptors of two remote-sensing datasets and evaluate on the third, each in turn, so neither the evaluated dataset's images nor its class names enter either fit. Apart from the within-model reference rows, the last two of Table~\ref{tab:holdout}, every term comes from a family other than the target.

With both terms from one other family, the closed form gains $5.67$ points over zero-shot and its first term alone $5.10$, so a single related operator carries most of the effect and the second adds $0.57 \pm 1.04$ points, leading on $13$ of the $18$ cells.

Two distinct families reach $76.47$, and the adapter stays within $0.6$ points of the closed form across families and leads it by $2.1$ within the target's own. An edit that merely re-scored the prompt columns it was fitted on would leave the evaluated dataset at zero-shot, so the edit changes the representation, while CIFAR-100 stays within three points (Appendix~\ref{app:holdout-full}).

\subsection{Comparison With Alternative Routes}
\label{sec:exp-comparison}

Task knowledge can also be moved in parameter space, which needs matching coordinates and is reported same-model, or transported under a structural correspondence, which needs corresponding submodules and, for GradFix, labels (Appendix~\ref{app:baselines}). On the $168$ architecture cells of Table~\ref{tab:addition-main}, where every route is defined, GradFix lifts the two tasks by $13.7$ and $16.9$ points and Theseus, applied with its released code outside the cross-scale setting it was designed for, by less than one, against $20.9$ and $15.7$ for the closed form at $\alpha = 0.5$ and $22.0$ and $20.0$ for the adapter, without a shared parameter space or a label. At $\lambda = 1$ in the single-model setting the closed form removes $36.3$ points of the targeted task for $2.7$ of CIFAR-100 and the adapter $31.9$ for $1.3$, while weight-space negation removes $30.1$ for $13.4$. On the architecture negation cells GradFix erases as deeply as the closed form and holds ImageNet, but spends $16.3$ points of CIFAR-100 against $4.8$ and $0.6$ for the two realizations, and gradient ascent erases furthest and takes both controls with it. Distilling the same sources into the target on the same probes with the adapter's own recipe, or tuning it on the probes' labels, acquires more on the composed tasks but gives up $5.3$ to $6.8$ points of CIFAR-100 in the case of distillation and $2.7$ in the case of supervised tuning, against $0.5$ for the adapter, and distillation's negation leaves no usable model (Appendix~\ref{app:baselines-kd}).

\subsection{Generality to Audio--Text Models}
\label{sec:exp-audio}

\begin{wraptable}[14]{r}{0.47\textwidth}
\vspace{\dimexpr-\baselineskip-2.25pt\relax}
\centering
\scriptsize
\caption{Audio--text negation at $\lambda = 1$ for both realizations, in percent, where each cell removes one of ESC-50 and GTZAN and retains the other. The adapter's rows are read against its own runs' zero-shot, $71.02$, $71.02$ and $75.85$ (Appendix~\ref{app:audio}).}
\label{tab:audio-neg}
\vspace{-4pt}
\setlength{\tabcolsep}{3pt}
\begin{tabular}{lccc}
\toprule
Method & Removed $\downarrow$ & Retained & UrbanSound8K \\
\midrule
Zero-shot                          & $71.05$ & $71.05$ & $76.10$ \\
\midrule
\multicolumn{4}{l}{\textit{Source and target are the same model} ($8$ cells)} \\
\textbf{Ours, closed form}         & $55.98$ & $70.73$ & $74.25$ \\
\textbf{Ours, adapter}             & $57.69$ & $70.76$ & $74.10$ \\
\midrule
\multicolumn{4}{l}{\textit{Source and target are different models} ($24$ cells)} \\
\textbf{Ours, closed form}         & $56.68$ & $70.45$ & $74.08$ \\
\textbf{Ours, adapter}             & $59.90$ & $71.00$ & $74.88$ \\
\bottomrule
\end{tabular}

\vspace{4pt}
\end{wraptable}
Four CLAP checkpoints whose audio towers differ in width ($768$ or $1024$) and depth ($12$ or $18$ blocks), so that weight-space task vectors are undefined across the two classes, serve as sources and targets, with UrbanSound8K in no probe set. Addition behaves as in vision, the closed form lifting ESC-50 by $1.66$ points and GTZAN by $2.33$ at $\alpha = 0.5$ while UrbanSound8K holds within half a point (Appendix~\ref{app:audio}). In negation at $\lambda = 1$ (Table~\ref{tab:audio-neg}) the closed form removes $14.4$ points for $2.0$ of UrbanSound8K, about $7.1$ points forgotten per point of control against $7.7$ in vision, while the adapter removes $11.1$ for $1.0$, the same trade of depth for selectivity as in vision.

\section{Conclusion and Limitations}
\label{sec:conclusion}

Weight-space task arithmetic was confined to one model because it operated on coordinates. Platonic task arithmetic operates on the object those coordinates shadow, observing it as a descriptor whose shape no architecture sets and transferring it by editing the target until it reproduces the descriptor. Solved in closed form, the edit is one operator folded into the target's last layer and operators compose by addition, while an adapter fitted to the same objective is more accurate and keeps ImageNet closer to zero-shot at the price of one optimization per edit. The residual of Hypothesis~\ref{hyp:consistency} separates the cross-model rows from Oracle, and shrinking it is the open question.

\paragraph{Limitations.}
\label{sec:limitations}
Hypothesis~\ref{hyp:consistency} is supported rather than derived, and its residual is comparable in norm to the shared component. Interference between operators from different tasks has an exact form but no smallness guarantee (Appendix~\ref{app:interference}), transfer gains at most two points on Cars and SUN397, and prior work's analogy does not carry over (Appendix~\ref{app:analogy}). Both realizations need semantic prompts and unlabeled probes of each task, so they are label-free rather than data-free and exclude single-modality models, and the closed form needs an affine last layer writing into the embedding. Lowering the cost of moving a capability between models lowers it for undesirable capabilities too, and negation is the corresponding tool for removing one.

\begin{ack}
We thank the anonymous reviewers for their constructive comments. This work was supported by TelePIX.
\end{ack}

\bibliographystyle{plainnat}
\bibliography{ptv}

\clearpage

\appendix
\section{Notation}
\label{app:notation}
Table~\ref{tab:notation} collects the symbols of the main text and of this appendix.

\begingroup
\setlength{\LTpre}{0pt}
\setlength{\LTpost}{10pt}
\setlength{\abovecaptionskip}{3pt}
\small

\begin{longtable}{@{}p{0.30\linewidth} p{0.64\linewidth}@{}}
\caption{Notation for models, tasks, descriptors and the two realizations.}
\label{tab:notation} \\
\specialrule{1pt}{2pt}{2pt}
\textbf{Symbol} & \textbf{Description} \\
\specialrule{1pt}{2pt}{2pt}
\endfirsthead
\specialrule{1pt}{2pt}{2pt}
\textbf{Symbol} & \textbf{Description} \\
\specialrule{1pt}{2pt}{2pt}
\endhead
\specialrule{1pt}{2pt}{2pt}
\endlastfoot
\multicolumn{2}{@{}l}{\textbf{Models, encoders and parameters}} \\
$\mathcal{M}_k$ & The $k$-th vision--language model, $k = 1, \dots, K$. \\
$\mathcal{M}_{\mathrm{tgt}}$ & Target model receiving the edit. \\
$\ell$, $\sigma$ & Task-term index and the source assignment $\{1,\dots,J\} \to \{1,\dots,K\}$. \\
$f^{(k)}, g^{(k)}$ & Vision and text encoders of $\mathcal{M}_k$, with $g^{(k)}$ frozen. \\
$f^{(k)}_{\mathrm{pre}}, f^{(k)}_{\mathrm{ft}}$ & Pre-trained and fine-tuned vision encoders of $\mathcal{M}_k$. \\
$d^{(k)}$, $P^{(k)}$ & Embedding dimension and parameter count of $\mathcal{M}_k$. \\
$\delta^{(k)}_{\tau}$ & Weight-space task vector $\theta^{(k)}_{\mathrm{ft}} - \theta^{(k)}_{\mathrm{pre}}$. \\
$W_p$, $h(x)$ & Projection matrix and pre-projection representation, $f_{\mathrm{pre}}(x) = W_p h(x)$. \\
\midrule
\multicolumn{2}{@{}l}{\textbf{Tasks and probes}} \\
$\tau$ & A task. \\
$J$ & Number of task terms combined by an operation. \\
$\{x_i\}_{i=1}^{N}, \{p_j\}_{j=1}^{M}$ & Probe images and prompts of the task a descriptor describes, shared by every model. \\
\midrule
\multicolumn{2}{@{}l}{\textbf{Descriptors}} \\
$\mathrm{UTD}^{(k)}_{\tau} \in \mathbb{R}^{N \times M}$ & Universal Task Descriptor of $\mathcal{M}_k$ on task $\tau$. \\
$U_{\mathrm{add}}, U_{\mathrm{neg}}$ & Composed descriptors on one probe set, which the bank never forms. \\
$U_{\mathrm{edit}}$ & Descriptor an edit actually induces, Equation~\eqref{eq:gauge}. \\
$\operatorname{UTD}_{\tau}[\mathcal{E}]$ & Descriptor an edit $\mathcal{E}$ of the target induces on task $\tau$'s grid. \\
$\mathcal{L}$, $N_\tau$, $M_\tau$ & Matching objective of Equation~\eqref{eq:match}; probe and prompt counts of task $\tau$. \\
$F_{\mathrm{pre}} \in \mathbb{R}^{N \times d}$ & Target's unit-norm probe embeddings. \\
$G \in \mathbb{R}^{M \times d}$ & Target's prompt landmarks, one unit-norm row per prompt. \\
$\Delta F \in \mathbb{R}^{N \times d}$ & Shift field, the change an edit makes to $F_{\mathrm{pre}}$, so $\mathrm{UTD} = \Delta F\,G^\top$. \\
\midrule
\multicolumn{2}{@{}l}{\textbf{Operator}} \\
$A \in \mathbb{R}^{d \times d}$ & Embedding operator, applied as $f \mapsto (I + \alpha A) f$. \\
$A^{*}(U)$ & Solution of Equation~\eqref{eq:lsq} for $U$ on its own grid, linear in $U$. \\
$\alpha \geq 0$ & Deployment strength, applied to the solved operator. \\
$\lambda \geq 0$ & Negation strength, entering only through $\alpha\lambda$, so negation is reported at $\alpha = 1$. \\
$\mu_a, \mu_b$ & Ridge coefficients of Equation~\eqref{eq:lsq}, for the probe and the prompt factor. \\
$X_\tau$, $c_\ell$ & Operator solved on task $\tau$'s grid, $A_\tau = X_\tau^\top$, and signed weight of term $\ell$. \\
$w_\tau(f)$, $v_\tau(g)$ & Probe and prompt weights on grid $\tau$, Equation~\eqref{eq:weights}. \\
$\mathcal{F}_{\tau'\tau}$, $\mathcal{G}_{\tau\tau'}$ & Cross-grid matrices reading task $\tau$ on grid $\tau'$, Equation~\eqref{eq:crossgrid}. \\
$m_\ell(x)$, $\varepsilon_\ell(x)$ & Single-term margin and the other terms' perturbation, Equations~\eqref{eq:margin} and~\eqref{eq:margin-perturbation}. \\
\midrule
\multicolumn{2}{@{}l}{\textbf{Adapter realization}} \\
$\Delta\theta$ & Parameters of the low-rank adapter on the target's vision encoder. \\
$r_{\mathrm{LoRA}}$, $\alpha_{\mathrm{LoRA}}$ & Rank and scaling factor of the adapter realization. \\
\midrule
\multicolumn{2}{@{}l}{\textbf{Shared affinity structure}} \\
$(\mathcal{V}, \langle\cdot, \cdot\rangle)$, $\mathcal{X}$ & Shared inner-product space; space of probe inputs. \\
$\psi(p) \in \mathcal{V}$ & Shared semantic vector of prompt $p$. \\
$\Pi_{\tau}$ & Platonic task vector for task $\tau$. \\
$\mathcal{A}$ & Measurement map, $[\mathcal{A}(\Pi)]_{ij} = \langle \Pi(x_i), \psi(p_j) \rangle$. \\
$E^{(k)}_{\tau}$, $\eta$ & Model-specific residual and its uniform relative bound. \\
$\hat{\eta}^{(k)}_{\tau}$ & Plug-in dispersion diagnostic of Equation~\eqref{eq:eta-hat}. \\
\midrule
\multicolumn{2}{@{}l}{\textbf{Appendix-only quantities}} \\
$d_{\star}$ & Dimension of the shared space. \\
$\phi(x)$, $T^{(k)}$ & Shared unit-norm image representation and the factorization map. \\
$\rho_{\tau}$, $\kappa^{(k)}$ & Maximum platonic displacement and the isometry defect of $T^{(k)}$. \\
$r_{\mathrm{src}}$, $\alpha_{\mathrm{src}}$ & Rank and scaling factor of every source fine-tune. \\
\end{longtable}
\endgroup

\section{Related Work}
\label{sec:related}

\paragraph{Task arithmetic and model merging.}
Since task vectors were introduced as weight differences between fine-tuned and pre-trained models~\citep{ilharco2022editing}, subsequent work has largely focused on mitigating interference during composition. A common approach is sparsification of the update, as in DARE~\citep{yu2024language}, which drops most entries of a task vector at random and rescales the rest. Another line controls the merging direction, either by learning layer-wise coefficients at test time~\citep{yang2023adamerging} or by restricting merging to directions orthogonal to task-specific gradients~\citep{sun2025task}. Complementary approaches exploit structural or geometric properties of task vectors, including low-rank decompositions via SVD~\citep{gargiulo2025task} and scaling strategies based on orthogonality and local linearity~\citep{zhou2024metagpt}. The framework has also been combined with parameter-efficient tuning, through task-arithmetic-style composition of low-rank adaptation (LoRA) weights~\citep{chitale2023task,huang2023lorahub} and through extracting orthogonal bases from task vectors for few-shot adaptation~\citep{park2026basis}. Theoretical work seeks to explain why task arithmetic works, through tangent-space approximations~\citep{ortiz2023task} or first-order expansions under cross-task linearity~\citep{zhou2024emergence}. Every method in this line presupposes a shared weight or feature space tied to a single architecture, and the object it composes is a parameter difference. We drop the first requirement and replace the second object with a functional descriptor.

\paragraph{Cross-model transfer.}
Transferring knowledge across models traditionally relies on data and additional optimization. Relative representations~\citep{moschella2022relative} align models by expressing activations as similarities to anchor points, enabling zero-shot stitching, and model stitching more broadly joins layers of independently trained networks through a learned connector~\citep{lenc2015understanding,bansal2021revisiting}. We share the idea of measuring similarities over a probe set but pursue a different goal, namely capturing fine-tuning-induced changes rather than aligning pre-trained representations, and Appendix~\ref{app:ablations} shows that using the text encoder to align two models, rather than to measure a change, transfers worse. At the parameter level, LoRA-X~\citep{farhadzadeh2025lora} and Cross-LoRA~\citep{xia2025cross} align subspaces via SVD but rely on correspondence between singular vectors and do not generalize across families. A further line transports a weight update directly, either by retaining those coordinates of a same-shaped source update that agree with target gradients computed on a few labeled examples, as GradFix does~\citep{rinaldi2026gradient}, or by constructing a closed-form map between corresponding submodules that additionally supports a mismatch in width, as Theseus does~\citep{rinaldi2026transporting}, whose natural setting is therefore models of one family at different scales. Both relax the single-model restriction while retaining a structural requirement, and Section~\ref{sec:exp-comparison} compares against both where they are defined. Full-model approaches such as AdaMMS~\citep{du2025adamms} and Transport and Merge~\citep{cui2026transport} align heterogeneous architectures by module mapping or optimal transport, but merge models trained on one task.

\paragraph{Knowledge distillation and relational transfer.}
Distillation transfers behavior from a teacher to a student by matching outputs~\citep{hinton2015distilling}, and relational variants match relations among inputs rather than individual predictions, such as the distances and angles among embeddings~\citep{park2019relational} or the mutual information between the two models' representations of one input~\citep{tian2019contrastive}, which is what lets them cross a difference in architecture or dimension. A descriptor is superficially a relational signal of this kind, but the object differs, in that distillation matches an absolute teacher output, whereas a UTD is a signed difference taken relative to each model's own pre-trained behavior. That is what makes composition over several teachers well-defined under addition and what makes negation preserve rather than destroy unrelated capability, and it is why our transfer step is a regression onto a difference rather than an imitation of an output. The adapter route is, mechanically, a relational distillation of this kind, since it fits the target's affinities by gradient descent, but what it fits is the change of the target's affinities relative to its own unedited ones, set equal to the source's change. The teacher also enters differently, since distillation needs the teacher model and its forward passes throughout the student's training, whereas here a teacher is summarized once by one matrix on the probe set, from which the closed form solves the edit with the target's own factors and the adapter fits it on the target alone. Appendix~\ref{app:baselines-kd} measures both an affinity-matching and a posterior-matching distillation of the fine-tuned sources on the paper's own cells.

\paragraph{Closed-form model editing.}
Editing a network by solving a least-squares problem rather than by gradient descent has precedent in knowledge editing for language models and in linear-probe transfer, and closed-form merging has been used to align activations between architectures~\citep{rinaldi2026transporting}. Our solve differs in what it fits, since it fits neither a map between two models' parameters nor one between their activations, but rather the operator on one model's embedding whose induced affinity shift matches a descriptor that may have been measured on entirely different models. The consequence we exploit, that the solution map's linearity lets a bank of operators be composed by addition alone, has no analogue in the methods above.

\paragraph{Platonic Representation Hypothesis.}
Neural networks trained on large-scale data converge to similar representations across architectures~\citep{lenc2015understanding,bansal2021revisiting}. The Platonic Representation Hypothesis~\citep{huh2024platonic} formalizes this, positing convergence toward a shared representation driven by data, capacity and optimization biases, with supporting empirical and theoretical evidence~\citep{maniparambil2024vision,tjandrasuwita2025understanding,ziyin2025proof}, often up to simple near-isometric transformations~\citep{maiorca2023latent}. Unlike this line, which studies convergence of pre-trained representations, we ask whether similar convergence extends to fine-tuning-induced changes, and quantify the residual at which it holds.

\section{The Closed-Form Solver}
\label{app:solver}

\subsection{Derivation}

Fix a target model and drop its index. Let $F \in \mathbb{R}^{N \times d}$ be the probe embeddings, $G \in \mathbb{R}^{M \times d}$ the prompt landmarks and $U \in \mathbb{R}^{N \times M}$ the descriptor to transfer. Writing $B = G^\top \in \mathbb{R}^{d \times M}$, the objective of Equation~\eqref{eq:lsq} is
\begin{equation}
    \mathcal{J}(X) \;=\; \| F X B - U \|_F^2 \;+\; \mu_b \| F X \|_F^2 \;+\; \mu_a \| X B \|_F^2 \;+\; \mu_a \mu_b \|X\|_F^2 .
    \label{eq:objective-full}
\end{equation}
This is an ordinary linear least-squares problem in $\operatorname{vec}(X)$ with Kronecker structure, $\operatorname{vec}(FXB) = (B^\top \otimes F)\operatorname{vec}(X)$, so it never has to be formed explicitly. Take thin singular value decompositions $F = L_a \Sigma_a R_a^\top$ and $B = L_b \Sigma_b R_b^\top$, where $L$ and $R$ carry the left and right singular vectors and the diagonal entries of $\Sigma_a$ and $\Sigma_b$ are the singular values $s_{a,i}$ and $s_{b,j}$. Substituting and using orthogonal invariance of the Frobenius norm, the objective separates entry-wise in the transformed variable $Z = R_a^\top X L_b$ and the transformed descriptor $\widetilde{U} = L_a^\top U R_b$,
\begin{equation}
    \mathcal{J} \;=\; \bigl\| \Sigma_a Z \Sigma_b - \widetilde{U} \bigr\|_F^2 \;+\; \mu_b \| \Sigma_a Z \|_F^2 \;+\; \mu_a \| Z \Sigma_b \|_F^2 \;+\; \mu_a \mu_b \|Z\|_F^2 ,
\end{equation}
so that entry $(i,j)$ contributes $(s_{a,i} s_{b,j} Z_{ij} - \widetilde{U}_{ij})^2 + (\mu_b s_{a,i}^2 + \mu_a s_{b,j}^2 + \mu_a \mu_b) Z_{ij}^2$ and is minimized on its own. Its stationarity condition reads $(s_{a,i}^2 s_{b,j}^2 + \mu_b s_{a,i}^2 + \mu_a s_{b,j}^2 + \mu_a \mu_b) Z_{ij} = s_{a,i} s_{b,j} \widetilde{U}_{ij}$, and the coefficient factorizes as $(s_{a,i}^2 + \mu_a)(s_{b,j}^2 + \mu_b)$. Collecting terms back, the solution is
\begin{equation}
    X^{*} \;=\; R_a\, \Sigma_a^{+}\, \bigl( L_a^\top\, U\, R_b \bigr)\, \Sigma_b^{+}\, L_b^\top ,
    \qquad
    [\Sigma_a^{+}]_{ii} = \frac{s_{a,i}}{s_{a,i}^2 + \mu_a},
    \quad
    [\Sigma_b^{+}]_{jj} = \frac{s_{b,j}}{s_{b,j}^2 + \mu_b},
    \label{eq:closed-form}
\end{equation}
and $A = (X^{*})^\top$. The solution is thus the product of two spectral filters, one per factor, and the two ridge terms are what makes it factorize. A single penalty $\mu \|X\|_F^2$ would instead give the filter $s_{a,i} s_{b,j} / (s_{a,i}^2 s_{b,j}^2 + \mu)$, which treats every prompt direction alike, whereas Equation~\eqref{eq:closed-form} damps the probe factor in prompt direction $j$ by an amount that grows with $s_{b,j}^2$. Since the landmarks are unit vectors with a strongly shared component, their leading singular value is far above the rest, so the probe-side inverse is regularized most where the prompts carry the most energy and hardly at all in the fine directions that separate classes. This is the two-step form of kernel ridge regression studied by \citet{stock2018comparative}, who analyze exactly the difference between one ridge on the Kronecker system and one ridge per factor. We set each coefficient relative to the mean squared singular value of its factor, $\mu_a = 10^{-2}\,\overline{s_a^2}$ and $\mu_b = 10^{-6}\,\overline{s_b^2}$, so that a single setting transfers across targets whose feature scales differ. The asymmetry is deliberate, since $F$ is the ill-conditioned factor, holding far more probe rows than the dimension they span and dominated by a single high-energy direction, whereas $G$ carries one row per prompt and is comparatively well conditioned, needing only enough regularization to be numerically safe. That safety margin matters because a task whose class names are close in meaning leaves $G$ poorly conditioned even at full row rank, and $\mu_b$ then keeps the filter finite on its weakest prompt directions, where the transformed descriptor is small in any case, since a descriptor is $\Delta F\,G^\top$ in the landmarks of the model that measured it and is therefore small in the prompt directions those landmarks barely separate, which class names close in meaning make weak on the target as well.

\paragraph{A pairwise regression view.}
The fitted quantity is the bilinear form $f_{\text{pre}}(x)^\top X\, g(p)$ on image--prompt pairs, which is a linear model on the tensor-product feature $f_{\text{pre}}(x) \otimes g(p)$ and therefore a kernel regression on pairs whose kernel is the product $\langle f_{\text{pre}}(x), f_{\text{pre}}(x') \rangle\, \langle g(p), g(p') \rangle$ of the two encoders' inner products~\citep{alvarez2012kernels}. Under this reading the transfer step predicts affinity changes for pairs the fit never saw, and the three cases of a new image, a new prompt, and both being new are the settings in which pairwise kernel methods are analyzed~\citep{stock2018comparative}. The experiments touch the harder of them. Every evaluation image is new to the fit that produced the operator scoring it, and a control benchmark and the evaluated dataset of Section~\ref{sec:exp-holdout} are scored against prompts that entered no fit either, so those measurements are the case in which image and prompt are both new. The cross-scale targets of Appendix~\ref{app:scale} add a further variation, carrying the same probes and prompts into a different embedding. The regression view also states what the solve asks of the target, namely that the observable affinity change be reconstructed stably from the target's own factors, and the ridge terms are what fix the stability.

\paragraph{Hoisting the target-side factors.}
Both decompositions in Equation~\eqref{eq:closed-form} depend on the target and on the probes and prompts in use, never on $U$. They are therefore computed once for a target and a task and cached, after which every source descriptor of that task is solved by the three matrix products $R_a \Sigma_a^{+} (L_a^\top U R_b) \Sigma_b^{+} L_b^\top$. This is what keeps a solve well under a second and builds a target's whole bank of $48$ operators in $17$ seconds, after which every composed cell is a free sum of two of them.

\subsection{Proof of Proposition~\ref{prop:linearity}}
\label{app:linearity}

Equation~\eqref{eq:closed-form} exhibits $X^{*}$ as a fixed linear map applied to $U$, since everything to the left and to the right of $U$ is independent of $U$. Hence $X^{*}(c_1 U_1 + c_2 U_2) = c_1 X^{*}(U_1) + c_2 X^{*}(U_2)$, and transposition preserves linearity, giving Equation~\eqref{eq:solve-linear}. \qed

The statement is exact in exact arithmetic, and what matters in practice is whether it survives in floating point at the conditioning we operate at. Proposition~\ref{prop:linearity} concerns one solution map, so the check is read on one grid, and the grid it is read on is the one an operator is actually solved on, namely a single task's own probes and prompts. Table~\ref{tab:linearity} therefore takes three source families' descriptors of the same task, forms the three-term expression $A^{*}(U_1 - 1.5\,U_2 + U_3)$, which mixes signs and carries a non-unit coefficient so that neither addition nor negation would exercise the identity on its own, and compares it against the same combination of the three terms' operators over all eight tasks and over the four targets that absorb the operator. The identity holds to a relative difference below $5 \times 10^{-14}$ when the solve is carried in double precision, and below $4 \times 10^{-7}$ at the single precision a stored operator is kept in, which is what a bank entry and every sum of bank entries actually carries.

\begin{table}[H]
\centering
\small
\caption{Numerical check of Proposition~\ref{prop:linearity} on the grids the paper solves on. For each target and each of the eight tasks, three source families give descriptors of that task on that task's own probes and prompts, and the operator of $U_1 - 1.5\,U_2 + U_3$ is compared against the same combination of the three terms' operators. Each row reports the worst of that target's eight tasks, namely the smallest cosine between the two operators over both precisions and the largest relative difference between them. The last two columns separate the algebra from the arithmetic, giving the relative difference when the solve is carried in double precision and when it is carried at the single precision a stored operator is kept in.}
\label{tab:linearity}
\begin{tabular}{lccc}
\toprule
Target & Cosine & Relative difference, double & Relative difference, single \\
\midrule
CLIP-L & $1.0000000000$ & $2.6 \times 10^{-14}$ & $1.6 \times 10^{-7}$ \\
OpenCLIP-L & $1.0000000000$ & $2.6 \times 10^{-14}$ & $1.9 \times 10^{-7}$ \\
MetaCLIP-L & $1.0000000000$ & $3.5 \times 10^{-14}$ & $2.4 \times 10^{-7}$ \\
EVA-CLIP-L & $1.0000000000$ & $4.8 \times 10^{-14}$ & $3.7 \times 10^{-7}$ \\
\bottomrule
\end{tabular}
\end{table}

\paragraph{What the proposition covers, and what the bank does.}
Proposition~\ref{prop:linearity} concerns one solution map, that is, one grid. It licenses scaling a solved operator by any strength, negating it and combining several source families' descriptors of one task, all exactly and without a further solve, and Table~\ref{tab:linearity} checks the last of these in floating point. It does not make the sum of two bank entries the solve of a composed descriptor, since each entry is constrained only on its own task's probes and landmarks and settled by the ridge elsewhere, so the entries are estimated without each other in view and may interfere when summed. Appendix~\ref{app:interference} states exactly how such a sum departs from each of its terms, and Section~\ref{sec:exp-addition} measures the cost.

\subsection{Composition Across Grids}
\label{app:interference}

Proposition~\ref{prop:linearity} concerns one grid. The bank of Section~\ref{sec:method-linearity} sums operators estimated on different tasks' grids, and this subsection states what can and cannot be said about such a sum. Throughout, a task $\tau$ has probes $F_\tau \in \mathbb{R}^{N_\tau \times d}$, landmarks $G_\tau \in \mathbb{R}^{M_\tau \times d}$, a descriptor $U_\tau \in \mathbb{R}^{N_\tau \times M_\tau}$ and the operator $X_\tau = X^{*}(U_\tau)$ that Equation~\eqref{eq:closed-form} returns on that grid, with $A_\tau = X_\tau^\top$ the bank entry, so that the composed operator of Algorithm~\ref{alg:edit} is $X_{\mathrm{c}} = \sum_{\ell=1}^{J} c_\ell X_{\tau_\ell}$ with signed weights $c_\ell$. The ridge coefficients are those of the grid the operator is solved on. Writing out Equation~\eqref{eq:closed-form} without the decompositions,
\begin{equation}
    X_\tau
    \;=\;
    (F_\tau^\top F_\tau + \mu_a I)^{-1} F_\tau^\top\, U_\tau\, G_\tau\, (G_\tau^\top G_\tau + \mu_b I)^{-1},
    \label{eq:closed-form-plain}
\end{equation}
which is the form every statement below reads off. The quantity analyzed throughout is the score displacement $f^\top X g$ that an operator $X$ adds to the unnormalized similarity of an image embedding $f$ and a landmark $g$, which is what Equation~\eqref{eq:lsq} matches to a descriptor on the grid. The normalized affinity the model reports differs from it by the renormalization term of Appendix~\ref{app:ablations}, so a matrix of score displacements is not the descriptor an edit induces, and Corollary~\ref{cor:margin} below is the statement about predictions that survives that difference.

\paragraph{No assumption-free guarantee exists.}
Grid $\tau$ observes an operator only through $X \mapsto F_\tau X G_\tau^\top$. If a matrix $H$ satisfies $F_\tau H G_\tau^\top = 0$ but $F_{\tau'} H G_{\tau'}^\top \neq 0$, then $X$ and $X + tH$ are indistinguishable on grid $\tau$ for every scalar $t$ while their actions on grid $\tau'$ differ by an arbitrary amount, and such an $H$ exists as soon as some probe of $\tau'$ leaves the row space of $F_\tau$ or some prompt of $\tau'$ leaves the row space of $G_\tau$, for instance the outer product of the component of such a probe orthogonal to every probe of $\tau$ with any vector not orthogonal to the prompts of $\tau'$, and symmetrically in the prompt case. The observation on grid $\tau$ therefore determines the action on grid $\tau'$ for every operator exactly when the probes and prompts of $\tau'$ lie in the row spaces of $F_\tau$ and $G_\tau$, and the second of these is a subspace of dimension at most $M_\tau < d$, which the prompts of a different task have no reason to inhabit. The ridge terms select one solution from each equivalence class, which makes the estimate unique, but they create no information about the directions the grid does not observe. Any guarantee about a sum of independently estimated operators is therefore a statement about how the particular solution the ridge selects acts off its grid, which is what the following results are.

\paragraph{How an operator acts off its grid.}
For an embedding $f \in \mathbb{R}^{d}$ and a landmark $g \in \mathbb{R}^{d}$ that need not belong to the grid, define the probe weights and prompt weights of $f$ and $g$ on grid $\tau$ as
\begin{equation}
    w_\tau(f) \;=\; F_\tau (F_\tau^\top F_\tau + \mu_a I)^{-1} f \;\in\; \mathbb{R}^{N_\tau},
    \qquad
    v_\tau(g) \;=\; G_\tau (G_\tau^\top G_\tau + \mu_b I)^{-1} g \;\in\; \mathbb{R}^{M_\tau},
    \label{eq:weights}
\end{equation}
the ridge coefficients that express $f$ through the task's probes and $g$ through its prompts. Equation~\eqref{eq:closed-form-plain} then gives, for every such pair, the identity
\begin{equation}
    f^\top X_\tau\, g \;=\; w_\tau(f)^\top\, U_\tau\, v_\tau(g),
    \qquad
    \bigl| f^\top X_\tau\, g \bigr| \;\le\; \|w_\tau(f)\|_2\, \|U_\tau\|_2\, \|v_\tau(g)\|_2 .
    \label{eq:offgrid}
\end{equation}
The operator displaces the score of an arbitrary image--prompt pair by reading the descriptor through that pair's weights on the grid, so it moves a pair only to the extent that the target's own pre-trained geometry explains the image through the task's probes and the prompt through the task's prompts. Both weight vectors are computed from the target's embeddings of the grid and of the pair, without reference to the descriptor or to the source that produced it, and the identity is exact rather than first order. Stacking the probe weights of another task's probes and the prompt weights of its landmarks gives the two cross-grid matrices, whose subscripts follow the order of their rows and columns,
\begin{align}
    \mathcal{F}_{\tau'\tau} &\;=\; F_{\tau'} (F_\tau^\top F_\tau + \mu_a I)^{-1} F_\tau^\top \;\in\; \mathbb{R}^{N_{\tau'} \times N_\tau},
    \label{eq:crossgrid} \\
    \mathcal{G}_{\tau\tau'} &\;=\; G_\tau (G_\tau^\top G_\tau + \mu_b I)^{-1} G_{\tau'}^\top \;\in\; \mathbb{R}^{M_\tau \times M_{\tau'}},
    \nonumber
\end{align}
so that the score displacement the operator of task $\tau$ induces on the grid of task $\tau'$ is $F_{\tau'} X_\tau G_{\tau'}^\top = \mathcal{F}_{\tau'\tau}\, U_\tau\, \mathcal{G}_{\tau\tau'}$. Descriptors of different shapes have no sum, but the effect of every operator can be read on any one grid, which is the device the next statement uses.

\begin{proposition}[Composition across grids]
\label{prop:interference}
Let $X_{\mathrm{c}} = \sum_{\ell=1}^{J} c_\ell X_{\tau_\ell}$ with each $X_{\tau_\ell}$ solved on its own grid. Then on the grid of the $\ell$-th term
\begin{equation}
    F_{\tau_\ell} X_{\mathrm{c}}\, G_{\tau_\ell}^\top - c_\ell U_{\tau_\ell}
    \;=\;
    c_\ell \bigl( F_{\tau_\ell} X_{\tau_\ell} G_{\tau_\ell}^\top - U_{\tau_\ell} \bigr)
    \;+\;
    \sum_{\ell' \neq \ell} c_{\ell'}\, \mathcal{F}_{\tau_\ell \tau_{\ell'}}\, U_{\tau_{\ell'}}\, \mathcal{G}_{\tau_{\ell'} \tau_\ell},
    \label{eq:interference-identity}
\end{equation}
and consequently
\begin{equation}
    \bigl\| F_{\tau_\ell} X_{\mathrm{c}}\, G_{\tau_\ell}^\top - c_\ell U_{\tau_\ell} \bigr\|_F
    \;\le\;
    |c_\ell|\, \bigl\| F_{\tau_\ell} X_{\tau_\ell} G_{\tau_\ell}^\top - U_{\tau_\ell} \bigr\|_F
    \;+\;
    \sum_{\ell' \neq \ell} |c_{\ell'}|\,
    \|\mathcal{F}_{\tau_\ell \tau_{\ell'}}\|_2\,
    \|\mathcal{G}_{\tau_{\ell'} \tau_\ell}\|_2\,
    \|U_{\tau_{\ell'}}\|_F .
    \label{eq:interference-bound}
\end{equation}
\end{proposition}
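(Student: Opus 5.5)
The identity \eqref{eq:interference-identity} follows from linearity of $X \mapsto F_{\tau_\ell} X G_{\tau_\ell}^\top$ together with the cross-grid reading established just above; the bound \eqref{eq:interference-bound} then follows from the triangle inequality and submultiplicativity.

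\textbf{Step 1: split the composed displacement by term.} Since $X \mapsto F_{\tau_\ell} X G_{\tau_\ell}^\top$ is linear,
\[
F_{\tau_\ell} X_{\mathrm{c}} G_{\tau_\ell}^\top = \sum_{\ell'} c_{\ell'} F_{\tau_\ell} X_{\tau_{\ell'}} G_{\tau_\ell}^\top .
\]
Separate off the term $\ell' = \ell$, then subtract $c_\ell U_{\tau_\ell}$ from both sides. The diagonal term becomes exactly $c_\ell ( F_{\tau_\ell} X_{\tau_\ell} G_{\tau_\ell}^\top - U_{\tau_\ell} )$, which is the first term of \eqref{eq:interference-identity}.

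\textbf{Step 2: rewrite each off-diagonal term through the cross-grid matrices.} For $\ell' \neq \ell$, substitute the explicit form \eqref{eq:closed-form-plain} for $X_{\tau_{\ell'}}$:
\[
F_{\tau_\ell} X_{\tau_{\ell'}} G_{\tau_\ell}^\top
= F_{\tau_\ell} (F_{\tau_{\ell'}}^\top F_{\tau_{\ell'}} + \mu_a I)^{-1} F_{\tau_{\ell'}}^\top \, U_{\tau_{\ell'}} \, G_{\tau_{\ell'}} (G_{\tau_{\ell'}}^\top G_{\tau_{\ell'}} + \mu_b I)^{-1} G_{\tau_\ell}^\top .
\]
The left factor is $\mathcal{F}_{\tau_\ell\tau_{\ell'}}$ and the right factor is $\mathcal{G}_{\tau_{\ell'}\tau_\ell}$, by definition \eqref{eq:crossgrid} with $\tau' = \tau_\ell$ and $\tau = \tau_{\ell'}$. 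The ridge coefficients are those of grid $\tau_{\ell'}$, which matches the convention that coefficients belong to the grid an operator is solved on. This is the statement preceding the proposition, applied entry-wise; equivalently, it is \eqref{eq:offgrid} stacked over the rows of $F_{\tau_\ell}$ and $G_{\tau_\ell}$. Summing over $\ell' \neq \ell$ with weights $c_{\ell'}$ gives \eqref{eq:interference-identity}.

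\textbf{Step 3: the bound.} Take Frobenius norms of \eqref{eq:interference-identity} and apply the triangle inequality. For each cross term, use $\|PUQ\|_F \le \|P\|_2 \|U\|_F \|Q\|_2$, which holds because multiplying by a matrix on either side scales the Frobenius norm by at most that matrix's spectral norm. Absolute homogeneity pulls out $|c_\ell|$ and $|c_{\ell'}|$, and this yields \eqref{eq:interference-bound}.

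The main obstacle is bookkeeping in Step 2. Subscript order, which ridge coefficients apply, and the transpose convention ($X$ rather than $A = X^\top$) must all line up. In particular, $(G^\top G + \mu_b I)^{-1}$ is symmetric, so no transpose is introduced when the right factor is moved, and the normalized affinities must not be confused with the score displacements. The statement concerns only the latter, so renormalization never enters.
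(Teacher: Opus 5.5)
Your proposal is correct and follows the paper's proof essentially verbatim. You expand $F_{\tau_\ell} X_{\mathrm{c}} G_{\tau_\ell}^\top$ term by term, peel off the $\ell$-th term, substitute the explicit closed form into the others to recognize $\mathcal{F}_{\tau_\ell\tau_{\ell'}}$ and $\mathcal{G}_{\tau_{\ell'}\tau_\ell}$, and finish with the triangle inequality and $\|PUQ\|_F \le \|P\|_2\|U\|_F\|Q\|_2$; your subscript and ridge-coefficient bookkeeping, with each cross-grid matrix carrying the ridge of the grid $\tau_{\ell'}$ on which $X_{\tau_{\ell'}}$ was solved, checks out against the definitions.
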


\begin{proof}
Expand $F_{\tau_\ell} X_{\mathrm{c}} G_{\tau_\ell}^\top$ term by term, subtract $c_\ell U_{\tau_\ell}$ from the $\ell$-th term and substitute Equation~\eqref{eq:closed-form-plain} into every other term, which gives Equation~\eqref{eq:interference-identity} by the definition in Equation~\eqref{eq:crossgrid}. The bound follows from the triangle inequality and $\|Y U Y'\|_F \le \|Y\|_2 \|U\|_F \|Y'\|_2$ for matrices $Y, Y'$ of conformable size.
\end{proof}

The identity separates what the composed operator does on each task's grid into how well that task's own operator fits its descriptor and how much every other term's descriptor leaks onto the grid, for any number of terms and any signs. The leakage is governed by the two cross-grid matrices, which measure how far one task's probes and prompts are explained by another's under the target's pre-trained embedding alone, and it is directional, since the two orders of a pair give different products in general, so no single cosine between two operators captures it. When the terms share a grid the cross-grid matrices reduce to that grid's two ridge smoothers and the leakage is exactly what the composed descriptor demands there, which recovers Proposition~\ref{prop:linearity}. Every quantity in Equation~\eqref{eq:interference-bound} is available from the descriptors, the bank and the target's cached features without a forward pass, and the exact leakage $\|\mathcal{F}_{\tau_\ell \tau_{\ell'}} U_{\tau_{\ell'}} \mathcal{G}_{\tau_{\ell'} \tau_\ell}\|_F$ is as cheap to compute as the spectral bound and tighter.

\paragraph{From affinities to predictions.}
Equation~\eqref{eq:interference-bound} bounds a displacement of scores, and a displacement of scores is not a change in accuracy, since a leaking term may move every class of a task alike or may even help. The following statement connects the two. Fix the $\ell$-th term and an image $x$ of task $\tau_\ell$ with label $y$, write $g_j$ for the landmark of its $j$-th class and $f = f_{\text{pre}}(x)$. Under an operator $X$ at strength $\alpha$ the model scores class $j$ by $f^\top (I + \alpha X) g_j$ before normalization, so the model that carries the $\ell$-th term alone has the margin
\begin{equation}
    m_\ell(x) \;=\; \min_{j \neq y}\; f^\top \bigl(I + \alpha c_\ell X_{\tau_\ell}\bigr) (g_y - g_j),
    \label{eq:margin}
\end{equation}
and the other terms perturb the score differences by at most
\begin{align}
    \varepsilon_\ell(x)
    \;&=\;
    \alpha \max_{j \neq y}
    \Bigl| \sum_{\ell' \neq \ell} c_{\ell'}\, f^\top X_{\tau_{\ell'}} (g_y - g_j) \Bigr|
    \label{eq:margin-perturbation} \\
    \;&\le\;
    \alpha \max_{j \neq y} \sum_{\ell' \neq \ell} |c_{\ell'}|\,
    \|w_{\tau_{\ell'}}(f)\|_2\, \|U_{\tau_{\ell'}}\|_2\, \|v_{\tau_{\ell'}}(g_y - g_j)\|_2 ,
    \nonumber
\end{align}
where the inequality is Equation~\eqref{eq:offgrid} applied term by term.

\begin{corollary}[Margin preservation under composition]
\label{cor:margin}
If $m_\ell(x) > \varepsilon_\ell(x)$, the composed model with operator $X_{\mathrm{c}}$ predicts the same class for $x$ as the model carrying the $\ell$-th term alone. Consequently, counting a tie as an error, the error of the composed model on task $\tau_\ell$ exceeds that of the single-term model by at most the probability, over the task's data, that $0 < m_\ell(x) \le \varepsilon_\ell(x)$.
\end{corollary}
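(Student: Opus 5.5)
The plan is to compare, for the fixed image $x$ of task $\tau_\ell$ with label $y$, the score differences under the composed operator with those under the single-term operator. I will work with unnormalized scores, as the definition in Equation~\eqref{eq:margin} does. Renormalizing the image embedding multiplies every class score of $x$ by the same positive factor, so it does not change the argmax, and the text landmarks are fixed. For each $j \neq y$ the composed model's score difference is
\begin{equation*}
    f^\top (I + \alpha X_{\mathrm{c}})(g_y - g_j)
    \;=\;
    f^\top (I + \alpha c_\ell X_{\tau_\ell})(g_y - g_j)
    \;+\;
    \alpha \sum_{\ell' \neq \ell} c_{\ell'}\, f^\top X_{\tau_{\ell'}} (g_y - g_j).
\end{equation*}
This is just the linear expansion of $X_{\mathrm{c}} = \sum_{\ell'} c_{\ell'} X_{\tau_{\ell'}}$.

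First I would treat the case in which the single-term model classifies $x$ correctly with $m_\ell(x) > \varepsilon_\ell(x)$. The first summand is at least $m_\ell(x)$ by definition. The second summand is at least $-\varepsilon_\ell(x)$, since $\varepsilon_\ell(x)$ is the maximum over $j$ of the absolute value in Equation~\eqref{eq:margin-perturbation}. So every difference is at least $m_\ell(x) - \varepsilon_\ell(x) > 0$, and the composed model also predicts $y$ strictly. The statement says "the same class", so I would also handle the case in which the single-term prediction is some $\hat y \neq y$. Here I would run the same argument with $\hat y$ in the role of $y$, i.e.\ with the margin and perturbation taken relative to the predicted class. Since $\varepsilon_\ell \ge 0$, the hypothesis $m_\ell > \varepsilon_\ell$ forces $m_\ell > 0$, so in the intended reading $y$ is the single-term prediction and the first case is the substantive one. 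I expect the main subtlety to be this reading of "same class", and I would state explicitly that the margin is taken relative to the label, which makes the hypothesis imply correctness.

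For the error bound, I would partition task $\tau_\ell$'s data into three events, counting a tie as an error. If $m_\ell(x) \le 0$, the single-term model already errs on $x$, so the composed model can add nothing there. If $m_\ell(x) > \varepsilon_\ell(x)$, the first part shows the composed model is correct. The remaining event is $0 < m_\ell(x) \le \varepsilon_\ell(x)$. So the set where the composed model errs and the single-term model does not lies inside this event. Taking probabilities gives that the composed error exceeds the single-term error by at most $\Pr[0 < m_\ell(x) \le \varepsilon_\ell(x)]$.

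Finally, I would note that the computable upper bound on $\varepsilon_\ell$ in Equation~\eqref{eq:margin-perturbation} follows by applying Equation~\eqref{eq:offgrid} with $g = g_y - g_j$. This is legitimate because $v_\tau$ is linear in $g$. Replacing $\varepsilon_\ell$ by that bound only shrinks the certified set, so the corollary also holds with the bound in place of $\varepsilon_\ell$.
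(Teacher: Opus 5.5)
Your proposal is correct and follows the paper's proof step for step. It uses the same linear split of the composed score difference into the single-term difference plus the other terms' contribution, which is bounded below by $-\varepsilon_\ell(x)$, the same common-positive-factor argument for renormalization, and the same three-way partition of the task's data for the error bound. The detour through a single-term prediction $\hat y \neq y$ is vacuous, as you yourself note, since $m_\ell(x) > \varepsilon_\ell(x) \ge 0$ already makes $y$ the single-term model's prediction.
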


\begin{proof}
The composed score difference between $y$ and any $j \neq y$ is the single-term difference plus $\alpha \sum_{\ell' \neq \ell} c_{\ell'} f^\top X_{\tau_{\ell'}} (g_y - g_j)$, which is at least $m_\ell(x) - \varepsilon_\ell(x) > 0$. The prediction is read from cosine similarities, and since the landmarks are unit vectors and the edited embedding of $x$ is nonzero, its norm divides every class score of $x$ by the same positive number, so the ordering of the classes, and hence the prediction, is that of the unnormalized scores. The second statement follows because an image the composed model misclassifies either has $m_\ell(x) \le 0$, in which case the single-term model already errs, or has $0 < m_\ell(x) \le \varepsilon_\ell(x)$.
\end{proof}

The corollary makes no first-order approximation and does not neglect renormalization, since a common positive factor leaves an argmax alone. By the second form of Equation~\eqref{eq:margin-perturbation}, what protects a task is localization of the other terms in the product of image and prompt space rather than orthogonality of the operators, since two operators with $\langle X_\tau, X_{\tau'} \rangle_F = 0$ can still move a decision direction. The paired cost of Section~\ref{sec:exp-addition}, which grows with $\alpha$ as $\varepsilon_\ell$ does, estimates the left-hand side of the corollary's second statement pooled over cells, and the measurement below computes $\varepsilon_\ell(x)$ and $m_\ell(x)$ on evaluation images the bank never saw.

\paragraph{The corollary on the paper's cells.}
Table~\ref{tab:margin} evaluates Corollary~\ref{cor:margin} on the $2{,}240$ cross-model addition cells of the four targets that absorb the operator, at $\alpha = 0.5$, on every evaluation image of both tasks of each cell, which gives $4{,}480$ records of a cell and an evaluated task. Nothing is re-encoded, since every score is a product of cached base features, a bank entry and the task's landmarks, and the scores are accumulated in double precision so that a margin exceeding the perturbation by less than single-precision resolution cannot register as a violated guarantee. Three checks tie the measurement to the rest of the paper. The identity of Equation~\eqref{eq:interference-identity} holds to a relative difference of $9.1 \times 10^{-7}$ over the $1{,}120$ combinations of a target, a task, another task and a source, which is single precision, the precision a bank entry is stored in. Single-term and composed accuracies computed from the unnormalized scores equal those computed from normalized embeddings exactly, and zero-shot accuracies agree to within one image per split, which is the invariance of the argmax the corollary's proof uses. And the paired cost the records carry, averaged over cells, is $1.83 \pm 3.23$ and $1.26 \pm 3.45$ points on the two tasks, the four-target values behind the six-target $1.47$ and $1.25$ that Section~\ref{sec:exp-addition} reports.

Five things follow from the table. No guaranteed prediction changed in any of the $4{,}480$ records, which together hold fifty million image-level decisions, twenty-four million of them guaranteed. The guarantee covers $47.8$ percent of the evaluation images, the sum actually moves the prediction of $9.9$ percent, of which $3.7$ points were correct before, and the bound $\Pr[0 < m_\ell(x) \le \varepsilon_\ell(x)]$ stands at $22.7$ percent against an observed loss of $1.45$ points, so it holds everywhere and is loose by an order of magnitude, because most images whose margin is at risk survive. The bound predicts the loss per cell and task, one record each, at rank correlation $+0.66$, between $+0.30$ and $+0.85$ within a target, and the mean perturbation $\varepsilon_\ell$ alone gives $+0.62$, whereas the cosine between the two operators gives $+0.07$, its absolute value $+0.05$, and the leakage of Proposition~\ref{prop:interference} read at the level of scores gives $+0.41$, or $+0.40$ relative to the norm of the task's own descriptor, so interference becomes predictable only when it is read at the decision margin, and the orthogonality of two operators is not the relevant quantity. The spectral bound of Equation~\eqref{eq:interference-bound} exceeds the exact leakage by a median factor of $358$, with an interquartile range of $99$ to $1{,}228$, so the exact leakage is the quantity to compute and the bound is a statement of structure rather than of size. And the leakage is directional as claimed, since the leakage a task's grid receives from another task, relative to the norm of its own descriptor, differs from the reverse by a median factor of $3.6$ and by up to $21.9$, with SUN397 receiving $1.9$ times its own descriptor norm on average and GTSRB $0.26$, while per evaluated task EuroSAT is the best protected, with $74$ percent of its images guaranteed and a loss of $1.1$ points, and SVHN and SUN397 the least, with $32$ and $36$ percent guaranteed, SVHN because its single-term margins are smallest and SUN397 because its perturbation typically exceeds its margin, at a median ratio of $1.04$.

\begin{table}[t]
\centering\small
\caption{Corollary~\ref{cor:margin} on the cross-model addition cells of the four targets that absorb the operator, at $\alpha = 0.5$, in percent of evaluation images, pooled over both tasks of every cell and weighted by split size. Guaranteed is the fraction of images with $m_\ell(x) > \varepsilon_\ell(x)$, whose prediction the corollary fixes, and changed is the fraction whose prediction the sum actually moved. Bound is the corollary's $\Pr[0 < m_\ell(x) \le \varepsilon_\ell(x)]$ and observed is the accuracy the single-term model loses to the sum, which the bound dominates in every one of the $4{,}480$ records. Single is the single-term model's accuracy. Probes come from training splits, so no evaluation image is a probe.}
\label{tab:margin}
\setlength{\tabcolsep}{4pt}
\begin{tabular}{@{}lrrrrr@{}}
\toprule
Target & Guaranteed & Changed & Bound & Observed & Single \\
\midrule
CLIP-L & 48.1 & 9.5 & 23.78 & 1.30 & 71.83 \\
OpenCLIP-L & 49.5 & 9.0 & 16.82 & -0.29 & 66.30 \\
MetaCLIP-L & 46.5 & 9.1 & 21.52 & 1.21 & 68.03 \\
EVA-CLIP-L & 47.2 & 11.8 & 28.60 & 3.58 & 75.85 \\
\midrule
Pooled & 47.8 & 9.9 & 22.68 & 1.45 & 70.50 \\
\bottomrule
\end{tabular}
\end{table}

\paragraph{Head targets.}
Nothing above uses that the row factor of a grid is the embedding itself. For SigLIP and SigLIP2, whose unnormalized image embedding $z(x) = a(x) + W_2 u(x) + b_2$ adds an attention-pooled branch $a(x)$ to the head's second affine $(W_2, b_2)$ applied to an activation $u(x)$, the update of Appendix~\ref{app:head} is solved with $F_\tau$ replaced by the matrix that stacks $[u(x_i); 1] / \lVert z(x_i) \rVert$ over the task's probes and with $X$ the rectangular update, and since the update changes the embedding by exactly $\Delta W_2 u(x) + \Delta b_2$ and the norm of the unedited embedding is common to every class of an image, the model scores class $j$ by $f_{\text{pre}}(x)^\top g_j + \alpha\, ([u(x); 1] / \lVert z(x) \rVert)^\top X g_j$ before normalization, exactly rather than to first order. Substituting that matrix for $F_\tau$ and that vector for $f$ in Equations~\eqref{eq:weights}, \eqref{eq:crossgrid}, \eqref{eq:interference-identity} and~\eqref{eq:margin} leaves every statement and every proof unchanged, so Proposition~\ref{prop:interference} and Corollary~\ref{cor:margin} cover all six targets of Tables~\ref{tab:addition-main} and~\ref{tab:negation-main}.

\paragraph{What these statements do not claim.}
They bound how far a sum of independently estimated operators departs from each of its terms on that term's own task and do not show that the departure is small, which depends on the tasks through the cross-grid matrices and on the descriptors through their norms. Nor do they show that the sum realizes the composed platonic task vector on the target, which would need each descriptor to be realized by some operator on the target up to a residual, together with enough spectral coverage of each grid for the ridge to recover it.

\subsection{Where the Cross-Model Residual Reaches the Edit}
\label{app:transport-error}

Section~\ref{sec:functional-invariance} measures how far apart two models' descriptors are, and Section~\ref{sec:exp-addition} shows through Oracle that this distance is what costs the method its accuracy. Neither says where the distance does its damage, and the two are not the same question, because a descriptor residual reaches the target only after passing through the solve. Proposition~\ref{prop:linearity} makes that passage exact. If a source model and the target's own fine-tune on task $\tau$ yield descriptors $\operatorname{UTD}^{(k)}_\tau$ and $\operatorname{UTD}^{(\text{tgt})}_\tau$ read on one grid, by which we mean one set of probes and prompts, then
\begin{equation}
    A^{*}\bigl(\operatorname{UTD}^{(k)}_\tau\bigr) - A^{*}\bigl(\operatorname{UTD}^{(\text{tgt})}_\tau\bigr)
    \;=\; A^{*}\bigl(\operatorname{UTD}^{(k)}_\tau - \operatorname{UTD}^{(\text{tgt})}_\tau\bigr),
    \label{eq:transport-error}
\end{equation}
so the operator the transfer gets wrong is the descriptor residual carried through the same linear map, whose gain in the direction of the $i$th probe and $j$th prompt singular vectors is $s_{a,i} s_{b,j} / ((s_{a,i}^2 + \mu_a)(s_{b,j}^2 + \mu_b))$ by Equation~\eqref{eq:closed-form}. That gain is small where the target's factors carry little energy and where they dominate, and largest in between, so a residual is neither uniformly suppressed nor uniformly passed. Which of the two happens is measurable, and it decides whether the raw agreement of Figure~\ref{fig:utd-verification} is the right quantity to read a transfer against.

\paragraph{Protocol.}
We evaluate both sides of Equation~\eqref{eq:transport-error} on the grids of Section~\ref{sec:exp-addition}, over the four targets that absorb the operator. For each of them, each task and each source family other than the target, we compare the source's operator with the operator the target's own fine-tune yields on that task's grid, which gives $160$ combinations of a target, a task and a source. Agreement is reported in three places, namely between the descriptors, between the operators, and between the shift fields $F_{\text{pre}} X$ that the operators induce on the $22{,}740$ probes of all eight tasks, the last being the operator as evaluation sees it, since evaluation applies an operator to images of every task and not only to those of the task it was fitted on. Against these we set the accuracy that each single-source operator reaches on its task at $\alpha = 0.5$ relative to the operator of the target's own fine-tune, normalized by the gain that own operator achieves over zero-shot, so that the quantity is the fraction of the attainable gain that transport forfeits. Normalizing matters because tasks differ by an order of magnitude in how much accuracy is available to gain, and the $150$ combinations whose attainable gain exceeds two points are the ones on which the fraction is well defined. An atomic operator does not depend on which composition it later enters, so these are the only distinct comparisons available.

\paragraph{The residual is amplified, not suppressed.}
Relative to the descriptor it is fitted to, the residual is amplified by a median factor of $1.5$ with an interquartile range of $1.1$ to $2.3$, and it is attenuated at $19$ percent of the combinations. Two models' operators consequently agree markedly less than their descriptors do, at a mean cosine of $0.39$ against $0.59$, with the shift fields between the two at $0.43$. The descriptors are read here on each task's own grid, whereas Figure~\ref{fig:utd-verification} flattens every descriptor to the grid of all eight tasks, where they agree at $0.47$. The solve therefore does not filter a moderate descriptor residual down to a smaller one, and since the operators nonetheless retain most of the attainable accuracy, accuracy asks far less of an operator than agreement with the target's own.

\paragraph{Agreement in operator space is the better predictor.}
What the solve does instead is redistribute the residual, and Table~\ref{tab:operr} shows that the redistributed version is the one that tracks the outcome. Its rank correlation with the fraction of the attainable gain that transport forfeits is twice as strong after the solve as before it. Which of five candidate sources is the best one is a harder question than how much a given source loses, and the table's last column shows that none of the three agreements answers it reliably, since all three land between $9$ and $10$ of the $32$ groups against $6.4$ by chance and the three are not separated at that count. There is no comparison to report at the level of a composed cell, because the two operators a cell sums are read on grids of different shapes and their descriptors therefore have no sum; a composed descriptor exists only when one grid serves both terms.

\begin{table}[H]
\centering
\small
\caption{Where a cross-model residual is measured decides how well it predicts what transfer costs. Each row compares a source's quantity with that of the target's own fine-tune on the same task and grid over $160$ combinations, and correlates it with the fraction of the attainable gain that the source's operator forfeits at $\alpha = 0.5$ over the $150$ combinations on which that fraction is defined. The last column counts, over the $32$ groups of five candidate sources sharing a target and a task, how often the source that the agreement ranks first is also the most accurate, where chance gives $6.4$; the three do not separate on that question.}
\label{tab:operr}
\begin{tabular}{lccc}
\toprule
Agreement measured on & Mean agreement & Rank correlation & Best source identified \\
\midrule
Descriptors                & $0.59$ & $-0.26$ & $10$ of $32$ \\
Operators                  & $0.39$ & $-0.62$ & $\phantom{0}9$ of $32$ \\
Shift fields on the probes & $0.43$ & $-0.59$ & $\phantom{0}9$ of $32$ \\
\bottomrule
\end{tabular}
\end{table}

Table~\ref{tab:operr-task} makes the failure of the descriptor cosine concrete by task. Cars and EuroSAT are the two tasks on which models' descriptors agree almost equally, at $0.81$ and $0.79$, and they transport very differently, since Cars forfeits $61$ percent of the attainable gain against EuroSAT's $11$. Measured after the solve the contradiction disappears, since Cars shares the lowest operator agreement of the eight with SUN397 at $0.07$ while EuroSAT has the highest at $0.71$. Across the eight tasks the rank correlation with the forfeited fraction is $-0.48$ for descriptors against $-0.83$ for operators and $-0.86$ for shift fields.

\begin{table}[H]
\centering
\small
\caption{Per-task view of the same $160$ combinations, each agreement averaged over the four targets and their five sources. Agreement is a cosine and the gain is that of the target's own operator over zero-shot. The last row gives the fraction of that gain which transport forfeits, averaged over those of the $150$ combinations of Table~\ref{tab:operr} that fall on the task. Cars is the task whose descriptor agreement and transport quality disagree most sharply, and SUN397 is the one on which every measure agrees that transport fails.}
\label{tab:operr-task}
\setlength{\tabcolsep}{4pt}
\begin{tabular}{lcccccccc}
\toprule
& EuroSAT & DTD & Cars & GTSRB & MNIST & RESISC45 & SVHN & SUN397 \\
\midrule
Descriptor agreement    & $0.79$ & $0.41$ & $0.81$ & $0.89$ & $0.51$ & $0.87$ & $0.41$ & $0.06$ \\
Operator agreement      & $0.71$ & $0.25$ & $0.07$ & $0.48$ & $0.56$ & $0.37$ & $0.63$ & $0.07$ \\
Shift-field agreement   & $0.73$ & $0.37$ & $0.13$ & $0.47$ & $0.57$ & $0.42$ & $0.63$ & $0.11$ \\
Gain forfeited & $0.11$ & $0.33$ & $0.61$ & $0.14$ & $0.13$ & $0.24$ & $0.25$ & $0.85$ \\
\bottomrule
\end{tabular}
\end{table}

\paragraph{What this does and does not license.}
Two readings are available and only one is supported. The measurement does not say that the descriptor is the wrong object to transfer, since a descriptor is what a source can hand a target it shares no coordinates with. What it says is that the descriptor cosine answers a different question from the one a reader might take it to answer. Figure~\ref{fig:utd-verification} establishes that descriptors carry a shared, task-specific component at all, which is the claim Section~\ref{sec:functional-invariance} makes and which the cross-task panel is needed for, and it is not a forecast of how well any particular source will serve any particular target. The quantity that forecasts that is target-dependent by construction, since it is defined only after a target's factors are fixed, so it could not have played the earlier role. Nothing here is used to select sources in the experiments, where every cell of every table is reported and none is chosen; the measurement is diagnostic and explains why a single transferred operator forfeits on average $30$ percent of the gain the target's own operator attains, over the $150$ combinations of Table~\ref{tab:operr}, while remaining far above the alternatives of Section~\ref{sec:exp-addition}.

\subsection{Is a Low-Rank Prior Helpful?}
\label{app:rank}

Low-rank edits are the norm in parameter-efficient adaptation, and one might expect a rank budget to act as a useful prior on the operator. It does not. A descriptor read against a task's own landmarks constrains the operator on at most as many directions as that task defines prompts, which across the eight tasks runs from ten to $397$, so the question is asked at six distinct ceilings rather than at one. We solve Equation~\eqref{eq:lsq} under an explicit rank cap by reduced-rank regression, which truncates in the metric that $F_{\text{pre}}$ and $G^\top$ induce rather than truncating the unconstrained solution, and select the cap by held-out reconstruction of the descriptor on a fifth of the task's probes, a criterion that uses no labels and no evaluation data.

\begin{table}[H]
\centering
\small
\caption{Held-out descriptor reconstruction error of the closed form against the rank cap $r$, pooled over the four targets that absorb the operator and their five cross-model sources, with the cap each task selects under that label-free criterion. A dash marks a cap the closed form cannot reach on that task, since a descriptor read against a task's own landmarks constrains the operator on at most as many directions as the task has prompts. Each entry averages over the ridge settings, whereas the criterion selects the cap and the ridge jointly, and on every task it takes the largest cap the sweep offers below the ceiling. On seven of the eight tasks the averaged error falls with the cap; on SVHN it rises instead, which is also the task on which the closed form is least accurate.}
\label{tab:rank}
\setlength{\tabcolsep}{4pt}
\begin{tabular}{lcccccccccc}
\toprule
& & \multicolumn{8}{c}{Rank cap $r$} & \\
\cmidrule(lr){3-10}
Task & Prompts & $2$ & $4$ & $8$ & $16$ & $32$ & $64$ & $128$ & $256$ & Selected $r$ \\
\midrule
EuroSAT & $10$ & $0.562$ & $0.490$ & $0.445$ & --- & --- & --- & --- & --- & $8$ \\
DTD & $47$ & $0.694$ & $0.657$ & $0.637$ & $0.629$ & $0.629$ & --- & --- & --- & $32$ \\
Cars & $196$ & $0.397$ & $0.375$ & $0.355$ & $0.341$ & $0.336$ & $0.334$ & $0.334$ & --- & $128$ \\
GTSRB & $43$ & $0.345$ & $0.322$ & $0.305$ & $0.295$ & $0.293$ & --- & --- & --- & $32$ \\
MNIST & $10$ & $0.725$ & $0.615$ & $0.552$ & --- & --- & --- & --- & --- & $8$ \\
RESISC45 & $45$ & $0.352$ & $0.336$ & $0.321$ & $0.310$ & $0.306$ & --- & --- & --- & $32$ \\
SVHN & $10$ & $0.816$ & $0.821$ & $0.832$ & --- & --- & --- & --- & --- & $8$ \\
SUN397 & $397$ & $0.527$ & $0.507$ & $0.489$ & $0.473$ & $0.462$ & $0.457$ & $0.454$ & $0.454$ & $256$ \\
\bottomrule
\end{tabular}
\end{table}

Table~\ref{tab:rank} reports that error at every cap, and accuracy agrees with it. Paired over the $100$ combinations of a target, a task and a source in which the cap of $16$, equal to the adapter rank $r_{\text{LoRA}}$, lies below the task's own ceiling, the selected cap reaches $75.91$ percent on the transferred task against $74.15$ at that cap. SVHN is the one task whose reconstruction error, averaged over the ridge settings, rises with the cap rather than falling, and it is also the task on which the route is least accurate, at $47.64$ percent at the selected cap. A rank budget is therefore a cost rather than a benefit at every ceiling we can test, so the unconstrained closed form loses nothing by not imposing one. It also matters for Proposition~\ref{prop:linearity}, because truncation is nonlinear in $U$, so a rank-capped solve would forfeit the linearity that lets a strength be swept and a bank entry be scaled without solving again.

\subsection{What the Descriptor Does Not Determine}
\label{app:underdetermined}

Equation~\eqref{eq:factorization} implies that a descriptor constrains the shift field only inside $\operatorname{span}(G)$, leaving $d - \operatorname{rank}(G)$ directions unconstrained, since adding to the shift field any matrix whose rows are orthogonal to every landmark produces the same descriptor. What a descriptor determines is therefore an equivalence class of shift fields, and the ridge terms select its minimum-norm representative, which is a choice and not a consequence. We tested two alternatives and report both as negative results in Appendix~\ref{app:ablations}, namely transporting the full shift field by a Procrustes map fitted between models, which is strictly worse than using the descriptor despite constraining as many directions as the embedding has rather than as many as the task has prompts, and whitening the prompt Gram, which lowers cross-model agreement further than it lowers the different-task control and therefore shrinks the gap that makes descriptors discriminative.

\section{The Hypothesis under the PRH Factorization}
\label{app:theory}

This appendix shows how Hypothesis~\ref{hyp:consistency} follows to first order from the near-isometric factorization behind PRH once the platonic task vector is posited, and defines the diagnostic whose value Section~\ref{sec:method-connection} quotes.

\paragraph{Setup.}
Identify $\mathcal{V}$ with $\mathbb{R}^{d_\star}$, in which images and prompts have unit-norm shared representations $\phi(x)$ and $\psi(p)$. PRH holds that each model's encoders factor through this space by a model-specific linear map $T^{(k)} : \mathbb{R}^{d_\star} \to \mathbb{R}^{d^{(k)}}$,
\begin{equation}
    f^{(k)}(x) \approx \frac{T^{(k)}\phi(x)}{\|T^{(k)}\phi(x)\|},
    \qquad
    g^{(k)}(p) \approx \frac{T^{(k)}\psi(p)}{\|T^{(k)}\psi(p)\|},
    \label{eq:app-factorization}
\end{equation}
that is close to an isometry, with isometry defect $\kappa^{(k)}$ the smallest constant for which $|\langle T^{(k)}\xi, T^{(k)}\xi' \rangle - \langle \xi, \xi' \rangle| \le \kappa^{(k)} \|\xi\| \|\xi'\|$ for all $\xi, \xi' \in \mathbb{R}^{d_\star}$, and it posits $\kappa^{(k)} \to 0$ as models grow~\citep{huh2024platonic}. The platonic task vector enters as the change fine-tuning makes to the shared representation, $\phi(x) \mapsto \phi(x) + \Pi_\tau(x)$, the same for every model, with $\rho_\tau = \max_i \|\Pi_\tau(x_i)\|$ small over the probes. We take $\Pi_\tau(x)$ tangent to the unit sphere at $\phi(x)$, which loses nothing, since renormalization removes a radial component to first order and leaves it unidentifiable from any observation.

\paragraph{Derivation.}
The squared norm of $T^{(k)}(\phi + \Pi_\tau)$, arguments suppressed, is $\|T^{(k)}\phi\|^2 + 2\langle T^{(k)}\phi, T^{(k)}\Pi_\tau \rangle + \|T^{(k)}\Pi_\tau\|^2$, whose middle term is $O(\kappa^{(k)}\rho_\tau)$ by tangency and near-isometry and whose last term is $O(\rho_\tau^2)$, so to first order the fine-tuned embedding is normalized as the pre-trained one is and
\begin{equation}
    f^{(k)}_{\text{ft}}(x_i) - f^{(k)}_{\text{pre}}(x_i)
    \;=\; T^{(k)}\Pi_\tau(x_i) + O(\rho_\tau^2) + O(\kappa^{(k)}\rho_\tau).
    \label{eq:app-tangent-projection}
\end{equation}
Reading this shift against $g^{(k)}(p_j)$, which differs from $T^{(k)}\psi(p_j)$ by $O(\kappa^{(k)})$, and applying near-isometry once more gives
\begin{equation}
    \bigl[\operatorname{UTD}^{(k)}_\tau\bigr]_{ij}
    \;=\; \bigl\langle \Pi_\tau(x_i), \psi(p_j) \bigr\rangle + \bigl[E^{(k)}_\tau\bigr]_{ij},
    \qquad
    \bigl|\bigl[E^{(k)}_\tau\bigr]_{ij}\bigr| \;\le\; C_1 \rho_\tau^2 + C_2\, \kappa^{(k)} \rho_\tau,
    \label{eq:residual-identification}
\end{equation}
with $C_1$ and $C_2$ constants of the sphere's geometry independent of $k$ and $\tau$. This is Hypothesis~\ref{hyp:consistency} with $\mathcal{A}(\Pi_\tau)$ as its shared term, and its residual has a part quadratic in the displacement, of one size for every model, and a part that grows with the model's isometry defect, which is where models differ. Summing over the $NM$ entries gives $\|E^{(k)}_\tau\|_F \le \sqrt{NM}\,(C_1\rho_\tau^2 + C_2\kappa^{(k)}\rho_\tau)$, so $\eta$ is small when the displacement and the defects are both small against the shared signal per entry. What the factorization does not produce is the premise that one displacement arises in every model, which is a statement about training dynamics and would require the tangent kernels~\citep{jacot2018neural} of different models' affinities to agree on the directions that matter, the regime in which task arithmetic has been analyzed~\citep{ortiz2023task}. That premise is the content of the platonic task vector, and Section~\ref{sec:functional-invariance} is its empirical support.

\subsection{Measuring the Residual}
\label{app:error}

Since $\mathcal{A}(\Pi_\tau)$ is not observed, $\eta$ cannot be read off directly, and we probe it by the plug-in dispersion diagnostic
\begin{equation}
    \hat{\eta}^{(k)}_\tau
    \;=\;
    \frac{\bigl\| \operatorname{UTD}^{(k)}_\tau - \overline{\operatorname{UTD}}_\tau \bigr\|_F}
         {\bigl\| \overline{\operatorname{UTD}}_\tau \bigr\|_F},
    \qquad
    \overline{\operatorname{UTD}}_\tau = \frac{1}{K} \sum_{k'=1}^{K} \operatorname{UTD}^{(k')}_\tau ,
    \label{eq:eta-hat}
\end{equation}
where $k'$ is a dummy index over the $K$ models and $\overline{\operatorname{UTD}}_\tau$ is the descriptor of task $\tau$ averaged over models. Its mean over all evaluated pairs of a model and a task is $0.99$, with family means between $0.77$ and $1.27$. This diagnostic and Figure~\ref{fig:utd-verification} read every descriptor on one probe set spanning the eight tasks rather than on each task's own, which the cross-task panel requires, since descriptors measured on grids of different shapes cannot be compared at all, and it is a measurement of the descriptors rather than a fit. Its numerator equals $\|E^{(k)}_\tau - \overline{E}_\tau\|_F$ with $\overline{E}_\tau$ the residual averaged over models, so that any residual shared by all models cancels, and its denominator equals $\|\mathcal{A}(\Pi_\tau) + \overline{E}_\tau\|_F$. The triangle inequality bounds the numerator by $2\eta \|\mathcal{A}(\Pi_\tau)\|_F$ and, when $\eta < 1$, the denominator from below by $(1-\eta)\|\mathcal{A}(\Pi_\tau)\|_F$, so $\hat{\eta}^{(k)}_\tau \le 2\eta / (1-\eta)$ and $\eta \ge \hat{\eta}^{(k)}_\tau / (2 + \hat{\eta}^{(k)}_\tau)$, a bound that holds trivially when $\eta \ge 1$, and the observed value therefore implies $\eta \ge 0.33$. The diagnostic gives no upper bound. The pairwise agreement of Section~\ref{sec:functional-invariance} locates $\eta$ more closely under an idealization, since if two models' residuals were orthogonal to each other and to the shared component and of equal norm $\eta \|\mathcal{A}(\Pi_\tau)\|_F$, their descriptors would meet at cosine $1 / (1 + \eta^2)$, and the measured $0.473$ then corresponds to $\eta$ close to one. The shared component and the residual are therefore of comparable norm, which is the distance from the platonic task vector at which the experiments measure transfer and composition.

\section{Experimental Setup Details}
\label{app:setup}

\subsection{Model Checkpoints}
\label{app:inventory}

Table~\ref{tab:inventory} lists every checkpoint that supplies a descriptor; the four cross-scale targets of Appendix~\ref{app:scale} only receive one. All follow the CLIP-style dual-encoder setup with separately encoded image and text streams projected into a shared space of dimension $d^{(k)}$. Sources are fine-tuned on the query and value projections of the vision encoder's self-attention only, and text encoders are frozen throughout. Whether a family can receive a descriptor as well as supply one is decided by whether one layer carries the whole embedding, which is what the edit of Equation~\eqref{eq:absorb} folds into, or, when the head sums a residual branch, whether its last affine reaches the embedding on its own (Appendix~\ref{app:head}).

\begin{table}[H]
\centering
\small
\caption{Models used in this work, with $d^{(k)}$ the embedding dimension and the adapted modules those fine-tuned to produce a descriptor. All vision sources are at Large scale. The two SigLIP families pool through an attention head whose output is a residual sum, so they receive an edit through that head's last affine rather than through Equation~\eqref{eq:absorb}.}
\label{tab:inventory}
\setlength{\tabcolsep}{4pt}
\begin{tabular}{llcl}
\toprule
Family & Identifier & $d^{(k)}$ & Adapted modules \\
\midrule
CLIP      & \texttt{openai/clip-vit-large-patch14}         & $768$  & \texttt{q\_proj}, \texttt{v\_proj} \\
OpenCLIP  & \texttt{laion/CLIP-ViT-L-14-laion2B-s32B-b82K} & $768$  & \texttt{q\_proj}, \texttt{v\_proj} \\
MetaCLIP  & \texttt{facebook/metaclip-l14-400m}            & $768$  & \texttt{q\_proj}, \texttt{v\_proj} \\
EVA-CLIP  & \texttt{EVA02-L-14} (via \texttt{open\_clip})  & $768$  & \texttt{attn.\{q,v\}\_proj} \\
SigLIP    & \texttt{google/siglip-large-patch16-256}       & $1024$ & \texttt{q\_proj}, \texttt{v\_proj} \\
SigLIP2   & \texttt{google/siglip2-large-patch16-256}      & $1024$ & \texttt{q\_proj}, \texttt{v\_proj} \\
\midrule
CLAP      & \texttt{unfused}, \texttt{fused} ($768$-wide, $12$ blocks)   & $512$ & \texttt{query}, \texttt{value} \\
CLAP      & \texttt{large}, \texttt{large-ms} ($1024$-wide, $18$ blocks) & $512$ & \texttt{query}, \texttt{value} \\
\bottomrule
\end{tabular}
\end{table}

\subsection{Sources, Probes and Hyperparameters}
\label{app:datasets}

\textbf{Source models.} Each of the $6 \times 8 = 48$ vision sources is a low-rank fine-tune~\citep{hu2022lora} of rank $r_{\text{src}} = 4$ and scaling factor $\alpha_{\text{src}} = 8$ of the source vision encoder, trained with AdamW for $3$ epochs, or $1$ for SUN397. The EuroSAT and DTD sources of CLIP, OpenCLIP, MetaCLIP and SigLIP use learning rate $3 \times 10^{-4}$, batch $64$ and weight decay $10^{-4}$, and the remaining forty use learning rate $10^{-4}$, batch $32$ and weight decay $0.01$. The audio sources are the corresponding rank-$4$ CLAP fine-tunes. Sources exist only to supply descriptors and are never transferred as parameters, so the target reads a source only through its descriptor.

\textbf{Probe grid.} Probes are unlabeled images and prompts are class-name templates. A grid holds one task, namely the task whose descriptor is being measured and transferred, so a solve never sees another task's data. A task's pool holds thirty images for each class that task defines, so a grid ranges from $300$ images and $10$ prompts for a ten-class task up to $11{,}910$ images and $397$ prompts for SUN397. The images are drawn uniformly at random from the task's data as a whole rather than allotted class by class, so the draw reads no label and a pool is representative of its classes only in expectation. Every task carries fewer prompts than any target has embedding dimensions, the largest being SUN397's $397$ against the $512$ of the narrowest target, so $G$ can have full row rank on every grid the paper solves on, and does on all $48$ grids of the six Large targets. In the audio setting each task is split once at seed $42$ into a probe half, which also serves to fine-tune the sources, and a disjoint test half used only for evaluation. ESC-50's grid holds all $1{,}000$ clips of its probe half, fewer than its quota of $1{,}500$, with $50$ prompts, and GTZAN's holds $300$ of the $499$ clips of its probe half with $10$ prompts. The datasets of Section~\ref{sec:exp-holdout} are probed at twenty images per class rather than thirty, which gives grids of $900$, $600$ and $420$ images with $45$, $30$ and $21$ prompts.

\textbf{Probe splits.} Probes carry no labels into either realization, and every task supplies them from its training split, the split its sources were fine-tuned on, while its accuracy is measured on its held-out test split, so no evaluation image enters any probe set and probe and evaluation data are disjoint by construction throughout the paper. The baselines that consume labeled or calibration examples, namely GradFix, Theseus, the larger gradient-ascent budget, distillation and supervised tuning, read the same training-split probes. The two controls never enter a probe set at all.

\textbf{Solver.} Ridge coefficients $\mu_a = 10^{-2}\,\overline{s_a^2}$ and $\mu_b = 10^{-6}\,\overline{s_b^2}$, full rank, float64 for the solve and float32 for evaluation, seed $42$.

\textbf{Compute.} All experiments run on four NVIDIA RTX A6000 GPUs under PyTorch 2.6 and Python 3.10.

\textbf{Controls.} CIFAR-100 and ImageNet at $10$ images per class with seed $42$, except that Appendix~\ref{app:holdout-full} scores CIFAR-100 on its full test split; UrbanSound8K at $50$ per class in the audio setting, since it has only $10$ classes and $10$ per class would be a $100$-clip control.

\subsection{Algorithms}
\label{app:algorithms}

\begin{algorithm}[H]
\caption{Atomic solves for one target}
\label{alg:atomic}
\begin{algorithmic}[1]
\REQUIRE target $\mathcal{M}_{\text{tgt}}$; for each task $\tau$ its probes $\{x_i\}$ and prompts $\{p_j\}$; source descriptors $\{\operatorname{UTD}^{(k)}_\tau\}$
\ENSURE operator bank $\{A^{*(k)}_{\tau}\}$, with $\Sigma_a^{+}$ and $\Sigma_b^{+}$ the spectral filters of Equation~\eqref{eq:closed-form}
\FOR{each task $\tau$}
    \STATE $F_\tau \leftarrow$ row-stack unit-norm $f^{(\text{tgt})}_{\text{pre}}(x_i)$; \quad $G_\tau \leftarrow$ row-stack unit-norm $g^{(\text{tgt})}(p_j)$
    \STATE $L_a \Sigma_a R_a^\top \leftarrow \operatorname{svd}(F_\tau)$; \quad $L_b \Sigma_b R_b^\top \leftarrow \operatorname{svd}(G_\tau^\top)$ \COMMENT{target and task only}
    \FOR{each source model $k$}
        \STATE $A^{*(k)}_{\tau} \leftarrow \bigl(R_a \Sigma_a^{+} (L_a^\top \operatorname{UTD}^{(k)}_\tau R_b) \Sigma_b^{+} L_b^\top\bigr)^{\!\top}$ \COMMENT{three matrix products}
    \ENDFOR
\ENDFOR
\RETURN $\{A^{*(k)}_{\tau}\}$
\end{algorithmic}
\end{algorithm}

\begin{algorithm}[H]
\caption{Any edit, from the bank}
\label{alg:edit}
\begin{algorithmic}[1]
\REQUIRE operator bank; operation with terms $(\ell, \sigma(\ell), \tau_\ell)$ and signed weights $c_\ell$; strength $\alpha$
\ENSURE edited target
\STATE $A \leftarrow \sum_\ell c_\ell\, A^{*(\sigma(\ell))}_{\tau_\ell}$ \COMMENT{one embedding space, no solve}
\STATE $W_p \leftarrow (I + \alpha A)\, W_p$ \COMMENT{or the head update of Appendix~\ref{app:head} on SigLIP and SigLIP2}
\RETURN the target with its last layer replaced
\end{algorithmic}
\end{algorithm}

Addition sets every $c_\ell = 1$ and negation uses a single term with $c_1 = -\lambda$. Line~1 of Algorithm~\ref{alg:edit} is the whole of composition. It is well defined because every entry of the bank is a matrix on the target's one embedding space, whichever task's probes and prompts produced it, and it is the composition Equation~\eqref{eq:operator} prescribes; Proposition~\ref{prop:linearity} adds that when the summed entries share a grid the line also returns the operator of the composed descriptor (Appendix~\ref{app:linearity}). Because $\operatorname{normalize}\bigl((I + \alpha A) W_p h(x)\bigr)$ depends on $W_p h(x)$ only through its direction, an edit is scored from unit-norm base features cached once per target with one matrix product per point of a sweep, bit-identically to the real weight edit of Equation~\eqref{eq:absorb}, which is why every sweep in the paper is dense. In cost, Algorithm~\ref{alg:atomic} spends $(N + M)$ encoder passes and two singular value decompositions once per task, both reused across that task's source descriptors, and $O(Nd\,M + d^2 M)$ per solve, while Algorithm~\ref{alg:edit} spends $O(d^2)$ per term plus one $d \times d$ matrix product to absorb.

\section{The Adapter Realization}
\label{app:adapter}

\paragraph{Objective.}
The adapter route minimizes Equation~\eqref{eq:match} over the parameters $\Delta\theta$ of a low-rank adapter~\citep{hu2022lora} of rank $r_{\text{LoRA}} = 16$ and scaling factor $\alpha_{\text{LoRA}} = 32$, attached to the query and value projections of every attention layer of the target's vision encoder, with the text encoder and every other weight frozen. A minibatch draws probe images from the union of the terms' probes, each image is read against the prompts of its own task only, and its loss is the mean squared difference, over those prompts, between the change of its affinities and the prescribed shift $c_\ell\, [\operatorname{UTD}^{(\sigma(\ell))}_{\tau_\ell}]_{i\cdot}$, so every probe image weighs equally, which is the normalization written into Equation~\eqref{eq:match}. The unedited affinities the change is measured against are computed by the same forward pass with the adapter disabled, so an adapter at its initialization, whose second factor is zero, induces exactly zero change, and its loss is then the mean squared prescribed shift.

\paragraph{Recipe.}
AdamW with learning rate $3 \times 10^{-4}$ and weight decay $10^{-4}$, batch $64$ and a cosine schedule, for $20$ epochs over the edit's probes capped at $540$ optimizer steps, so a single edit sees at most $34{,}560$ probe presentations, the cap binding only when the edit's probes exceed $1{,}728$ images. The forward pass runs in bfloat16 with the loss in single precision. Every cell starts from one fixed adapter initialization and one fixed order of minibatches, both drawn at seed $42$, so two cells differ only in their descriptors and probes. The adapter realizes the prescribed shift itself, $c_\ell = 1$ for every addition term and $c_1 = -\lambda$ with $\lambda = 1$ for negation, as discussed under strength below.

\paragraph{Evaluation.}
An edited target is scored by a real forward pass through the adapted encoder on the full evaluation splits and on the two control draws the closed form is scored on. The unedited target scored by the same bfloat16 forward pass reproduces the single-precision zero-shot accuracies of the closed form to within $0.5$ points on every split of every target in Tables~\ref{tab:addition-main} and~\ref{tab:negation-main} and by $+0.03$ points on average, so their zero-shot rows serve both routes. On the held-out datasets of Table~\ref{tab:holdout} the largest such offset is $0.72$ points, on the UCMerced split of CLIP-L.

\paragraph{Fit and cost.}
Writing the relative residual of a cell as its final loss over the loss of the untrained adapter, the median over the $438$ addition cells is $0.041$ and over the $134$ negation cells $0.077$, so the adapter reproduces the prescribed shift on the probes closely. A median addition cell trains for $540$ steps in about ten minutes on one A6000, and a median negation cell for $420$ steps in about four; the closed form solves a target's whole bank of $48$ operators in seconds, after which each cell is a matrix sum.

\paragraph{Which cells the adapter covers.}
Its rows in the any-family blocks cover one draw per target and task pair for addition and one per target and task for negation, fixed at seed $42$ before any cell was fitted, so each subset contains every target and every task in the same proportion as the full block and its zero-shot row equals the full block's. The closed form on those subsets reaches $79.05$ and $72.65$ on the composed tasks against $79.08$ and $72.60$ over all $3{,}360$ cells, and removes $34.5$ points of the targeted task against $35.2$ over all $240$, so the subsets are representative of the populations they are drawn from. Both operations keep every same-model and every architecture cell, which are the ones the baselines are read against.

\paragraph{Repeatability.}
Ten cells belong to both the architecture block and the any-family draw and were fitted twice, from the same initialization, batch order and data, and their accuracies differ by $0.19$ points on average and by at most $1.10$, which is the nondeterminism of the bfloat16 forward and backward passes on the GPU and a floor on how finely two adapter rows can be compared cell by cell.

\paragraph{Strength.}
The adapter's addition rows realize the composed descriptor at full strength, $c_\ell = 1$, which is what Equation~\eqref{eq:match} prescribes for addition. We first fitted the $28$ any-family cells of the OpenCLIP-L target at $c_\ell = 0.5$, the strength the closed form applies to its operator, and there the adapter realizes half of each task's shift and gains $6.41$ and $1.86$ points less than the closed form on the two tasks, while holding CIFAR-100 and ImageNet $1.04$ and $1.61$ points closer to zero-shot. The closed form's strength scales an operator that acts on every embedding, whereas the adapter's scales the shift it is fitted to reproduce on the probes, so the two numbers do not name the same edit size, and we report each route at the strength its own construction fixes, $\alpha = 0.5$ for the operator as in Appendix~\ref{app:addition-full} and full strength for the fitted descriptor.

\section{Full Addition Results}
\label{app:addition-full}

\subsection{Strength Sweep}

Table~\ref{tab:add-alpha} gives the full sweep behind Table~\ref{tab:addition-main}. Acquisition has stopped growing by $\alpha = 0.5$, where the mean over the two composed tasks peaks, task A gaining only $0.3$ points more at $\alpha = 1$ while task B loses $1.2$, and control damage keeps growing, so the useful range ends where the sweep begins. This is the expected shape, because the objective of Equation~\eqref{eq:lsq} contains no term for what must be preserved, so nothing in the fit resists moving inputs it was never shown.

\begin{table}[H]
\centering
\small
\caption{Addition against strength, over all $3{,}360$ cross-model cells of the six targets, as change from zero-shot in percentage points. Single-source rows are paired to the same cells, so the difference between Ours and Single-A on task A is the per-cell cost of composing.}
\label{tab:add-alpha}
\begin{tabular}{llcccc}
\toprule
$\alpha$ & Method & $\Delta$ Task A & $\Delta$ Task B & $\Delta$ CIFAR-100 & $\Delta$ ImageNet \\
\midrule
\multirow{3}{*}{$0.5$} & Ours & $+18.11$ & $+11.63$ & $\phantom{0}-1.81$ & $\phantom{0}-3.25$ \\
 & Single-A & $+19.58$ & $-1.12$ & $\phantom{0}-0.77$ & $\phantom{0}-1.32$ \\
 & Single-B & $-1.53$ & $+12.87$ & $\phantom{0}-0.94$ & $\phantom{0}-1.83$ \\
\midrule
\multirow{3}{*}{$1.0$} & Ours & $+18.44$ & $+10.45$ & $\phantom{0}-8.18$ & $-12.74$ \\
 & Single-A & $+21.79$ & $-4.39$ & $\phantom{0}-3.64$ & $\phantom{0}-5.47$ \\
 & Single-B & $-5.46$ & $+13.77$ & $\phantom{0}-4.55$ & $\phantom{0}-7.54$ \\
\midrule
\multirow{3}{*}{$1.5$} & Ours & $+16.59$ & $\phantom{0}+7.16$ & $-17.21$ & $-23.93$ \\
 & Single-A & $+21.53$ & $-8.26$ & $\phantom{0}-7.98$ & $-10.96$ \\
 & Single-B & $-10.00$ & $+12.37$ & $-10.04$ & $-14.98$ \\
\midrule
\multirow{3}{*}{$2.0$} & Ours & $+14.50$ & $\phantom{0}+3.65$ & $-26.46$ & $-34.14$ \\
 & Single-A & $+20.69$ & $-12.13$ & $-13.02$ & $-16.80$ \\
 & Single-B & $-14.40$ & $+10.08$ & $-16.33$ & $-22.70$ \\
\bottomrule
\end{tabular}
\end{table}

\subsection{Per-Target Breakdown}

Table~\ref{tab:add-per-target} resolves Table~\ref{tab:addition-main} by target.

\begin{table}[H]
\centering
\small
\caption{Addition at $\alpha = 0.5$ by target, $560$ cells each, as change from zero-shot. The four targets that absorb the operator acquire the composed tasks by $13.04$ to $18.17$ points and differ far more in what they give up, EVA-CLIP paying roughly five times what OpenCLIP pays on ImageNet. The two that receive an update to a pooling head instead gain a little less and sit inside the range the others occupy on both controls, so the absorption route is not what decides the trade. What a target gains and what it pays are separate axes.}
\label{tab:add-per-target}
\begin{tabular}{lcccc}
\toprule
Target & $\Delta$ Composed tasks & $\Delta$ CIFAR-100 & $\Delta$ ImageNet & Zero-shot CIFAR-100 \\
\midrule
CLIP-L     & $+17.07 \pm 9.54$ & $-1.88$ & $-4.07$ & $74.2$ \\
OpenCLIP-L & $+18.17 \pm 10.86$ & $-0.62$ & $-1.42$ & $74.7$ \\
MetaCLIP-L & $+16.24 \pm 9.68$ & $-1.83$ & $-2.74$ & $71.0$ \\
EVA-CLIP-L & $+13.04 \pm 8.06$ & $-2.46$ & $-7.43$ & $87.9$ \\
\midrule
SigLIP-L   & $+11.75 \pm 8.46$ & $-1.62$ & $-1.48$ & $72.0$ \\
SigLIP2-L  & $+12.93 \pm 7.35$ & $-2.44$ & $-2.35$ & $75.1$ \\
\bottomrule
\end{tabular}
\end{table}

\subsection{Per-Task Breakdown}
\label{app:add-per-task}

Table~\ref{tab:add-per-task} resolves the same cells by composed task, each task entering $840$ of them, and sets beside each the gain that the target's own descriptors reach on that task through the same operator, which is Oracle. Six of the eight tasks retain between $62$ and $87$ percent of that gain, and on those six every one of the $180$ combinations of a target, a source and a task gains on average. By source family, the mean gain on the task a family supplies runs from $13.82$ points for SigLIP to $15.74$ for MetaCLIP. Cars and SUN397, the two tasks with the most classes, are where the target's own descriptors gain least, about two points, and where a transferred descriptor gains nothing, and all $38$ combinations whose mean gain is not positive fall on those two tasks.

\begin{table}[H]
\centering
\small
\caption{Addition at $\alpha = 0.5$ by composed task over the $3{,}360$ cross-model cells of the six targets, as change from zero-shot in percentage points, with the spread over cells. Oracle averages the $42$ cells per task in which the target supplies its own descriptors. The retained fraction is the ratio of the two gains and is omitted where Oracle gains under three points, since the ratio is then dominated by noise.}
\label{tab:add-per-task}
\setlength{\tabcolsep}{4pt}
\begin{tabular}{lccc}
\toprule
Task & $\Delta$ Ours & $\Delta$ Oracle & Fraction retained \\
\midrule
EuroSAT  & $+33.55 \pm 4.23$  & $+38.76$ & $0.87$ \\
DTD      & $+6.37 \pm 2.47$   & $+10.20$ & $0.62$ \\
Cars     & $-0.30 \pm 2.42$   & $+1.90$  & --- \\
GTSRB    & $+30.24 \pm 5.03$  & $+35.81$ & $0.84$ \\
MNIST    & $+28.45 \pm 13.35$ & $+33.75$ & $0.84$ \\
RESISC45 & $+12.85 \pm 2.40$  & $+17.36$ & $0.74$ \\
SVHN     & $+8.99 \pm 6.51$   & $+13.92$ & $0.65$ \\
SUN397   & $-1.23 \pm 2.78$   & $+2.00$  & --- \\
\bottomrule
\end{tabular}
\end{table}

\subsection{Per-Cell View}

Table~\ref{tab:addition-main} reports means, and a mean can hide a bimodal population, so Figure~\ref{fig:pairs} plots the cells themselves. Each marker is one cell on the OpenCLIP-L target, placed by its accuracy on the two composed tasks, each normalized by that task's fine-tuning ceiling. Zero-shot cells fall on a grid set by the zero-shot accuracy of each task, and the composed cells move up and to the right of it together. A single-source edit cannot produce that pattern, because it leaves one of the two coordinates where it was.

\begin{figure}[H]
    \centering
    \includegraphics[width=0.42\linewidth]{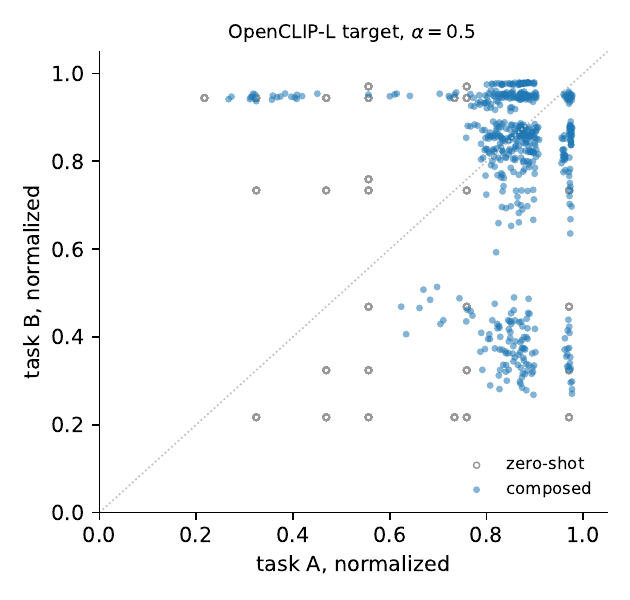}
    \caption{Addition at $\alpha = 0.5$ on the OpenCLIP-L target, one marker per cell, with each axis normalized by the per-task fine-tuning ceiling of Table~\ref{tab:ft-ceilings}. The composed operator moves both coordinates at once.}
    \label{fig:pairs}
\end{figure}

\subsection{The Cost of Acquiring a Task}
\label{app:ft-cost}

The strength $\alpha$ buys accuracy on the composed tasks and gives up accuracy on the two controls, and how large that second quantity is cannot be judged in isolation. The natural reference is what acquiring a task costs when the task is learned directly, since that is the single-task adaptation descriptor transfer approximates and no transfer method can be expected to leave the controls in better shape than the adaptation it reproduces. Table~\ref{tab:ft-cost} applies each target's own source for one task to that target and reports both controls as a change from that target's zero-shot accuracy.

\begin{table}[H]
\centering
\small
\caption{Change in the two control benchmarks when a target is fine-tuned on one task, in percentage points against that target's own zero-shot accuracy. Each entry applies the target's own source for the row task to the target itself, which is the single-task adaptation descriptor transfer approximates. The upper block is CIFAR-100 and the lower block ImageNet.}
\label{tab:ft-cost}
\begin{tabular}{lccccc}
\toprule
Task & CLIP-L & OpenCLIP-L & MetaCLIP-L & EVA-CLIP-L & Mean \\
\midrule
\multicolumn{6}{l}{\textit{CIFAR-100}} \\
EuroSAT & $-0.60$ & $-0.00$ & $-9.70$ & $-4.40$ & $-3.68$ \\
DTD & $-0.60$ & $-0.10$ & $-0.80$ & $-2.80$ & $-1.08$ \\
Cars & $+0.90$ & $+0.30$ & $+1.80$ & $+0.40$ & $+0.85$ \\
GTSRB & $-27.90$ & $-29.70$ & $-23.80$ & $-41.90$ & $-30.83$ \\
MNIST & $-2.80$ & $-3.80$ & $-3.80$ & $-4.70$ & $-3.78$ \\
RESISC45 & $+0.70$ & $-2.80$ & $+0.90$ & $-5.10$ & $-1.58$ \\
SVHN & $-27.60$ & $-13.50$ & $-18.30$ & $-11.50$ & $-17.73$ \\
SUN397 & $-0.00$ & $+0.60$ & $+1.90$ & $+0.30$ & $+0.70$ \\
\cmidrule(lr){1-6}
Mean & $-7.24$ & $-6.13$ & $-6.48$ & $-8.71$ & $-7.14$ \\
\midrule
\multicolumn{6}{l}{\textit{ImageNet}} \\
EuroSAT & $+0.12$ & $-0.15$ & $+0.04$ & $-0.30$ & $-0.07$ \\
DTD & $-0.04$ & $-0.64$ & $+0.20$ & $-1.99$ & $-0.62$ \\
Cars & $+0.91$ & $+0.20$ & $+0.63$ & $+0.33$ & $+0.52$ \\
GTSRB & $-0.17$ & $-0.63$ & $-0.12$ & $-4.64$ & $-1.39$ \\
MNIST & $-0.33$ & $-0.22$ & $-0.06$ & $-0.40$ & $-0.25$ \\
RESISC45 & $+0.38$ & $+0.32$ & $+0.45$ & $-0.64$ & $+0.13$ \\
SVHN & $-0.25$ & $-0.47$ & $-0.46$ & $-1.72$ & $-0.72$ \\
SUN397 & $+0.05$ & $+0.49$ & $+0.60$ & $-0.68$ & $+0.12$ \\
\cmidrule(lr){1-6}
Mean & $+0.08$ & $-0.14$ & $+0.16$ & $-1.25$ & $-0.29$ \\
\bottomrule
\end{tabular}
\end{table}

Two readings follow. Fine-tuning on a single task moves CIFAR-100 by $-7.14$ points on average over the $32$ pairs of a target and a task and by as much as $-41.90$, so the $-1.70$ points that composing two operators costs on the same four targets at $\alpha = 0.5$ sits inside the range single-task adaptation itself occupies rather than outside it. The damage is also concentrated rather than uniform, in that GTSRB accounts for $-30.83$ points and SVHN for $-17.73$ while the remaining six tasks average $-1.43$, so the average is carried by two of the eight rather than spread across them.

The comparison does not favor the operator on every axis. Fine-tuning leaves ImageNet essentially where it found it, at $-0.29$ points on average, whereas the operator gives up $3.92$ points of it on those targets at $\alpha = 0.5$. A weight edit confined to one task's own gradient directions stays local in a way a dense map on the embedding space does not, which is the same structural difference that separates GradFix from the closed form under negation in Appendix~\ref{app:baselines-neg}.

\subsection{Multi-Task Scaling}
\label{app:ksweep}

\begin{figure}[H]
    \centering
    \includegraphics[width=0.55\linewidth]{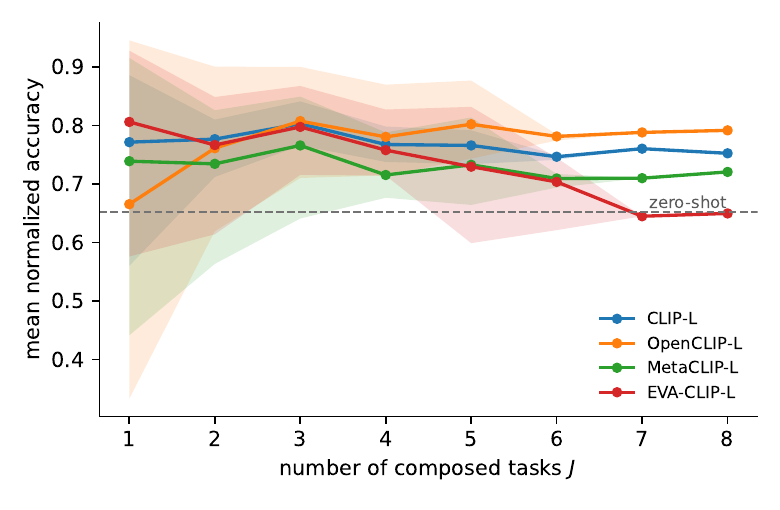}
    \caption{Mean accuracy over the $J$ composed tasks, normalized by the per-task fine-tuning ceiling, as the operators of $J$ tasks accumulate on one target at $\alpha = 0.5$, each measured on the source family the text assigns to its task. The band spans three orderings, and every prefix is the sum of the same stored operators, one per task.}
    \label{fig:ksweep}
\end{figure}

Figure~\ref{fig:ksweep} sweeps $J = 1, \dots, 8$ on each of the four targets that absorb the operator, with one source family per task, using the assignment EuroSAT\,$\to$\,CLIP, DTD\,$\to$\,SigLIP, Cars\,$\to$\,MetaCLIP, GTSRB\,$\to$\,OpenCLIP, MNIST\,$\to$\,SigLIP2, RESISC45\,$\to$\,SigLIP2, SVHN\,$\to$\,EVA-CLIP and SUN397\,$\to$\,SigLIP. Four of these sources are also targets, CLIP for EuroSAT, MetaCLIP for Cars, OpenCLIP for GTSRB and EVA-CLIP for SVHN, so on every target exactly one of the eight terms is same-model. Three fixed permutations drawn at seed $42$ give the band. Because a prefix is a sum of stored operators and matrix addition is order-independent, two orderings that agree on their first $J$ tasks give the same operator, so the visible spread is the spread over which $J$ tasks a prefix contains rather than over the order in which they arrive. Every point of the sweep is a partial sum of the same eight operators, one per task and each fitted on that task's own probes and prompts, so the whole figure is read off one bank without a further solve.

\paragraph{Strength has to be read per task, not per edit.}
The composed operator is a sum of $J$ atomic operators, so its Frobenius norm grows with $J$ and a fixed $\alpha$ raises the effective strength as tasks accumulate. This is visible in the sweep, where at a fixed $\alpha = 0.5$ on the CLIP-L target the mean gain over the composed tasks stays in a band between $+9.78$ and $+19.21$ points across all eight values of $J$, while CIFAR-100 moves from $-1.27$ at $J = 1$ to $-15.30$ at $J = 8$ and ImageNet from $-2.41$ to $-16.23$. Acquisition therefore holds up as tasks accumulate and the price is paid on the preservation axis, which is what the triangle bound of Section~\ref{sec:method-operations}, linear in $J$, predicts for the residual in the worst case. The remedy is the one weight-space task arithmetic applies when it divides the merging coefficient by the number of tasks, and the sweep's runs at $\alpha = 0.5 / J$ show what it restores and what it gives up, since on CLIP-L at $J = 8$ and $\alpha = 0.0625$ CIFAR-100 moves by $+0.50$ points and ImageNet by $-0.32$ while the mean gain on the composed tasks falls to $+4.08$.

\paragraph{What accumulation costs is a property of the target.}
The four targets differ far more in what they spend than in what they acquire. At $J = 8$ and $\alpha = 0.5$ CIFAR-100 has moved by $-3.20$ points on OpenCLIP-L against $-6.30$ on MetaCLIP-L, $-12.10$ on EVA-CLIP-L and $-15.30$ on CLIP-L, while the mean gain on the composed tasks is still $+16.67$, $+9.27$ and $+11.17$ on OpenCLIP-L, MetaCLIP-L and CLIP-L. OpenCLIP-L and MetaCLIP-L retain the most here as they do under negation in Appendix~\ref{app:negation-full}, so retention follows the target rather than the operation. EVA-CLIP-L is the exception on the acquisition axis, since its mean gain falls from $+11.65$ at $J = 1$ to $-4.41$ at $J = 8$ while ImageNet gives up $31.30$ points, so on that target accumulation stops paying well before eight tasks. How far an edit reaches beyond what it was asked to change therefore follows the target at least as much as the number of tasks the edit carries.

\section{Full Negation Results}
\label{app:negation-full}

\begin{table}[H]
\centering
\small
\caption{Negation strength sweep, as change from zero-shot in percentage points, over the $160$ cross-model cells and the $32$ same-model cells that share the same eight tasks and four targets. The zero-shot references are $58.33$ on the removed task, $76.95$ on CIFAR-100 and $71.50$ on ImageNet. Beyond $\lambda = 1$ the controls degrade about as fast as the removed task or faster, which is the operating boundary, and the sweep is continued past the useful range to show what the last points of erasure cost.}
\label{tab:neg-lambda}
\setlength{\tabcolsep}{4pt}
\begin{tabular}{lccccccc}
\toprule
& \multicolumn{3}{c}{Cross-model source} & & \multicolumn{3}{c}{Same-model source} \\
\cmidrule{2-4} \cmidrule{6-8}
$\lambda$ & Removed task $\downarrow$ & CIFAR-100 $\uparrow$ & ImageNet $\uparrow$ & & Removed task $\downarrow$ & CIFAR-100 $\uparrow$ & ImageNet $\uparrow$ \\
\midrule
$0.25$ & $-12.14 \pm 10.02$ & $\phantom{0}-0.11$ & $\phantom{0}-0.50$ & & $-13.98 \pm 10.10$ & $\phantom{0}-0.03$ & $\phantom{0}-0.28$ \\
$0.5$ & $-22.13 \pm 15.63$ & $\phantom{0}-0.94$ & $\phantom{0}-2.23$ & & $-24.75 \pm 14.79$ & $\phantom{0}-0.39$ & $\phantom{0}-1.08$ \\
$0.75$ & $-28.52 \pm 17.65$ & $\phantom{0}-2.69$ & $\phantom{0}-4.99$ & & $-31.74 \pm 16.86$ & $\phantom{0}-1.37$ & $\phantom{0}-2.37$ \\
$1.0$ & $-32.93 \pm 18.45$ & $\phantom{0}-5.07$ & $\phantom{0}-8.41$ & & $-36.26 \pm 17.39$ & $\phantom{0}-2.67$ & $\phantom{0}-4.21$ \\
$1.25$ & $-36.27 \pm 18.64$ & $\phantom{0}-7.90$ & $-12.26$ & & $-39.60 \pm 17.14$ & $\phantom{0}-4.48$ & $\phantom{0}-6.69$ \\
$1.5$ & $-38.93 \pm 18.49$ & $-11.00$ & $-16.32$ & & $-42.37 \pm 16.76$ & $\phantom{0}-6.62$ & $\phantom{0}-9.75$ \\
$2.0$ & $-42.98 \pm 18.16$ & $-18.08$ & $-24.53$ & & $-46.72 \pm 16.72$ & $-11.77$ & $-17.05$ \\
$2.5$ & $-45.80 \pm 17.61$ & $-25.18$ & $-32.35$ & & $-49.25 \pm 17.13$ & $-17.66$ & $-24.95$ \\
$3.0$ & $-47.91 \pm 16.91$ & $-31.70$ & $-39.28$ & & $-50.82 \pm 17.12$ & $-23.94$ & $-32.42$ \\
$3.5$ & $-49.60 \pm 16.25$ & $-37.62$ & $-45.17$ & & $-51.85 \pm 16.88$ & $-29.70$ & $-38.74$ \\
$4.0$ & $-51.00 \pm 15.78$ & $-42.77$ & $-49.95$ & & $-52.64 \pm 16.65$ & $-35.16$ & $-44.14$ \\
\bottomrule
\end{tabular}
\end{table}

Table~\ref{tab:neg-lambda} covers the four targets that absorb the operator, which the fine sweep was run on. Pooled over all six targets, the $240$ cross-model cells of Table~\ref{tab:negation-main}, $\lambda = 0.5$ removes $23.04$ points of the targeted task for $1.00$ of CIFAR-100 and $1.73$ of ImageNet, against $35.22$, $4.55$ and $6.66$ at $\lambda = 1$.

\paragraph{How far negation can be pushed.}
Table~\ref{tab:neg-lambda} continues past $\lambda = 1$ because the returns there are strongly asymmetric and worth recording. Over the cross-model cells the removed task falls from $15.4$ to $7.3$ percent between $\lambda = 2$ and $\lambda = 4$, eight further points of forgetting, while CIFAR-100 falls from $58.9$ to $34.2$ and ImageNet from $47.0$ to $21.6$. The last points of erasure are therefore the most expensive ones, which follows from Equation~\eqref{eq:lsq} carrying no term for what must be preserved. Once the targeted behavior is gone there is nothing in the objective that stops the operator from continuing to move everything else, so the practical setting is the smallest $\lambda$ that reaches the erasure one wants and not the largest one that is stable.

\paragraph{Retention depends on the target, not on the source.}
Pooled over the $160$ cross-model cells of the four targets that absorb the operator at $\lambda = 1$, they differ more in what they retain than in what they erase. OpenCLIP-L gives up $1.9$ points of CIFAR-100 for $27.6$ points of erasure, while CLIP-L, MetaCLIP-L and EVA-CLIP-L give up $5.6$, $4.7$ and $8.1$ points for $34.6$, $31.3$ and $38.2$, so the points forgotten per point of the control range from $4.7$ on EVA-CLIP-L to $14.8$ on OpenCLIP-L. OpenCLIP-L retains the most at every strength we sweep and for every source family, so the effect is a property of the target and of how far the removed task sits from the control rather than of the source assignment.

\section{Sensitivity to the Source Fine-Tuning Seed}
\label{app:seeds}

Every source in the study is one fine-tuning run, so a descriptor, the operator solved from it and the accuracy of every cell that operator enters all inherit whatever the initialization of the source's low-rank update and the order of its training batches happened to be. This section measures how much of a reported number that draw accounts for. We repeat the fine-tuning of the EuroSAT and DTD sources of CLIP-L, OpenCLIP-L and MetaCLIP-L at three random seeds, giving three independent sets of $6$ sources, and run the addition and negation protocols of Section~\ref{sec:exp-addition} and Section~\ref{sec:exp-negation} once per set.

Everything else is held at the values the rest of the paper uses. The probes, the prompt landmarks, the ridge coefficients, the strengths, the evaluation splits and the control protocol are unchanged, each operator is solved on its own task's probes and prompts as in the main study, and the split on which a task is evaluated is drawn at a fixed seed of its own, so it does not move with the fine-tuning seed. The only quantity that varies across a triple is therefore the low-rank update a source fine-tune produced. As a check that the frame is the one the rest of the paper reports, the unedited accuracy of each of the three targets on both tasks and both controls reproduces to the last digit the value measured on the grid from which Table~\ref{tab:addition-main} is built.

\begin{table}[t]
\centering\small
\caption{Repeating the fine-tuning of the EuroSAT and DTD sources at three random seeds moves a composed or negated cell by a few tenths of a point. Entries are percentages, averaged over cells first and then over the three seeds, and the interval is the standard deviation over seeds. Rows marked as the study's own sources repeat the measurement with the checkpoints every other table of the paper reads, in the same frame, and are not part of that mean. Targets are CLIP-L, OpenCLIP-L and MetaCLIP-L, each composing or negating with sources drawn from the other two families.}
\label{tab:seeds}
\setlength{\tabcolsep}{4pt}
\begin{tabular}{@{}lrrrr@{}}
\toprule
Addition at $\alpha = 0.5$ & EuroSAT & DTD & CIFAR-100 & ImageNet \\
\midrule
Zero-shot & $50.31$ & $56.29$ & $73.30$ & $70.22$ \\
Two sources, one per task ($6$ cells) & $85.79 \pm 0.23$ & $65.02 \pm 0.13$ & $72.53 \pm 0.02$ & $69.23 \pm 0.01$ \\
\quad the study's own sources & $85.51$ & $64.92$ & $72.62$ & $69.31$ \\
One source, both tasks ($6$ cells) & $85.57 \pm 0.23$ & $64.96 \pm 0.31$ & $72.53 \pm 0.15$ & $69.36 \pm 0.08$ \\
Target's own sources ($3$ cells) & $90.61 \pm 0.16$ & $69.06 \pm 0.26$ & $72.32 \pm 0.12$ & $69.74 \pm 0.06$ \\
\quad the study's own sources & $90.30$ & $68.74$ & $72.37$ & $69.69$ \\
A single source, its own task ($12$ cells) & $86.01 \pm 0.23$ & $65.89 \pm 0.10$ & $73.01 \pm 0.04$ & $69.73 \pm 0.01$ \\
\bottomrule
\end{tabular}

\vspace{4pt}

\begin{tabular}{@{}lrrr@{}}
\toprule
Negation & Removed task & CIFAR-100 & ImageNet \\
\midrule
Cross-model source, $\lambda = 0.5$ ($12$ cells) & $26.07 \pm 0.22$ & $72.95 \pm 0.01$ & $69.62 \pm 0.01$ \\
\quad the study's own sources & $25.68$ & $72.83$ & $69.66$ \\
Cross-model source, $\lambda = 1$ & $12.83 \pm 0.08$ & $70.94 \pm 0.07$ & $67.75 \pm 0.03$ \\
\quad the study's own sources & $12.27$ & $70.99$ & $67.82$ \\
Same-model source, $\lambda = 1$ ($6$ cells) & $14.03 \pm 0.18$ & $71.06 \pm 0.16$ & $67.79 \pm 0.10$ \\
\bottomrule
\end{tabular}
\end{table}

Table~\ref{tab:seeds} reports the result. Composing two tasks from two other models lifts EuroSAT to $85.79 \pm 0.23$ and DTD to $65.02 \pm 0.13$ percent, where the interval is the standard deviation over the three seeds of the mean over cells, and the two controls move by less than a tenth of a point across seeds. Negation behaves the same way, since at $\lambda = 1$ a cross-model source takes the removed task to $12.83 \pm 0.08$ while CIFAR-100 stands at $70.94 \pm 0.07$. The spread is small against the effects the paper reports, since the composed tasks rise by about $35$ and $9$ points over zero-shot and negation removes about $40$, so a seed accounts for under one percent of the EuroSAT gain and of the erasure and for about one and a half percent of the smaller DTD gain.

Reading the same numbers one cell at a time gives the quantity a single reported entry carries. Taking the standard deviation over the three seeds of one cell's own accuracy and then averaging over cells, addition moves by $0.40$ points on EuroSAT, $0.30$ on DTD, $0.21$ on CIFAR-100 and $0.07$ on ImageNet, and negation by $0.26$ on the removed task, with no cell anywhere in the study exceeding $0.7$. The rows measured with the sources the rest of the paper reads sit within $0.6$ points of the corresponding seeded mean on every axis of both operations, above it on some rows and below it on others, so the sources the paper reads are not a favorable draw.

Two features of the construction explain why the sensitivity is this low. A descriptor records a change in image--prompt affinity rather than a change in parameters, and two fine-tuning runs that reach comparable accuracy on a task move those affinities in nearly the same way even though their low-rank updates barely agree in weight space, since over the $18$ pairs of seeds measured here two seeds' descriptors meet at a cosine between $0.93$ and $0.97$ on the task's own grid while their merged updates meet at only $0.14$ to $0.24$. The solve then reads that descriptor through the target's own probes and prompts and damps the directions it constrains least, so what survives into the operator is the part of the descriptor the seeds agree on.

\section{Reaching a Dataset That Enters No Fit, in Full}
\label{app:holdout-full}

Table~\ref{tab:holdout} of the main text pools the cross-model assignments by the models supplying the two terms. With both terms from one other family, its $k_1 = k_2$ row, the change from zero-shot is $+2.88$ on RESISC45, $+9.80$ on UCMerced and $+4.35$ on AID, against $+2.34$, $+9.04$ and $+3.93$ for the first of the two terms transferred alone, so the composition leads on all three datasets. The expression is $\operatorname{RS}_1 + \operatorname{RS}_2 \to \operatorname{RS}_{\text{target}}$, where $\operatorname{RS}_1$ and $\operatorname{RS}_2$ are the descriptors of two of the three remote-sensing datasets RESISC45, UCMerced and AID and $\operatorname{RS}_{\text{target}}$ is the third, which is evaluated and enters no source term. Each of the two operators is fitted on the probes and prompts of its own dataset alone, as everywhere else in the paper, so the evaluated dataset enters neither fit through its images nor through its class names. Table~\ref{tab:holdout-full} compares the source assignments of the closed form, namely a within-model reference in which both terms come from the target itself, three cross-model assignments in which both terms come from one other family, and a distinct assignment in which the two terms come from the two families other than the target.

\begin{table}[H]
\centering
\small
\caption{Composition evaluated on a dataset that no source term describes, at $\alpha = 0.5$ and pooled over the three targets. Each cross-model row draws both terms from the named family, on the two targets for which that family is not the target; the distinct row draws its two terms from the two families other than the target, in both orders. Single $\operatorname{RS}_1$ transfers the first of the two terms alone, and CIFAR-100 is scored here on its full test split, which is why its zero-shot differs from that of Table~\ref{tab:addition-main}. Cross-model assignments track the within-model reference closely, which is the claim the experiment supports.}
\label{tab:holdout-full}
\setlength{\tabcolsep}{3pt}
\begin{tabular}{llccccc}
\toprule
Assignment & Method & RESISC45 & UCMerced & AID & Mean & CIFAR-100 \\
\midrule
None         & Zero-shot                    & $68.35$ & $74.29$ & $68.58$ & $70.41$ & $73.48$ \\
Within-model & Composition                  & $72.86$ & $86.83$ & $74.48$ & $78.06$ & $72.26$ \\
Within-model & Single $\operatorname{RS}_1$ & $72.20$ & $85.87$ & $73.47$ & $77.18$ & $72.76$ \\
Within-model & Oracle (target's own source) & $87.43$ & $91.59$ & $91.60$ & $90.21$ & $72.93$ \\
Cross-model, CLIP     & Composition         & $71.73$ & $84.76$ & $73.93$ & $76.81$ & $71.20$ \\
Cross-model, MetaCLIP & Composition         & $71.19$ & $85.36$ & $72.28$ & $76.27$ & $74.45$ \\
Cross-model, OpenCLIP & Composition         & $70.76$ & $82.14$ & $72.60$ & $75.17$ & $70.49$ \\
Cross-model, distinct & Composition         & $71.21$ & $84.56$ & $73.63$ & $76.47$ & $72.09$ \\
\bottomrule
\end{tabular}
\end{table}

\paragraph{The control moves by at most three points.}
CIFAR-100 enters no source term and no fit, and moves by at most three points on every row, from $73.48$ at zero-shot to between $70.49$ and $74.45$, so on every row the composition moves the control by less than it gains on the evaluated dataset. This is the same asymmetry the addition results of Appendix~\ref{app:addition-full} show at $\alpha = 0.5$, in a setting where the evaluated dataset is not among the composed tasks either.

\paragraph{Generalization beyond the fitted prompts.}
Each operator is fitted on the landmarks of its own dataset, so the two that a cell sums carry no coordinate indexed by the evaluated dataset's class names, and an edit that merely re-scored the columns it was fitted on would leave that dataset exactly at zero-shot. Every composition row of Table~\ref{tab:holdout-full} is above it instead, by between $4.76$ and $7.65$ points. The operator is therefore moving the representation rather than the scores of the prompts it saw, which is what Equation~\eqref{eq:factorization} and Appendix~\ref{app:underdetermined} predict, since the ridge terms settle the directions the descriptor leaves free by minimum norm rather than by leaving them untouched. The oracle row, in which the target's own source supplies the evaluated dataset's own descriptor, reaches $90.21$, so operators of related datasets are a partial substitute for a task's own probes rather than a replacement for them.

\paragraph{The adapter.}
The adapter fits both terms of a cell jointly on their own probes at full strength, with the recipe of Appendix~\ref{app:adapter}, on the same pool, targets and assignments. Pooled as in Table~\ref{tab:holdout}, it reaches $76.63$ with both terms from one other family, $76.64$ with one term from each of the two others and $75.20$ with the first term alone, against $76.08$, $76.47$ and $75.51$ for the closed form, and $80.16$ within the model against $78.06$. It also leaves CIFAR-100 within $0.3$ points of its own unedited accuracy of $73.50$ on every row, where the closed form gives up between $0.8$ and $1.4$ points, which matches the adapter's lighter touch on ImageNet in Table~\ref{tab:addition-main} and on both controls in Table~\ref{tab:negation-main}. At strength $0.5$, the one the closed form applies to its operator, the adapter reaches $75.47$ with both terms from one other family and $74.26$ with the first term alone, below the closed form's $76.08$ and $75.51$, and $78.87$ within the model, so across families it acquires less at half strength, as it does in addition (Appendix~\ref{app:adapter}).

\section{The Analogy Operation}
\label{app:analogy}

Section~\ref{sec:prelim-weight} lists three operations that weight-space task arithmetic defines, and this paper carries over two of them. The third is absent because of a measurement rather than an omission, which is why Table~\ref{tab:axes} lists the analogy among the operations weight-space task arithmetic defines and not among ours, and this appendix reports that measurement.

\paragraph{What the operation needs, and where the data comes from.}
An analogy relates four tasks forming a product of two domains $D_1, D_2$ and two subtasks $S_1, S_2$, so that three of the four predict the fourth. Writing a task as the pair of its domain and its subtask and taking $(D_2, S_2)$ as the one held out, the descriptor form of the expression is
\begin{equation}
    U_{\text{ana}}
    \;=\;
    \operatorname{UTD}^{(k_1)}_{D_1, S_2}
    \,-\, \lambda\, \operatorname{UTD}^{(k_2)}_{D_1, S_1}
    \,+\, \operatorname{UTD}^{(k_3)}_{D_2, S_1},
    \label{eq:arith-ana}
\end{equation}
with $k_1, k_2, k_3$ the models the three terms are measured on and $\lambda \geq 0$ the strength of the subtraction, whose job is to strip the $D_1$ context off the first term so that the third term can supply $D_2$ in its place. As everywhere else in the paper, our route solves one operator per task on that task's own probes and prompts and sums the three, so the expression is formed on operators and Equation~\eqref{eq:arith-ana} names the composition they realize. None of the eight tasks of Section~\ref{sec:exp-setup} form such a product, since the remote-sensing group is one domain with several label sets and the digit-like group is one label set across several domains, so each supplies a single axis. \citet{ilharco2022editing} build the product by crossing an arbitrary halving of a class list with two rendering styles, photographs and sketches, and we follow that construction on DomainNet~\citep{peng2019moment}, which draws the same $345$ classes in six domains. Both domains then hold the identical label set, so a task's prompts depend on its subtask alone, the subtask axis lives in the landmarks and the domain axis lives in the shift field.

\paragraph{Two products, the second built to make the operation more likely to succeed.}
The first product is the one just described, two halves of the class list under a fixed permutation crossed with photographs and sketches. The second changes both axes in the direction that should help. Its subtask axis is semantic rather than arbitrary, one subtask naming the $115$ classes that are living or occur in nature and the other $115$ manufactured classes drawn at seed $42$, the remaining classes entering no task, so that class count does not ride on the axis; two arbitrary halves of one label set are two samples of one distribution, and a model fine-tuned on either moves the same way. Its domain axis is photographs against paintings rather than sketches, which differ in rendering while keeping photographic content, so the domain term is smaller and the subtask term matters relatively more. Each of the four tasks is a fine-tune of the same kind the rest of the paper uses, and each is held out in turn on each of the six model families, which is what a cell of Table~\ref{tab:analogy} is.

\paragraph{Weight-space task arithmetic is the baseline, in the regime that favors it.}
Every task of a product is a fine-tune of one base model, so a weight-space task vector is defined between any two of them and the operation can be run in the parameter space that introduced it, with no cross-model transport to pay for. We therefore report both routes on identical cells, at the strength Section~\ref{sec:exp-addition} fixes for every route, and score them on one seeded subsample of $1{,}200$ images of the held-out task and $500$ CIFAR-100 images, so that the two differ in nothing but the operation. Alongside the assembled expression we report each of its two positive terms transferred alone, which is the control that decides whether the analogy structure contributes anything, and an oracle in which the held-out task supplies its own update.

\begin{table}[H]
\centering
\small
\caption{The analogy operation on two products of two domains and two subtasks, in percent, with each of the four tasks held out in turn on each of the six model families, giving 24 cells per column. The first product follows the construction of \citet{ilharco2022editing}, crossing an arbitrary halving of one class list with two rendering styles, and the second replaces the halving by a semantic split into natural and manufactured classes and narrows the style gap, both of which should favor the operation. Every route is applied at the strength Section~\ref{sec:exp-addition} fixes. The first and third terms are the two positive terms of the expression taken alone, and $\lambda$ scales the subtracted term, so $\lambda = 0$ is the sum of the two positive terms. CIFAR-100 enters no term and no fit, and its column is identical for the first term, the third term and the oracle because those three rows apply the same four single-task edits and differ only in which held-out task each is paired with.}
\label{tab:analogy}
\setlength{\tabcolsep}{3pt}
\begin{tabular}{lcccc}
\toprule
 & \multicolumn{2}{c}{Arbitrary halves, photographs and sketches} & \multicolumn{2}{c}{Semantic split, photographs and paintings} \\
\cmidrule(lr){2-3} \cmidrule(lr){4-5}
Method & Held-out task $\uparrow$ & CIFAR-100 & Held-out task $\uparrow$ & CIFAR-100 \\
\midrule
Zero-shot & $89.99$ & $76.63$ & $87.88$ & $76.63$ \\
\midrule
\multicolumn{5}{l}{\textit{Weight-space task arithmetic}} \\
First term alone & $90.63$ & $77.49$ & $88.47$ & $77.34$ \\
Third term alone & $90.80$ & $77.49$ & $88.70$ & $77.34$ \\
Analogy, $\lambda = 1$ & $90.90$ & $76.45$ & $88.61$ & $76.45$ \\
\textbf{Analogy, $\lambda = 0$} & $91.05$ & $77.07$ & $88.88$ & $78.15$ \\
Oracle & $92.17$ & $77.49$ & $90.76$ & $77.34$ \\
\midrule
\multicolumn{5}{l}{\textit{Ours}} \\
First term alone & $90.38$ & $77.00$ & $88.19$ & $76.99$ \\
Third term alone & $90.35$ & $77.00$ & $88.39$ & $76.99$ \\
Analogy, $\lambda = 1$ & $90.52$ & $76.42$ & $88.27$ & $76.66$ \\
\textbf{Analogy, $\lambda = 0$} & $90.64$ & $77.35$ & $88.45$ & $76.93$ \\
Oracle & $91.58$ & $77.00$ & $90.05$ & $76.99$ \\
\bottomrule
\end{tabular}
\end{table}

\paragraph{Both routes improve on the unedited target, and a single term is where the improvement is.}
Read the way prior work reads it, against the unedited model, the operation works. Table~\ref{tab:analogy} gives the full expression, at $\lambda = 1$, a gain of $0.91$ points on the first product and $0.73$ on the second in weight space, and of $0.53$ and $0.39$ in ours, which reproduces in direction what \citet{ilharco2022editing} report for their own analogy over subpopulations. Read against the better of its own two positive terms transferred alone, almost none of that is left. Even at $\lambda = 0$, where the expression gains most, the margin paired task by task is $+0.11 \pm 0.33$ and $+0.09 \pm 0.42$ points in weight space and $+0.10 \pm 0.29$ and $-0.02 \pm 0.48$ in ours, over $24$ cells each, so it is a tenth of a point in either direction and smaller than the standard error of a single entry, which is $0.87$ points on a $1{,}200$-image split at these accuracies. The range the operation could occupy is narrow to begin with, since fine-tuning on the held-out task itself stands only $2.18$ and $2.88$ points above zero-shot in weight space, of which the better single term of each cell already recovers $0.95$ and $0.91$ points.

\paragraph{The subtracted term does no work, and strength destroys the operation.}
Setting $\lambda = 0$ deletes the subtraction and leaves the sum of the two positive terms, and it is the better choice on $14$, $15$, $18$ and $16$ of the $24$ cells across the two products and the two routes, so the term whose job is to strip the first term's domain away is at best free and more often harmful. Doubling the strength to $\alpha = 1$ takes the expression below zero-shot on both products and both routes, by between $0.19$ and $1.05$ points, while one term alone still holds a small gain there, so the margin falls to between $-0.59$ and $-1.58$. The operation therefore has no strength at which it is worth running.

\paragraph{What the comparison establishes.}
Weight-space task arithmetic introduced the operation, and every task of a product is a fine-tune of one base model, so it runs here in its own parameter space with no cross-model transport to pay for, which is the strongest form the operation takes. It behaves as our route does on every measurement above, and the second product, built with a semantic subtask axis and a narrower domain gap so that the operation would have more to work with, does not separate the two either. What a product of two domains and two subtasks supplies is therefore the limit rather than the transfer, and the paper states the scope in Section~\ref{sec:conclusion} rather than reporting an operation it cannot demonstrate. The question the analogy was meant to answer, whether an edit reaches beyond the tasks its terms describe, is addressed instead in Section~\ref{sec:exp-holdout}, where both terms are positive, the evaluated dataset appears in neither, and one related term carries most of the reach.

\section{Baseline Details and Results}
\label{app:baselines}

\subsection{What Each Route Requires}

Table~\ref{tab:axes} places the comparisons on the two axes that decide where each of them is defined at all, namely the structural correspondence it needs between the model that learned a task and the model that is to receive it, and the signal it needs on the target side. The methods are ordered by how much correspondence they demand, from weight-space task arithmetic, which requires the two models to be the same model, through the two transport methods, which require corresponding parts, to the two descriptor routes, which require neither and differ in whether the edit is a solved operator on the last layer or a fitted adapter of the whole encoder. Gradient ascent stands outside that ordering, since it demands no correspondence at all and instead retrains the target on labeled examples of the task it removes, and so do ensemble distillation, which trains the target on the outputs of the fine-tuned sources rather than on their weights, and supervised tuning, which trains the target on labels of the composed tasks and so transfers nothing but gives the labeled reference, the last two being measured in Appendix~\ref{app:baselines-kd}. The operations column records which of the three operations of task arithmetic each route has been formulated for, since a route built to merge two models is not always defined for removal or for analogy. The final column is the property the whole paper is about, in that a route either does or does not remain defined when the source and the target come from different families, and the abbreviation \textsc{n/a} marks the combinations that cannot be run rather than ones that were run and failed.

\begin{table}[H]
\centering
\small
\caption{Requirements and scope of the routes to composing task knowledge. Structural requirement is what must correspond between the source model and the target model, target-side signal is what the method consumes from the target beyond its weights, and operations lists which of addition, negation and analogy the route has been formulated for. The last column marks whether the route stays defined when source and target belong to different families. The abbreviation \textsc{n/a} means the method is undefined for that combination.}
\label{tab:axes}
\setlength{\tabcolsep}{3.5pt}
\begin{tabular}{@{}p{0.215\linewidth} p{0.215\linewidth} p{0.215\linewidth} p{0.135\linewidth} p{0.115\linewidth}@{}}
\toprule
Method & Structural requirement & Target-side signal & Operations & Different families \\
\midrule
Weight-space task arithmetic~\citep{ilharco2022editing} & Identical parameter space & None & Addition, negation, analogy & \textsc{n/a} \\
GradFix~\citep{rinaldi2026gradient} & Matching parameter coordinates & Labeled examples and their gradients & Addition, negation & Partly \\
Theseus~\citep{rinaldi2026transporting} & Corresponding submodules, width mismatch allowed & Calibration activations & Addition & Partly \\
\textbf{Ours, closed form} & None beyond a cross-modal affinity and an affine last layer writing into the embedding & Unlabeled probe images and prompts & Addition, negation & Yes \\
\textbf{Ours, adapter} & None beyond a cross-modal affinity & Unlabeled probe images and prompts, one optimization per edit & Addition, negation & Yes \\
\midrule
Gradient ascent~\citep{thudi2022unrolling} & None, since it trains the target directly & Labels for the removed task & Negation & \textsc{n/a} \\
Ensemble distillation~\citep{hinton2015distilling} & None, since it trains the target on the teachers' outputs & Unlabeled probe images and the teachers' outputs on them & Addition & Yes \\
Supervised tuning & None, since it trains the target directly & Labels for every composed task & Addition & \textsc{n/a} \\
\bottomrule
\end{tabular}
\end{table}

\subsection{Protocols}
\label{app:baselines-protocols}

\textbf{Weight-space task arithmetic.} The source is a low-rank fine-tune, so its weight-space task vector is the merged update $\Delta W = (\alpha_{\text{src}}/r_{\text{src}})\, W_{\text{up}} W_{\text{down}}$ per adapted module, with $W_{\text{up}}, W_{\text{down}}$ the two factors of the source's low-rank update and $r_{\text{src}}$, $\alpha_{\text{src}}$ its rank and scaling factor of Appendix~\ref{app:datasets}, which is exactly the delta the fine-tuned model applies. Addition merges the updates $\Delta W_{\tau_1}$ and $\Delta W_{\tau_2}$ of the two composed tasks $\tau_1$ and $\tau_2$ by their mean and evaluates $\theta_{\text{pre}} + \alpha\,(\Delta W_{\tau_1} + \Delta W_{\tau_2})/2$, so that each update enters at half the strength, and negation evaluates $\theta_{\text{pre}} - \lambda \Delta W_\tau$, at $\alpha = 0.5$ and the reported $\lambda$, both fixed in advance rather than selected on the results. It is defined only when the source's parameter space is the target's, so it runs same-model and is reported as the single-model reference.

\textbf{Gradient ascent.} Two budgets are reported. The first follows the standard setting, $200$ training images, $10$ epochs, learning rate $10^{-4}$ and a constant schedule. The second is a larger budget, namely the negation probe set capped at $2{,}500$ images, $30$ epochs, learning rate $3 \times 10^{-4}$ and a cosine schedule, so that the comparison does not hinge on the standard budget being too small. Both appear as separate rows, since neither dominates, the standard budget keeping more of the controls and the larger one erasing more of the targeted task.

\textbf{Control protocol.} CIFAR-100 uses $10$ images per class throughout. ImageNet uses $10$ per class for the weight-space comparisons and for the transport methods' addition cells, and $1$ per class for gradient ascent and the transport methods' negation cells, which keeps every cell of the latter affordable given that a weight edit has to re-run the control for each of them. The smaller draw is a strict subset of the larger one, since both slice the same seeded per-class permutation, so the two estimate the same quantity and the smaller adds about $0.18$ points of noise once pooled over the cells of a row. Each comparison is scored against the zero-shot accuracy measured on its own draw rather than against a shared one.

\textbf{Transport baselines.} We use the released implementations of GradFix and Theseus unmodified, on cells whose two sources and target have matching parameter shapes, and apply them at strengths fixed in advance rather than selected on the results, $0.5$ on the mean of the two transported updates of an addition cell as above, so each enters at $0.25$, and the reported $\lambda$ on the transported update of a negation cell. For Theseus these cells lie outside the setting it was designed for, models of one family at different scales, so the comparison evaluates it with its released code and configuration on transfer between families of one architecture. GradFix consumes labels for the gradients its mask is computed from, which it needs whatever the strength is. Both methods receive a source's low-rank update merged into the dense $\Delta W$ above, and GradFix's coordinate mask does not preserve its low rank, so the update it writes into the target is dense. Theseus is calibrated on one seeded random batch of $32$ images from the training-split probes, the configuration its authors release. Their transported updates come from the same source checkpoints our descriptors are measured on. Negation in particular must be reported at a fixed strength, since a larger strength always removes more of the targeted task and selecting on that quantity would be degenerate.

\subsection{Addition Against the Transport Methods}
\label{app:baselines-add}

The architecture block of Table~\ref{tab:addition-main} is the regime in which every route is defined at once, the $168$ cells in which the two sources and the target are three distinct families of the ViT-L/14 width-1024 group, with task A from one source and task B from the other. Composing in the target's parameter space without any transport leaves the two tasks at zero-shot, which is the null the two transport methods are built to beat, and both do beat it, Theseus~\citep{rinaldi2026transporting}, evaluated here outside the cross-scale setting it was designed for, by less than a point and GradFix~\citep{rinaldi2026gradient} by $13.7$ and $16.9$ points. The closed form leads GradFix by $7.2$ points on the first composed task and trails it by $1.1$ on the second, while GradFix gives up only $0.5$ points of CIFAR-100 and leaves ImageNet $0.2$ points above zero-shot, where the closed form gives up $1.8$ and $3.5$. ImageNet separates the two in the negation comparison below as well, and for the same reason, in that an edit masked to the target's own parameter coordinates leaves a broad benchmark largely untouched where a dense map on the embedding moves prompts that never entered the fit. The adapter, which also edits inside the encoder, leads GradFix on both tasks by $8.2$ and $3.2$ points and stays within $1.1$ and $0.3$ points of zero-shot on the two controls, within $0.6$ points of GradFix on each, so it nearly matches the locality GradFix buys with matched coordinates and labels while needing neither. Every route is applied at one strength fixed in advance and scored on the same splits, the closed form at $\alpha = 0.5$ on the sum of its two operators and the three weight-space routes at $0.5$ on the mean of their two updates as Appendix~\ref{app:baselines-protocols} states, so no route's strength was selected on its results.

\subsection{Negation Against Weight-Space Negation and Gradient Ascent}
\label{app:baselines-neg}

Table~\ref{tab:neg-baselines} places every negation route on the cells of Section~\ref{sec:exp-negation} for which that route is defined and measured, which are the cells of the four targets that absorb the operator for weight-space negation and of the three ViT-L/14 targets among them for the transport methods and gradient ascent, and reports each as a change from the zero-shot accuracy measured on its own control draw. The routes need different things before they are defined, in that weight-space negation needs the removed task's update to live in the target's own parameter space, the two transport methods need a structural correspondence between the two models, gradient ascent needs labeled examples of the removed task and gradient steps on the target, and only the two descriptor routes stay defined when the two models share no structural correspondence. Each block is therefore restricted to the cells its baseline was measured on.

At $\lambda = 0.5$ the same-model descriptor removes $24.8$ points of the targeted task against weight-space negation's $22.0$ while giving up $0.4$ points of CIFAR-100 against $4.2$ and $1.1$ of ImageNet against $0.6$. At $\lambda = 1$ it erases $6.2$ points more than weight-space negation does and still spends a fifth as much CIFAR-100, while its ImageNet cost grows to $4.2$ points against $1.6$, and the cross-model descriptor pays for transport exactly as Section~\ref{sec:exp-negation} describes.

The transport methods separate sharply. Writing a source update into the target's coordinates with no correspondence at all moves nothing, which is the null row, and Theseus removes only $2.5$ points at $\lambda = 1$, so a map built to preserve a model's function under addition does not by itself deliver erasure under negation. GradFix does, removing $35.3$ points at $\lambda = 1$ while leaving ImageNet within $0.8$ points of zero-shot, a pairing of deep erasure with a preserved ImageNet that no other route in the table achieves. What it spends instead is CIFAR-100, $16.3$ points against our $4.8$ at $\lambda = 1$, and the ordering survives at the weaker strength, since at $\lambda = 0.5$ it removes $26.7$ points for $5.4$ of CIFAR-100 where the cross-model descriptor removes $23.5$ for $1.0$. On the same cells the adapter of Table~\ref{tab:negation-main} removes $28.5$ points for $0.6$ of CIFAR-100 and $0.7$ of ImageNet, so it reaches GradFix's preservation of ImageNet without matched coordinates or labels and holds CIFAR-100 as well, at a shallower erasure. The two profiles follow from what each edit is. GradFix keeps only those coordinates of a same-shaped update that agree with gradients computed on labeled examples of the removed task, so its edit is confined to the target's own parameter space and a broad benchmark is largely untouched, whereas the closed form applies a dense map to the embedding and moves prompts that never entered the fit. The price of GradFix's profile is what it requires, matching parameter coordinates and labels for the task being removed, so it is undefined for the great majority of the pairs Section~\ref{sec:exp-negation} covers.

Gradient ascent sits apart from all of them. Maximizing the cross-entropy of the removed task is unbounded above, so it drives the targeted task to near chance, but nothing in the objective distinguishes the removed capability from anything else the model can do, and both controls fall with it. Across three targets and all eight tasks the standard budget takes the removed task from a mean of $56.4$ percent to $7.7$, and takes CIFAR-100 with it to a mean of $26.6$ and ImageNet to $55.7$; the larger budget reaches $4.1$ on the removed task and leaves $3.6$ of CIFAR-100 and $28.1$ of ImageNet. Negating a descriptor cannot behave this way, because the object it removes is the change that task adaptation introduced rather than the model's ability to score the task at all.

\begin{table}[H]
\centering
\small
\caption{Negation against every comparison, as change from the target's zero-shot in percentage points, with each route read against the zero-shot of its own control draw. A baseline is defined only on part of this population, so each block is restricted to the cells its baseline was measured on. In the transport block the cross-model row is restricted to the same source assignments as well; in the other two the baseline has no source axis, and the cross-model row there averages over every source family available for the cell. Weight-space negation and the same-model closed form require source and target to coincide, gradient ascent requires labels for the removed task and runs at a fixed budget rather than a strength, so a dash stands in its strength column, and the transport methods require corresponding parameters, so only the cross-model row is defined between arbitrary families.}
\label{tab:neg-baselines}
\begin{tabular}{lcccc}
\toprule
Method & Strength & $\Delta$ Removed task $\downarrow$ & $\Delta$ CIFAR-100 $\uparrow$ & $\Delta$ ImageNet $\uparrow$ \\
\midrule
\multicolumn{5}{l}{\textit{Against weight-space negation} ($32$ cells)} \\
Weight-space negation & $\lambda = 0.5$ & $-22.01 \pm 13.04$ & $-4.16$ & $-0.59$ \\
Weight-space negation & $\lambda = 1.0$ & $-30.10 \pm 12.25$ & $-13.44$ & $-1.56$ \\
Weight-space negation & $\lambda = 1.5$ & $-35.69 \pm 11.91$ & $-22.68$ & $-2.88$ \\
Ours, closed form, same model & $\lambda = 0.5$ & $-24.75 \pm 14.79$ & $-0.39$ & $-1.08$ \\
Ours, closed form, same model & $\lambda = 1.0$ & $-36.26 \pm 17.39$ & $-2.67$ & $-4.21$ \\
Ours, closed form, same model & $\lambda = 1.5$ & $-42.37 \pm 16.76$ & $-6.62$ & $-9.75$ \\
Ours, closed form, cross-model & $\lambda = 0.5$ & $-22.13 \pm 15.63$ & $-0.94$ & $-2.23$ \\
Ours, closed form, cross-model & $\lambda = 1.0$ & $-32.93 \pm 18.45$ & $-5.07$ & $-8.41$ \\
Ours, closed form, cross-model & $\lambda = 1.5$ & $-38.93 \pm 18.49$ & $-11.00$ & $-16.32$ \\
\midrule
\multicolumn{5}{l}{\textit{Against the transport methods} ($48$ cells)} \\
Weight-space transport & $\lambda = 0.5$ & $+0.00 \pm 0.29$ & $+0.00$ & $-0.00$ \\
Weight-space transport & $\lambda = 1.0$ & $+0.02 \pm 0.54$ & $+0.02$ & $+0.00$ \\
Weight-space transport & $\lambda = 1.5$ & $+0.01 \pm 0.75$ & $-0.03$ & $-0.00$ \\
Theseus & $\lambda = 0.5$ & $-0.93 \pm 1.01$ & $-0.27$ & $-0.09$ \\
Theseus & $\lambda = 1.0$ & $-2.50 \pm 2.53$ & $-1.49$ & $-0.19$ \\
Theseus & $\lambda = 1.5$ & $-4.68 \pm 4.87$ & $-3.61$ & $-0.38$ \\
GradFix & $\lambda = 0.5$ & $-26.70 \pm 12.61$ & $-5.37$ & $-0.32$ \\
GradFix & $\lambda = 1.0$ & $-35.29 \pm 15.14$ & $-16.34$ & $-0.77$ \\
GradFix & $\lambda = 1.5$ & $-40.08 \pm 16.83$ & $-26.58$ & $-1.57$ \\
Ours, closed form, same model & $\lambda = 0.5$ & $-23.70 \pm 13.89$ & $-0.44$ & $-1.06$ \\
Ours, closed form, same model & $\lambda = 1.0$ & $-34.31 \pm 16.34$ & $-2.83$ & $-4.04$ \\
Ours, closed form, same model & $\lambda = 1.5$ & $-40.26 \pm 16.53$ & $-6.44$ & $-9.22$ \\
Ours, closed form, cross-model & $\lambda = 0.5$ & $-23.48 \pm 14.97$ & $-0.96$ & $-1.79$ \\
Ours, closed form, cross-model & $\lambda = 1.0$ & $-33.62 \pm 17.58$ & $-4.77$ & $-7.45$ \\
Ours, closed form, cross-model & $\lambda = 1.5$ & $-39.41 \pm 17.55$ & $-10.74$ & $-15.83$ \\
\midrule
\multicolumn{5}{l}{\textit{Against gradient ascent} ($24$ cells)} \\
Gradient ascent, standard budget & --- & $-48.67 \pm 19.72$ & $-46.66$ & $-15.02$ \\
Gradient ascent, larger budget & --- & $-52.31 \pm 18.90$ & $-69.66$ & $-42.60$ \\
Ours, closed form, same model & $\lambda = 0.5$ & $-23.70 \pm 13.89$ & $-0.44$ & $-1.06$ \\
Ours, closed form, same model & $\lambda = 1.0$ & $-34.31 \pm 16.34$ & $-2.83$ & $-4.04$ \\
Ours, closed form, same model & $\lambda = 1.5$ & $-40.26 \pm 16.53$ & $-6.44$ & $-9.22$ \\
Ours, closed form, cross-model & $\lambda = 0.5$ & $-20.90 \pm 14.68$ & $-0.76$ & $-1.49$ \\
Ours, closed form, cross-model & $\lambda = 1.0$ & $-31.17 \pm 17.53$ & $-4.05$ & $-6.10$ \\
Ours, closed form, cross-model & $\lambda = 1.5$ & $-36.95 \pm 17.87$ & $-9.06$ & $-13.00$ \\
\bottomrule
\end{tabular}
\end{table}

\begin{table}[H]
\centering
\small
\caption{Per-task fine-tuning ceilings, meaning the target's own rank-4 source applied to itself, in percent. These are the denominators for the normalized axes in Figure~\ref{fig:ksweep}.}
\label{tab:ft-ceilings}
\setlength{\tabcolsep}{4pt}
\begin{tabular}{lcccccccc}
\toprule
Target & EuroSAT & DTD & Cars & GTSRB & MNIST & RESISC45 & SVHN & SUN397 \\
\midrule
CLIP-L & $98.3$ & $75.1$ & $85.4$ & $95.1$ & $98.9$ & $94.9$ & $96.2$ & $76.4$ \\
OpenCLIP-L & $98.2$ & $77.1$ & $93.9$ & $96.0$ & $99.1$ & $94.9$ & $96.6$ & $77.1$ \\
MetaCLIP-L & $98.7$ & $79.4$ & $89.5$ & $95.6$ & $99.0$ & $95.3$ & $96.6$ & $77.0$ \\
EVA-CLIP-L & $98.0$ & $78.6$ & $92.7$ & $95.2$ & $99.2$ & $95.0$ & $97.2$ & $79.7$ \\
\bottomrule
\end{tabular}
\end{table}

The addition counterpart of the weight-space comparison is the same-model block of Table~\ref{tab:addition-main}, at the strength every route uses. Table~\ref{tab:ft-ceilings} gives the accuracy each target reaches when its own source for a task is applied to it, which is the per-task fine-tuning ceiling any edit is measured against and the denominator of the normalized axes in Figure~\ref{fig:ksweep}.

\subsection{Addition and Negation Against Distillation and Supervised Tuning}
\label{app:baselines-kd}

Distilling the fine-tuned sources into the target is the closest alternative to our transfer that also needs no correspondence between the two models, and tuning the target directly on the probes' labels is the labeled reference any transfer is read against. Both are trained with the adapter realization's own loop, so the adapter, its initialization, the optimizer, the step budget, the probe images and the evaluation are those of Appendix~\ref{app:adapter} and only the loss differs. The affinity variant regresses the student's cosine to each prompt of a probe's own task onto the fine-tuned teacher's cosine, the posterior variant matches the student's posterior over those prompts to the teacher's at temperature $2$~\citep{hinton2015distilling}, and supervised tuning minimizes the cross-entropy of the probes' labels, the only route here that reads a label. The teachers are the source fine-tunes every descriptor is read from. The negation analogues regress onto the negated teacher cosine, or match the posterior of the negated teacher logits, on the removed task's own probes. The cells are the six of the architecture block that compose EuroSAT and DTD from two distinct sources onto a third ViT-L/14 family, and for negation the twelve in which one of those two tasks is removed with a source from another ViT-L/14 family.

Distillation and supervised tuning win on the composed tasks and lose everywhere else (Table~\ref{tab:kd}). Posterior distillation reaches $96.36$ and $77.62$ and supervised tuning $95.00$ and $81.45$, against $92.45$ and $64.41$ for the adapter and $85.51$ and $64.92$ for the closed form, which is what copying a decision function or reading labels buys. The price is paid on the control, where the two distillation variants give up $6.75$ and $5.27$ points of CIFAR-100 and supervised tuning $2.70$, against $0.47$ for the adapter and $0.68$ for the closed form, although no term of any cell involves that dataset. Supervised tuning nonetheless preserves far better than distillation, giving up $2.70$ points of CIFAR-100 and $1.12$ of ImageNet, so preservation is not unique to descriptor transfer, and what separates the two descriptor routes from it is that they give up less while reading no label. A distillation target is an absolute similarity or posterior and pulls the student onto the teacher's scale on every prompt it is shown, whereas a descriptor is a difference from the target's own baseline, so the fit asks the target to move only where fine-tuning moved the source; for the closed form the distinction is exact, since fitting a source's absolute affinities rather than their change adds to the descriptor the gap between the source's and the target's pre-trained affinities, so by Proposition~\ref{prop:linearity} the operator becomes ours plus the operator of that gap, which records how the two pre-trained models already disagree and vanishes only when source and target coincide.

Negation has no working distillation analogue. Both variants take the removed task to at most $2.4$ percent, and CIFAR-100 falls with it from $73.30$ to $7.40$ and $14.29$, so the student is no longer a usable model, whereas at $\lambda = 1$ the closed form removes $41.0$ points for $2.3$ of CIFAR-100 and the adapter $33.4$ for $1.0$. The negative of a similarity matrix or of a posterior is not the behavior of any model, so a student that matches it has no reason to keep anything else in place, whereas the negative of a descriptor is a displacement of the same size in the opposite direction and is a statement about the removed task alone.

\begin{table}[H]
\centering
\small
\caption{Distillation and supervised tuning against both realizations, in percent, on the six EuroSAT and DTD cells of the architecture block for addition and the twelve cells removing one of the two tasks for negation. Every trained row uses the adapter realization's recipe and differs from it only in the loss. The closed form is applied at $\alpha = 0.5$ and every trained route at full strength; Oracle uses the target's own descriptors.}
\label{tab:kd}
\setlength{\tabcolsep}{4pt}
\begin{tabular}{lcccc}
\toprule
Addition & EuroSAT $\uparrow$ & DTD $\uparrow$ & CIFAR-100 & ImageNet \\
\midrule
Zero-shot                          & $50.31$ & $56.29$ & $73.30$ & $70.22$ \\
\textbf{Ours, closed form}         & $85.51$ & $64.92$ & $72.62$ & $69.31$ \\
\textbf{Ours, adapter}             & $92.45$ & $64.41$ & $72.83$ & $70.04$ \\
Oracle, closed form                & $90.30$ & $68.74$ & $72.37$ & $69.69$ \\
Affinity distillation              & $95.96$ & $71.10$ & $66.55$ & $67.58$ \\
Posterior distillation             & $96.36$ & $77.62$ & $68.03$ & $67.04$ \\
Supervised tuning                  & $95.00$ & $81.45$ & $70.60$ & $69.10$ \\
\midrule
Negation & Removed $\downarrow$ & & CIFAR-100 & ImageNet \\
\midrule
Zero-shot                          & $53.30$ & & $73.30$ & $70.22$ \\
\textbf{Ours, closed form}         & $12.27$ & & $70.99$ & $67.82$ \\
\textbf{Ours, adapter}             & $19.95$ & & $72.29$ & $69.18$ \\
Affinity distillation              & $2.39$ & & $7.40$ & $33.98$ \\
Posterior distillation             & $0.00$ & & $14.29$ & $35.71$ \\
\bottomrule
\end{tabular}

\end{table}

\section{Targets That Pool Through a Residual Head}
\label{app:head}

SigLIP and SigLIP2 produce the image embedding as $z(x) = a(x) + W_2 u(x) + b_2$, where $a(x)$ is the output of an attention pooling step, $u(x)$ is the activation the head's second affine consumes, and a layer normalization sits between $a(x)$ and $u(x)$. Left-multiplying $W_2$ and $b_2$ by $I + \alpha A$ would edit one branch of the sum and leave $a(x)$ as it was, so Equation~\eqref{eq:absorb} is unavailable. What remains available is the pair itself. With every parameter upstream held fixed, neither $a(x)$ nor $u(x)$ depends on $W_2$ or $b_2$, so any update to them changes the embedding by exactly $\Delta W_2 u(x) + \Delta b_2$, with no approximation. Since a descriptor is read on unit embeddings, we write $f_{\text{pre}}(x) = z(x) / \lVert z(x) \rVert$ and collect the rows $[u(x_i); 1] / \lVert z(x_i) \rVert$ into a matrix, the trailing entry carrying the bias. Equation~\eqref{eq:lsq} with $F_{\text{pre}}$ replaced by that matrix then returns the update directly, and because that matrix and $G$ are fixed for a target the solution map stays linear, so a bank of per-task updates composes by addition exactly as a bank of operators does.

Two checks make the construction usable rather than merely definable. Writing the solved update into a checkpoint and running a real forward pass reproduces the accuracy computed from the cached quantities to a maximum absolute difference of $1.2 \times 10^{-7}$ on the embeddings and to identical top-1 predictions, so the deployed model is the scored model. Linearity of the solution map holds to a relative difference of $6.9 \times 10^{-8}$ on a mixed-sign combination. The edit adds no layer and no parameter, and the head's own weights are the only ones touched.

\begin{table}[H]
\centering
\small
\caption{The two residual-head families as targets, in percent, resolving the cells that Table~\ref{tab:addition-main} and Table~\ref{tab:negation-main} pool with the other four. Addition is averaged over the $560$ cross-model cells of each target, in which neither source is the target and the two sources differ, and negation over $40$ such cells. The head edit is the deployable route and is the one those tables report; the embedding operator is the same solve applied outside the model, which no weight edit of this architecture realizes and which is given only as the reference the head edit is read against. Random-UTD here is a structureless update for one task scored on that task alone, and is therefore not the pair-level control of Table~\ref{tab:addition-main}.}
\label{tab:head}
\setlength{\tabcolsep}{4pt}
\begin{tabular}{llcccc}
\toprule
Target & Method & Task~A & Task~B & CIFAR-100 & ImageNet \\
\midrule
SigLIP-L  & Zero-shot                 & $64.36$ & $67.42$ & $72.00$ & $76.47$ \\
          & Random-UTD                & $65.35$ & --      & $70.76$ & $75.75$ \\
          & Head edit                 & $80.79$ & $74.49$ & $70.38$ & $74.99$ \\
          & Embedding operator        & $81.28$ & $75.67$ & $70.80$ & $75.25$ \\
          & Oracle                    & $84.53$ & $77.80$ & $71.44$ & $76.24$ \\
\midrule
SigLIP2-L & Zero-shot                 & $66.70$ & $66.62$ & $75.10$ & $71.52$ \\
          & Random-UTD                & $66.05$ & --      & $72.93$ & $70.18$ \\
          & Head edit                 & $83.25$ & $75.92$ & $72.66$ & $69.17$ \\
          & Embedding operator        & $83.77$ & $76.56$ & $73.11$ & $69.77$ \\
          & Oracle                    & $85.03$ & $77.94$ & $73.94$ & $71.35$ \\
\midrule
          &                           & Removed & Zero-shot & CIFAR-100 & ImageNet \\
\midrule
SigLIP-L  & Head edit, $\lambda = 0.5$ & $41.54$ & $65.89$ & $71.39$ & $75.82$ \\
          & Head edit, $\lambda = 1$   & $26.40$ & $65.89$ & $69.51$ & $73.59$ \\
SigLIP2-L & Head edit, $\lambda = 0.5$ & $41.33$ & $66.66$ & $73.46$ & $70.69$ \\
          & Head edit, $\lambda = 1$   & $26.55$ & $66.66$ & $70.59$ & $68.05$ \\
\bottomrule
\end{tabular}
\end{table}

Table~\ref{tab:head} resolves the two targets at the strengths the rest of the paper uses, since the main tables pool them with the other four. Addition at $\alpha = 0.5$ lifts the two composed tasks by $16.43$ and $7.07$ points on SigLIP-L and by $16.55$ and $9.30$ on SigLIP2-L, for at most $2.5$ points of either control, and a structureless update of the same norm, which is scored on one task at a time and so has no second column, leaves that task within one point of zero-shot, so what the edit carries is descriptor structure. Negation at $\lambda = 1$ removes $39.5$ and $40.1$ points of the targeted task for $2.5$ and $4.5$ of CIFAR-100. Constraining the edit to the head is not free, and the direction of the cost is consistent rather than large, since the head edit trails the same solve applied outside the model by $0.5$ to $1.2$ points on the composed tasks and gives up $0.3$ to $0.6$ more of each control. Both gaps are small next to the distance to Oracle, so what limits these targets is the cross-model residual of Equation~\eqref{eq:central-assumption} and not the restriction to weights the model already has. The adapter needs no such construction, since it edits the encoder below the head. Paired per cell over the $28$ cells of each target that it covers, it leads the head edit on the composed tasks by $1.42$ points on SigLIP-L and $1.88$ on SigLIP2-L and on ImageNet by $1.50$ and $2.11$, while on CIFAR-100 it is $0.19$ points ahead on SigLIP-L and $3.10$ behind on SigLIP2-L.

\section{Targets at Another Scale}
\label{app:scale}

Nothing in the construction ties a target to the scale its descriptors were measured on. A descriptor is $N \times M$ and carries no trace of the embedding dimension it came from, and Equation~\eqref{eq:lsq} reads the target only through $F_{\text{pre}}$ and $G$, which come from one forward pass over the same probe images and prompts every other model reads. A target at another scale therefore needs no fine-tuning of its own and no source at its scale, and the $48$ Large descriptors are reused verbatim, each solved on its own task's probes and prompts as before.

Table~\ref{tab:scale} reports four such targets, two at $512$ dimensions and two at $1024$, against sources that all embed in $768$ or $1024$. Every cell is cross-model and cross-scale at once. Addition at the fixed $\alpha = 0.5$ gains between $11.38$ and $16.83$ points on the composed tasks, against the $16.13$ the Large targets reach, and the largest gain goes to a target that knew little, namely $+16.83$ on the one whose composed-task zero-shot is $50.28$ percent against $+11.38$ on the one that starts at $70.31$. In absolute terms the ordering is the plain one, at $65.74$ and $67.11$ percent for the two Base targets against $74.46$ at Large and $80.35$ and $81.69$ for the two Huge ones, so the strength fixed at $768$ dimensions still acquires at $512$ and at $1024$. Negation behaves the same way, removing between $20.80$ and $30.19$ points at $\lambda = 0.5$ against the $22.13$ of the Large targets.

What moves with the target's embedding dimension is the price rather than the gain. At $\alpha = 0.5$ the two $1024$-dimensional targets give up $1.27$ and $1.10$ points of CIFAR-100 against the $1.70$ of the $768$-dimensional ones and the $4.61$ and $6.99$ of the two at $512$, and ImageNet orders the same way at $0.94$ and $2.00$ against $3.92$ and then $9.38$ and $13.80$. Negation repeats the ordering, spending $0.63$ and $0.41$ points of CIFAR-100 at $\lambda = 0.5$ for the Huge targets against $0.94$ at Large and $2.75$ and $3.74$ at Base. A narrower embedding leaves the operator fewer directions in which to meet the descriptor, so more of what it does falls on behavior no edit was asked to change, and a $512$-dimensional target is better served by a smaller strength than the one fixed here.

Two absences are structural rather than omissions. There is no oracle row and no same-model row, since both require a source fine-tune on the target itself and none exists at these scales, which is also why every cell is genuinely cross-model. What the table does not separate is scale from model, since the two Base targets are also the two weakest zero-shot models in it, so what it establishes is that a target at another scale needs nothing retrained rather than that dimension alone accounts for the spread, which Table~\ref{tab:add-per-target} shows to be substantial among targets of one scale.

\begin{table}[H]
\centering
\small
\caption{Targets at another scale, in percentage points of change from the target's own zero-shot. Addition is at $\alpha = 0.5$ over the $840$ cells each target admits, and negation at $\lambda = 0.5$ and $1$ over $48$. Every operator behind this table is solved on its own task's grid and stored, as everywhere else in the paper, so the Large reference row is the cross-model cells of Table~\ref{tab:addition-main} and Table~\ref{tab:negation-main} taken over the four Large targets that absorb the operator, which is the only regime available at these scales. Every cell here is cross-model and cross-scale at once, and no oracle or same-model row is possible, since both would need a source fine-tune on the target itself.}
\label{tab:scale}
\setlength{\tabcolsep}{3pt}
\scriptsize
\begin{tabular}{@{}lcccccccc@{}}
\toprule
& & \multicolumn{3}{c}{Addition at $\alpha = 0.5$} & \multicolumn{2}{c}{Negation at $\lambda = 0.5$} & \multicolumn{2}{c}{Negation at $\lambda = 1$} \\
\cmidrule(lr){3-5} \cmidrule(lr){6-7} \cmidrule(lr){8-9}
Target & $d$ & Composed $\uparrow$ & CIFAR-100 & ImageNet & Removed $\downarrow$ & CIFAR-100 & Removed $\downarrow$ & CIFAR-100 \\
\midrule
CLIP-B & $512$ & $+15.16$ & $-6.99$ & $-13.80$ & $-25.89$ & $-3.74$ & $-33.93$ & $-13.64$ \\
MetaCLIP-B & $512$ & $+16.83$ & $-4.61$ & $-9.38$ & $-22.30$ & $-2.75$ & $-30.21$ & $-9.54$ \\
OpenCLIP-H & $1024$ & $+15.34$ & $-1.27$ & $-0.94$ & $-20.80$ & $-0.63$ & $-33.36$ & $-1.64$ \\
MetaCLIP-H & $1024$ & $+11.38$ & $-1.10$ & $-2.00$ & $-30.19$ & $-0.41$ & $-43.31$ & $-2.45$ \\
\midrule
Large targets, reference & $768$ & $+16.13$ & $-1.70$ & $-3.92$ & $-22.13$ & $-0.94$ & $-32.93$ & $-5.07$ \\
\bottomrule
\end{tabular}
\end{table}

\section{Ablations and Negative Results}
\label{app:ablations}

We report the design choices we tested and rejected, because each rules out a plausible alternative to the construction in Section~\ref{sec:method-operator}. Each is measured on the grid the paper solves on, namely the probes and prompts of the one task a descriptor describes, so the accuracies below are comparable to those of the main text, and unless a paragraph states otherwise they pool the four targets that absorb the operator and the five cross-model sources of each. The whitening comparison is the one that cannot be measured that way, because it compares descriptors rather than fitting them and a comparison across tasks needs one frame for all of them.

\paragraph{Transporting the full shift field is worse than using the descriptor.}
A descriptor constrains the operator only inside the span of its task's landmarks, so a natural idea is to transport the whole shift field $\Delta F$ from source to target under a Procrustes map fitted between their probe embeddings, which constrains as many directions as the embedding has rather than as many as the task has prompts. Every model encodes the same probe images, so the two feature matrices are row-paired and this is the setting Procrustes is defined for, yet at $\alpha = 0.5$ it reaches $65.46$ on the transferred task against $76.01$ for the closed form on the same cells and holds less of CIFAR-100 as well, $74.75$ against $76.23$. Row-paired alignment between models is therefore not the missing information. What makes the result sharp is that Procrustes reproduces the target's own shift field better than the closed form does, at a cosine of $0.358$ against $0.199$ on the probes themselves, and is nonetheless ten points worse on the task, so reproducing the shift field is not the objective that predicts accuracy. Appendix~\ref{app:transport-error} reaches the same conclusion from the other direction.

\paragraph{Prompt-Gram whitening cannot help the fit and hurts the comparison.}
The rows of $G$ are not orthogonal, since two class names can be nearly synonymous, so an affinity vector holds coordinates in an oblique frame belonging to the model's own text encoder, and the obvious repair is to express descriptors in an orthonormal basis of $\operatorname{span}(G)$ by right-multiplying each of them by $(G G^\top)^{-1/2}$. It cannot help the transfer, because Equation~\eqref{eq:lsq} is invariant to an invertible change of the landmark basis up to its prompt ridge $\mu_b$, so fitting the whitened descriptor against the whitened landmarks returns the same operator to within that ridge, and over this appendix's $160$ cells at both strengths the two agree to two hundredths of a point of accuracy. Whitening can therefore only matter where descriptors are compared rather than fitted, and there it is harmful. Measured on the probe set spanning the eight tasks, which a cross-task comparison requires, it lowers same-task cross-model agreement from $0.47$ to $0.10$ while lowering the different-task control only from $0.23$ to $0.15$, so the gap that makes descriptors discriminative falls from $0.25$ to $-0.05$, and no truncation of the whitener beats the raw frame. The reason is that the obliquity of the prompt frame is itself shared across models. After centering, which removes the component every embedding of these models has in common, the six families' prompt Grams still agree at cosine $0.883$--$0.972$ against $0.038$--$0.047$ under a class-name permutation null, so the frame carries semantic structure that the models share, and whitening deletes the very reference that makes cross-model comparison work.

\paragraph{Aligning models through text does not help.}
The paper uses text only to measure a change. Using it to align two models instead is the natural alternative, and it is worse. We fit a map between two models' text spaces on a $12{,}000$-word task-agnostic vocabulary taken from the tokenizer's own vocabulary, so that the alignment is unrelated to any task's class names and has enough words to determine a full-rank map, and carry the shift field across with it. At $\alpha = 0.5$ it reaches $68.17$ on the transferred task against $76.01$ for the closed form, and reproduces the target's own shift field worst of the three at a cosine of $0.161$. Relaxing orthogonality fits held-out words considerably better, at a reconstruction error of $0.477$ against $0.576$ for the orthogonal map, so the alignment itself is not the difficulty; fitting a text space and transporting a change through it are different objectives.

\paragraph{Constraining the operator to preserve a subspace does not buy a better trade.}
To protect unrelated behavior one can restrict the edit to act trivially on the $q$ leading singular directions of the target's features on a generic corpus, collecting those directions as the columns of $Q \in \mathbb{R}^{d \times q}$ and constraining the operator to $A = (I - QQ^\top) A' (I - QQ^\top)$ for an unconstrained $A'$. The corpus is Food-101 and Oxford Pets rather than either control, so CIFAR-100 stays a held-out measure of whether protecting one generic corpus generalizes to another. Sweeping $q$ from $0$ to $512$, the unconstrained solve dominates every constrained one on both axes at once rather than only after matching, since at $\alpha = 0.5$ it reaches $76.01$ on the transferred task while holding CIFAR-100 at $76.23$, against $75.80$ and $75.98$ at $q = 64$ and $72.87$ and $72.60$ at $q = 512$. The reason is geometric, since the leading direction alone carries between $59$ and $68$ percent of the feature energy across the four targets, so what to change and what to keep are not separated by any subspace of the embedding.

\paragraph{Fitting the exact first-order objective generalizes worse than fitting the approximate one.}
Equation~\eqref{eq:lsq} matches $F_{\text{pre}} X G^\top$ to the descriptor, which treats the edited embedding as if it were used unnormalized. The model renormalizes, so writing $f_i \in \mathbb{R}^{d}$ for the target's unit-norm embedding of probe $i$, $g_j \in \mathbb{R}^{d}$ for prompt landmark $j$ and $U_{\mathrm{edit}}$ for the descriptor the edit actually induces, that descriptor is
\begin{equation}
    [U_{\mathrm{edit}}]_{ij} \;=\; \alpha\, f_i^\top A^\top g_j \;-\; \alpha\, \bigl( f_i^\top A^\top f_i \bigr) \bigl( f_i^\top g_j \bigr) \;+\; O(\alpha^2 \|A\|^2),
    \label{eq:gauge}
\end{equation}
where the second term is what renormalization removes along the radial direction and is not negligible, since its norm is a tenth of the first term's on these grids. We fitted Equation~\eqref{eq:gauge} directly with a matrix-free conjugate-gradient least-squares solver, the normal-equations matrix being out of reach at $589{,}824 \times 589{,}824$ entries for a $768$-dimensional target, and against an uncorrected control solved by the same solver so that solver effects and objective effects stay separate. The solve is iterative and in double precision, so this study covers one target, two sources and the six tasks whose grids are small enough to afford it, rather than the full population of the paragraphs above. That control behaves as it should, reaching $78.77$ on the transferred task and $73.48$ on CIFAR-100 at $\alpha = 1$ against the closed form's $78.68$ and $72.62$, so what follows is a property of the objective and not of the solver.

Modeling the term does what it should to the residual, lowering it from $0.2392$ to $0.2001$ at the ridge the paper uses, and it is paid for on the control. At $\alpha = 1$ the corrected solve reaches $80.05$ on the transferred task for $67.08$ of CIFAR-100 against the closed form's $78.68$ and $72.62$, so it buys one and a half points of the task with five and a half of the control. Read at a comparable control it is behind rather than ahead, since the closed form reaches $78.68$ while holding $72.62$ whereas the corrected solve reaches only $74.85$ while holding $72.80$. Weakening the ridge makes the pattern unmistakable, because at $10^{-6}$ the corrected solve drives the residual to $0.0725$ and CIFAR-100 to $48.75$ where the uncorrected control at the same ridge holds $69.93$. The radial channel is an extra degree of freedom that a task's probe grid does not pin down, so fitting it moves every prompt that did not enter the grid, and no ridge in $\{10^{-6}, 10^{-4}, 10^{-2}\}$ recovers the loss. Constraining the operator to be antisymmetric, which makes the second term of Equation~\eqref{eq:gauge} vanish identically, is dominated everywhere instead, reaching $78.27$ and $71.73$ at $\alpha = 1$ against the closed form's $78.68$ and $72.62$, and it is not nearly free either, since $f^\top A f = 0$ also removes tangential capacity and the unconstrained solutions are symmetric in $0.707$ to $0.728$ of their Frobenius norm, which is what a matrix with no preference either way would show. The approximate objective is thus doing the work of a regularizer, and we keep it.

\paragraph{Probe count matters, and the quota is not yet in the saturated region.}
Probes carry no labels into the fit, so how many of them a task can supply is bounded by the data the task offers rather than by supervision. Every fit in this sweep holds a fifth of the task's probes out so that the reconstruction criterion has something to score, which is why its full-pool point sits a little below the accuracy the same route reaches above. Subsampling each task's own pool to an eighth of what remains, from an average of $2{,}274$ images to $284$, costs $5.7$ points on the transferred task at $\alpha = 0.5$, from $75.69$ to $69.99$, while leaving CIFAR-100 within $0.2$ points of where it was. The returns diminish but have not stopped, since successive doublings of the pool are worth $2.5$, $1.9$ and $1.3$ points, so the quota of thirty images per class sits on the slope of that curve rather than at its top and a larger quota would still buy a little. Held-out reconstruction barely moves over the same range. Measured here against the shift field the target's own fine-tune induces, relative to its norm, rather than against the source's descriptor as in Table~\ref{tab:rank}, it goes from $1.196$ to $1.164$, so reconstruction is nearly blind to probe count and cannot be used to choose it.

\section{Audio--Text Instantiation}
\label{app:audio}

We use four CLAP checkpoints, every one of which is both a source and a target, since each ends in a linear layer the operator can fold into, while their audio towers differ in width ($768$ or $1024$) and depth ($12$ or $18$ blocks), so that weight-space task vectors are undefined between the two classes. The operator folds into the projection head's last linear layer with its bias, and the identity that makes scoring an edit need no forward pass was re-verified in this modality, again bit-identical. ESC-50 and GTZAN are composed; UrbanSound8K never enters a probe set and is used only to measure retention. Sources are rank-$4$ fine-tunes of each checkpoint, exactly as in the vision setting.

\begin{table}[H]
\centering
\small
\caption{Audio--text addition of ESC-50 and GTZAN in percent, pooled over the $24$ cells in which the two sources and the target are distinct, with the closed form at $\alpha = 0.5$, the strength the vision experiments fix, and the adapter at full strength. The reference rows do not depend on a source assignment and are averaged over the four checkpoints. The adapter is scored in runs of its own, and since one checkpoint is not bit-reproducible across runs, its row is read against the zero-shot of those runs.}
\label{tab:audio-add}
\setlength{\tabcolsep}{4pt}
\begin{tabular}{lccc}
\toprule
Method & ESC-50 $\uparrow$ & GTZAN $\uparrow$ & UrbanSound8K $\uparrow$ \\
\midrule
Zero-shot      & $89.73$ & $52.90$ & $76.05$ \\
\midrule
Random-UTD     & $78.35$ & $39.85$ & $70.45$ \\
\textbf{Ours, closed form}  & $91.39$ & $55.23$ & $76.54$ \\
Oracle         & $92.45$ & $64.70$ & $77.60$ \\
\midrule
Zero-shot, adapter runs & $90.35$ & $51.70$ & $75.85$ \\
\textbf{Ours, adapter}  & $92.84$ & $52.53$ & $76.69$ \\
\bottomrule
\end{tabular}
\end{table}

\begin{table}[H]
\centering
\small
\caption{Audio addition, per cell, at $\alpha = 0.5$. Entries are ESC-50 / GTZAN / UrbanSound8K accuracy in percent. Every cell composes two source models onto a third. Read against each target's own zero-shot, listed below the table, ESC-50 rises in every cell and GTZAN in five of the six, while the never-composed control moves by at most $0.8$ points.}
\label{tab:audio-head}
\setlength{\tabcolsep}{4pt}
\begin{tabular}{lllc}
\toprule
Source (ESC-50) & Source (GTZAN) & Target & \textbf{Ours} \\
\midrule
unfused  & large    & large-ms & $93.2$ / $71.0$ / $81.4$ \\
unfused  & large-ms & large    & $95.2$ / $47.6$ / $82.2$ \\
large    & large-ms & unfused  & $92.0$ / $57.0$ / $70.8$ \\
large    & unfused  & large-ms & $92.9$ / $78.2$ / $82.2$ \\
large-ms & unfused  & large    & $94.8$ / $46.2$ / $83.6$ \\
large-ms & large    & unfused  & $91.9$ / $53.0$ / $69.6$ \\
\bottomrule
\end{tabular}
\end{table}

Zero-shot for the four checkpoints in the addition runs is $84.6$ / $42.2$ / $69.4$ (fused), $92.7$ / $42.6$ / $82.8$ (large), $91.4$ / $75.6$ / $81.6$ (large-ms) and $90.2$ / $51.2$ / $70.4$ (unfused), in the same order of tasks. The negation runs of Table~\ref{tab:audio-neg} are separate runs, and since fused is not bit-reproducible across runs they read it at $87.1$ / $37.6$ / $69.6$ and the other three as above, which gives that table's zero-shot row.

Table~\ref{tab:audio-add} pools the addition cells together with the two references, and Table~\ref{tab:audio-head} resolves six of them individually, so that the spread behind the pooled figures is visible. Negation on the same four checkpoints is reported in the main text as Table~\ref{tab:audio-neg}.

Selectivity under negation is close to what vision shows. Pooled over the $24$ cross-model cells at $\lambda = 1$, the operator removes $14.37$ points of the targeted task for $2.03$ points of UrbanSound8K, and the same-model reference removes $15.08$ for $1.85$, while halving the strength to $\lambda = 0.5$ removes $7.03$ for $0.78$ and $8.66$ for $0.60$ respectively. The vision cells of Table~\ref{tab:negation-main} spend $4.55$ points of CIFAR-100 at $\lambda = 1$ for $35.22$ points of the removed task, and the four targets that Table~\ref{tab:neg-lambda} sweeps finely spend $5.07$ for $32.93$. Per point of control that is roughly $7.1$ points of forgetting in audio against $7.7$ and $6.5$ in vision, so the two modalities trade at rates of the same order. The solver, the objective and the per-task grids are identical, so what differs is how much the composed and control tasks overlap in representation.

The adapter is fitted to the same descriptors on the same per-task grids with the vision recipe, except that the audio towers run in single precision with batch $32$. Against the zero-shot of its own runs it lifts ESC-50 by $2.49$ points and GTZAN by $0.83$ in addition while UrbanSound8K rises by $0.84$, and in negation it removes $11.12$ points of the targeted task for $0.97$ of UrbanSound8K on the $24$ cross-model cells and $13.34$ for $1.75$ on the $8$ same-model ones, leaving the retained task within $0.3$ points. It therefore removes less than the operator at the same $\lambda$ and forgets about $11.5$ points per point of control against the operator's $7.1$, the same trade between depth and selectivity that separates the two realizations in vision.

\newpage
\section*{NeurIPS Paper Checklist}
\begin{enumerate}

\item {\bf Claims}
    \item[] Question: Do the main claims made in the abstract and introduction accurately reflect the paper's contributions and scope?
    \item[] Answer: \answerYes{}{} 
    \item[] Justification: The claims of the abstract and introduction are developed in Sections~\ref{sec:method} and~\ref{sec:method-arithmetic} and measured in Section~\ref{sec:experiments}, and their scope, including where transfer fails, is stated in Section~\ref{sec:limitations}.
    \item[] Guidelines:
    \begin{itemize}
        \item The answer \answerNA{} means that the abstract and introduction do not include the claims made in the paper.
        \item The abstract and/or introduction should clearly state the claims made, including the contributions made in the paper and important assumptions and limitations. A \answerNo{} or \answerNA{} answer to this question will not be perceived well by the reviewers. 
        \item The claims made should match theoretical and experimental results, and reflect how much the results can be expected to generalize to other settings. 
        \item It is fine to include aspirational goals as motivation as long as it is clear that these goals are not attained by the paper. 
    \end{itemize}

\item {\bf Limitations}
    \item[] Question: Does the paper discuss the limitations of the work performed by the authors?
    \item[] Answer: \answerYes{}{} 
    \item[] Justification: The paper discusses limitations in Section~\ref{sec:limitations}.
    \item[] Guidelines:
    \begin{itemize}
        \item The answer \answerNA{} means that the paper has no limitation while the answer \answerNo{} means that the paper has limitations, but those are not discussed in the paper. 
        \item The authors are encouraged to create a separate ``Limitations'' section in their paper.
        \item The paper should point out any strong assumptions and how robust the results are to violations of these assumptions (e.g., independence assumptions, noiseless settings, model well-specification, asymptotic approximations only holding locally). The authors should reflect on how these assumptions might be violated in practice and what the implications would be.
        \item The authors should reflect on the scope of the claims made, e.g., if the approach was only tested on a few datasets or with a few runs. In general, empirical results often depend on implicit assumptions, which should be articulated.
        \item The authors should reflect on the factors that influence the performance of the approach. For example, a facial recognition algorithm may perform poorly when image resolution is low or images are taken in low lighting. Or a speech-to-text system might not be used reliably to provide closed captions for online lectures because it fails to handle technical jargon.
        \item The authors should discuss the computational efficiency of the proposed algorithms and how they scale with dataset size.
        \item If applicable, the authors should discuss possible limitations of their approach to address problems of privacy and fairness.
        \item While the authors might fear that complete honesty about limitations might be used by reviewers as grounds for rejection, a worse outcome might be that reviewers discover limitations that aren't acknowledged in the paper. The authors should use their best judgment and recognize that individual actions in favor of transparency play an important role in developing norms that preserve the integrity of the community. Reviewers will be specifically instructed to not penalize honesty concerning limitations.
    \end{itemize}

\item {\bf Theory assumptions and proofs}
    \item[] Question: For each theoretical result, does the paper provide the full set of assumptions and a complete (and correct) proof?
    \item[] Answer: \answerYes{}{} 
    \item[] Justification: Proposition~\ref{prop:linearity} is stated with its assumption in Section~\ref{sec:method-linearity} and proved in Appendix~\ref{app:linearity}; Proposition~\ref{prop:interference} and Corollary~\ref{cor:margin} are stated and proved in Appendix~\ref{app:interference}; Hypothesis~\ref{hyp:consistency} is a hypothesis, tested rather than proved, and its first-order derivation is in Appendix~\ref{app:theory}.
    \item[] Guidelines:
    \begin{itemize}
        \item The answer \answerNA{} means that the paper does not include theoretical results. 
        \item All the theorems, formulas, and proofs in the paper should be numbered and cross-referenced.
        \item All assumptions should be clearly stated or referenced in the statement of any theorems.
        \item The proofs can either appear in the main paper or the supplemental material, but if they appear in the supplemental material, the authors are encouraged to provide a short proof sketch to provide intuition. 
        \item Inversely, any informal proof provided in the core of the paper should be complemented by formal proofs provided in appendix or supplemental material.
        \item Theorems and Lemmas that the proof relies upon should be properly referenced. 
    \end{itemize}

    \item {\bf Experimental result reproducibility}
    \item[] Question: Does the paper fully disclose all the information needed to reproduce the main experimental results of the paper to the extent that it affects the main claims and/or conclusions of the paper (regardless of whether the code and data are provided or not)?
    \item[] Answer: \answerYes{}{} 
    \item[] Justification: Checkpoints, probe construction, solver settings, the adapter recipe and the evaluation protocol are given in Section~\ref{sec:exp-setup} and Appendices~\ref{app:setup} and~\ref{app:adapter}, and the closed-form solver and both procedures are stated exactly in Appendices~\ref{app:solver} and~\ref{app:algorithms}.

    \item[] Guidelines:
    \begin{itemize}
        \item The answer \answerNA{} means that the paper does not include experiments.
        \item If the paper includes experiments, a \answerNo{} answer to this question will not be perceived well by the reviewers: Making the paper reproducible is important, regardless of whether the code and data are provided or not.
        \item If the contribution is a dataset and\slash or model, the authors should describe the steps taken to make their results reproducible or verifiable. 
        \item Depending on the contribution, reproducibility can be accomplished in various ways. For example, if the contribution is a novel architecture, describing the architecture fully might suffice, or if the contribution is a specific model and empirical evaluation, it may be necessary to either make it possible for others to replicate the model with the same dataset, or provide access to the model. In general. releasing code and data is often one good way to accomplish this, but reproducibility can also be provided via detailed instructions for how to replicate the results, access to a hosted model (e.g., in the case of a large language model), releasing of a model checkpoint, or other means that are appropriate to the research performed.
        \item While NeurIPS does not require releasing code, the conference does require all submissions to provide some reasonable avenue for reproducibility, which may depend on the nature of the contribution. For example
        \begin{enumerate}
            \item If the contribution is primarily a new algorithm, the paper should make it clear how to reproduce that algorithm.
            \item If the contribution is primarily a new model architecture, the paper should describe the architecture clearly and fully.
            \item If the contribution is a new model (e.g., a large language model), then there should either be a way to access this model for reproducing the results or a way to reproduce the model (e.g., with an open-source dataset or instructions for how to construct the dataset).
            \item We recognize that reproducibility may be tricky in some cases, in which case authors are welcome to describe the particular way they provide for reproducibility. In the case of closed-source models, it may be that access to the model is limited in some way (e.g., to registered users), but it should be possible for other researchers to have some path to reproducing or verifying the results.
        \end{enumerate}
    \end{itemize}

\item {\bf Open access to data and code}
    \item[] Question: Does the paper provide open access to the data and code, with sufficient instructions to faithfully reproduce the main experimental results, as described in supplemental material?
    \item[] Answer: \answerYes{}{} 
    \item[] Justification: The experiment code is publicly released at \url{https://github.com/jhpark-ai/Platonic-Task-Arithmetic}.
    \item[] Guidelines:
    \begin{itemize}
        \item The answer \answerNA{} means that paper does not include experiments requiring code.
        \item Please see the NeurIPS code and data submission guidelines (\url{https://neurips.cc/public/guides/CodeSubmissionPolicy}) for more details.
        \item While we encourage the release of code and data, we understand that this might not be possible, so \answerNo{} is an acceptable answer. Papers cannot be rejected simply for not including code, unless this is central to the contribution (e.g., for a new open-source benchmark).
        \item The instructions should contain the exact command and environment needed to run to reproduce the results. See the NeurIPS code and data submission guidelines (\url{https://neurips.cc/public/guides/CodeSubmissionPolicy}) for more details.
        \item The authors should provide instructions on data access and preparation, including how to access the raw data, preprocessed data, intermediate data, and generated data, etc.
        \item The authors should provide scripts to reproduce all experimental results for the new proposed method and baselines. If only a subset of experiments are reproducible, they should state which ones are omitted from the script and why.
        \item At submission time, to preserve anonymity, the authors should release anonymized versions (if applicable).
        \item Providing as much information as possible in supplemental material (appended to the paper) is recommended, but including URLs to data and code is permitted.
    \end{itemize}

\item {\bf Experimental setting/details}
    \item[] Question: Does the paper specify all the training and test details (e.g., data splits, hyperparameters, how they were chosen, type of optimizer) necessary to understand the results?
    \item[] Answer: \answerYes{}{} 
    \item[] Justification: Experimental settings are given in Section~\ref{sec:exp-setup} and Appendices~\ref{app:setup} and~\ref{app:adapter}; every strength is fixed in advance rather than selected per cell.
    \item[] Guidelines:
    \begin{itemize}
        \item The answer \answerNA{} means that the paper does not include experiments.
        \item The experimental setting should be presented in the core of the paper to a level of detail that is necessary to appreciate the results and make sense of them.
        \item The full details can be provided either with the code, in appendix, or as supplemental material.
    \end{itemize}

\item {\bf Experiment statistical significance}
    \item[] Question: Does the paper report error bars suitably and correctly defined or other appropriate information about the statistical significance of the experiments?
    \item[] Answer: \answerYes{}{} 
    \item[] Justification: We report the spread over cells for the addition results, measure the sensitivity to the source fine-tuning seed in Appendix~\ref{app:seeds} and the adapter's run-to-run variation in Appendix~\ref{app:adapter}, and state which cells every row averages over.
    \item[] Guidelines:
    \begin{itemize}
        \item The answer \answerNA{} means that the paper does not include experiments.
        \item The authors should answer \answerYes{} if the results are accompanied by error bars, confidence intervals, or statistical significance tests, at least for the experiments that support the main claims of the paper.
        \item The factors of variability that the error bars are capturing should be clearly stated (for example, train/test split, initialization, random drawing of some parameter, or overall run with given experimental conditions).
        \item The method for calculating the error bars should be explained (closed form formula, call to a library function, bootstrap, etc.)
        \item The assumptions made should be given (e.g., Normally distributed errors).
        \item It should be clear whether the error bar is the standard deviation or the standard error of the mean.
        \item It is OK to report 1-sigma error bars, but one should state it. The authors should preferably report a 2-sigma error bar than state that they have a 96\% CI, if the hypothesis of Normality of errors is not verified.
        \item For asymmetric distributions, the authors should be careful not to show in tables or figures symmetric error bars that would yield results that are out of range (e.g., negative error rates).
        \item If error bars are reported in tables or plots, the authors should explain in the text how they were calculated and reference the corresponding figures or tables in the text.
    \end{itemize}

\item {\bf Experiments compute resources}
    \item[] Question: For each experiment, does the paper provide sufficient information on the computer resources (type of compute workers, memory, time of execution) needed to reproduce the experiments?
    \item[] Answer: \answerYes{}{} 
    \item[] Justification: All experiments run on four NVIDIA RTX A6000 GPUs (Appendix~\ref{app:datasets}), and the cost of each realization is reported in Section~\ref{sec:method-adapter} and Appendix~\ref{app:adapter}.
    \item[] Guidelines:
    \begin{itemize}
        \item The answer \answerNA{} means that the paper does not include experiments.
        \item The paper should indicate the type of compute workers CPU or GPU, internal cluster, or cloud provider, including relevant memory and storage.
        \item The paper should provide the amount of compute required for each of the individual experimental runs as well as estimate the total compute. 
        \item The paper should disclose whether the full research project required more compute than the experiments reported in the paper (e.g., preliminary or failed experiments that didn't make it into the paper). 
    \end{itemize}
    
\item {\bf Code of ethics}
    \item[] Question: Does the research conducted in the paper conform, in every respect, with the NeurIPS Code of Ethics \url{https://neurips.cc/public/EthicsGuidelines}?
    \item[] Answer: \answerYes{}{} 
    \item[] Justification: The research conforms to the NeurIPS Code of Ethics.
    \item[] Guidelines:
    \begin{itemize}
        \item The answer \answerNA{} means that the authors have not reviewed the NeurIPS Code of Ethics.
        \item If the authors answer \answerNo, they should explain the special circumstances that require a deviation from the Code of Ethics.
        \item The authors should make sure to preserve anonymity (e.g., if there is a special consideration due to laws or regulations in their jurisdiction).
    \end{itemize}

\item {\bf Broader impacts}
    \item[] Question: Does the paper discuss both potential positive societal impacts and negative societal impacts of the work performed?
    \item[] Answer: \answerYes{}{} 
    \item[] Justification: Broader impacts, including the dual-use risk of facilitating undesirable capability transfer and how the negation operation can mitigate it, are discussed in Section~\ref{sec:limitations}.
    \item[] Guidelines:
    \begin{itemize}
        \item The answer \answerNA{} means that there is no societal impact of the work performed.
        \item If the authors answer \answerNA{} or \answerNo, they should explain why their work has no societal impact or why the paper does not address societal impact.
        \item Examples of negative societal impacts include potential malicious or unintended uses (e.g., disinformation, generating fake profiles, surveillance), fairness considerations (e.g., deployment of technologies that could make decisions that unfairly impact specific groups), privacy considerations, and security considerations.
        \item The conference expects that many papers will be foundational research and not tied to particular applications, let alone deployments. However, if there is a direct path to any negative applications, the authors should point it out. For example, it is legitimate to point out that an improvement in the quality of generative models could be used to generate Deepfakes for disinformation. On the other hand, it is not needed to point out that a generic algorithm for optimizing neural networks could enable people to train models that generate Deepfakes faster.
        \item The authors should consider possible harms that could arise when the technology is being used as intended and functioning correctly, harms that could arise when the technology is being used as intended but gives incorrect results, and harms following from (intentional or unintentional) misuse of the technology.
        \item If there are negative societal impacts, the authors could also discuss possible mitigation strategies (e.g., gated release of models, providing defenses in addition to attacks, mechanisms for monitoring misuse, mechanisms to monitor how a system learns from feedback over time, improving the efficiency and accessibility of ML).
    \end{itemize}
    
\item {\bf Safeguards}
    \item[] Question: Does the paper describe safeguards that have been put in place for responsible release of data or models that have a high risk for misuse (e.g., pre-trained language models, image generators, or scraped datasets)?
    \item[] Answer: \answerNA{}{} 
    \item[] Justification: The paper does not release any high-risk data or models.
    \item[] Guidelines:
    \begin{itemize}
        \item The answer \answerNA{} means that the paper poses no such risks.
        \item Released models that have a high risk for misuse or dual-use should be released with necessary safeguards to allow for controlled use of the model, for example by requiring that users adhere to usage guidelines or restrictions to access the model or implementing safety filters. 
        \item Datasets that have been scraped from the Internet could pose safety risks. The authors should describe how they avoided releasing unsafe images.
        \item We recognize that providing effective safeguards is challenging, and many papers do not require this, but we encourage authors to take this into account and make a best faith effort.
    \end{itemize}

\item {\bf Licenses for existing assets}
    \item[] Question: Are the creators or original owners of assets (e.g., code, data, models), used in the paper, properly credited and are the license and terms of use explicitly mentioned and properly respected?
    \item[] Answer: \answerYes{}{} 
    \item[] Justification: All datasets and pre-trained models are cited in Section~\ref{sec:experiments} and Appendix~\ref{app:setup}, and used in accordance with their public licenses.
    \item[] Guidelines:
    \begin{itemize}
        \item The answer \answerNA{} means that the paper does not use existing assets.
        \item The authors should cite the original paper that produced the code package or dataset.
        \item The authors should state which version of the asset is used and, if possible, include a URL.
        \item The name of the license (e.g., CC-BY 4.0) should be included for each asset.
        \item For scraped data from a particular source (e.g., website), the copyright and terms of service of that source should be provided.
        \item If assets are released, the license, copyright information, and terms of use in the package should be provided. For popular datasets, \url{paperswithcode.com/datasets} has curated licenses for some datasets. Their licensing guide can help determine the license of a dataset.
        \item For existing datasets that are re-packaged, both the original license and the license of the derived asset (if it has changed) should be provided.
        \item If this information is not available online, the authors are encouraged to reach out to the asset's creators.
    \end{itemize}

\item {\bf New assets}
    \item[] Question: Are new assets introduced in the paper well documented and is the documentation provided alongside the assets?
    \item[] Answer: \answerYes{}{} 
    \item[] Justification: The released code, the source fine-tunes it ships and their documentation are the only new assets.
    \item[] Guidelines:
    \begin{itemize}
        \item The answer \answerNA{} means that the paper does not release new assets.
        \item Researchers should communicate the details of the dataset\slash code\slash model as part of their submissions via structured templates. This includes details about training, license, limitations, etc. 
        \item The paper should discuss whether and how consent was obtained from people whose asset is used.
        \item At submission time, remember to anonymize your assets (if applicable). You can either create an anonymized URL or include an anonymized zip file.
    \end{itemize}

\item {\bf Crowdsourcing and research with human subjects}
    \item[] Question: For crowdsourcing experiments and research with human subjects, does the paper include the full text of instructions given to participants and screenshots, if applicable, as well as details about compensation (if any)? 
    \item[] Answer: \answerNA{} 
    \item[] Justification: This work does not involve crowdsourcing or human subjects.
    \item[] Guidelines:
    \begin{itemize}
        \item The answer \answerNA{} means that the paper does not involve crowdsourcing nor research with human subjects.
        \item Including this information in the supplemental material is fine, but if the main contribution of the paper involves human subjects, then as much detail as possible should be included in the main paper. 
        \item According to the NeurIPS Code of Ethics, workers involved in data collection, curation, or other labor should be paid at least the minimum wage in the country of the data collector. 
    \end{itemize}

\item {\bf Institutional review board (IRB) approvals or equivalent for research with human subjects}
    \item[] Question: Does the paper describe potential risks incurred by study participants, whether such risks were disclosed to the subjects, and whether Institutional Review Board (IRB) approvals (or an equivalent approval/review based on the requirements of your country or institution) were obtained?
    \item[] Answer: \answerNA{} 
    \item[] Justification: This work does not involve human subjects.
    \item[] Guidelines:
    \begin{itemize}
        \item The answer \answerNA{} means that the paper does not involve crowdsourcing nor research with human subjects.
        \item Depending on the country in which research is conducted, IRB approval (or equivalent) may be required for any human subjects research. If you obtained IRB approval, you should clearly state this in the paper. 
        \item We recognize that the procedures for this may vary significantly between institutions and locations, and we expect authors to adhere to the NeurIPS Code of Ethics and the guidelines for their institution. 
        \item For initial submissions, do not include any information that would break anonymity (if applicable), such as the institution conducting the review.
    \end{itemize}

\item {\bf Declaration of LLM usage}
    \item[] Question: Does the paper describe the usage of LLMs if it is an important, original, or non-standard component of the core methods in this research? Note that if the LLM is used only for writing, editing, or formatting purposes and does \emph{not} impact the core methodology, scientific rigor, or originality of the research, declaration is not required.
    \item[] Answer: \answerNA{} 
    \item[] Justification: LLMs were not used as a core component of the methodology.
    \item[] Guidelines:
    \begin{itemize}
        \item The answer \answerNA{} means that the core method development in this research does not involve LLMs as any important, original, or non-standard components.
        \item Please refer to our LLM policy in the NeurIPS handbook for what should or should not be described.
    \end{itemize}

\end{enumerate}

\end{document}